\def\eqref#1{equation~\ref{#1}}
\def\1{\bm{1}}
\DeclareMathAlphabet{\mathsfit}{\encodingdefault}{\sfdefault}{m}{sl}
\SetMathAlphabet{\mathsfit}{bold}{\encodingdefault}{\sfdefault}{bx}{n}
\newcommand{\halfday}{$> 12$ hours }
\newcommand{\oneday}{$> 1$ day}
\newtheoremstyle{compactstyle}%
	{\topsep}   % space above
	{0pt}   % space below 
	{\itshape}  % body font 
	{}      % no indent 
	{\bfseries} % theorem head font 
	{.}     % punctuation after theorem head 
	{ }     % space after theorem head 
	{}      % theorem head spec
\theoremstyle{compactstyle}
\newtheorem{theorem}{Theorem}[section]
\newtheorem{proposition}[theorem]{Proposition}
\newtheorem{lemma}[theorem]{Lemma}
\newtheorem{remark}[theorem]{Remark}
\newtheorem{definition}[theorem]{Definition}
\newtheoremstyle{assumptionstyle} 
  {\topsep}                       
  {0pt}                       
  {\itshape}                      
  {0pt}                           
  {\bfseries}                     
  {.}                             
  { }                             
  {\thmname{#1}\thmnumber{ #2}\thmnote{ (#3)}}
\newtheorem{assumption}{Assumption}
\newcommand{\takeawaymessage}[2]{
    \begin{tcolorbox}[
        colback=gray!10,
        colframe=gray!50,
        fonttitle=\bfseries,
        title=Takeaway Message #1,
        left=5pt,
        right=5pt,
        top=5pt,
        bottom=5pt,
        arc=0pt]
        #2
    \end{tcolorbox}
}
\newcommand{\AGG}{\operatorname{AGG}}
\newcommand{\UPDATE}{\operatorname{UPDATE}}
\definecolor{gold}{RGB}{255,215,0}
\definecolor{silver}{RGB}{192,192,192}
\definecolor{babyblue}{RGB}{173,216,230}
\definecolor{babypink}{RGB}{255,182,193}
\definecolor{myblue}{rgb}{0.38, 0.31, 0.86}
\definecolor{myred}{rgb}{1.0, 0.71, 0.76}
\definecolor{mygreen}{rgb}{0.0, 0.5, 0.0}
\definecolor{mypurple}{rgb}{0.6, 0.2, 0.8}
\newcommand{\increased}[1]{\textcolor[rgb]{0,0.5,0}{#1}}
\newcommand{\decreased}[1]{\textcolor{red}{#1}}
\title{Demystifying Topological Message-Passing with Relational Structures: A Case Study on Oversquashing in Simplicial Message-Passing}
\author{
\footnotemark[1]\hspace{0.5em}\textbf{Diaaeldin Taha}${}^{1}$,
\footnotemark[1]\hspace{0.5em}\textbf{James Chapman}$^{2}$,
\footnotemark[2]\hspace{0.5em}\textbf{Marzieh Eidi}$^{1,3}$,
\footnotemark[2]\hspace{0.5em}\textbf{Karel Devriendt}$^{4}$,
\textbf{Guido Mont\'ufar}$^{1,2}$ \\[6pt]
{$^{1}$Max Planck Institute for Mathematics in the Sciences, Leipzig, Germany} \\
{$^{2}$UCLA, CA, USA} \\
{$^{3}$Center for Scalable Data Analytics and Artificial Intelligence, Leipzig, Germany} \\
{$^{4}$University of Oxford, Oxford, UK} \\[2pt]
\texttt{taha@mis.mpg.de, chapman20j@math.ucla.edu, meidi@mis.mpg.de,} \\
\texttt{karel.devriendt@maths.ox.ac.uk, montufar@math.ucla.edu} \\[2pt]
}
\begin{document}

\maketitle

\footnotetext[1]{Equal contribution.}
\footnotetext[2]{Equal contribution.}

%added for appendix table of contents
\doparttoc
\faketableofcontents

\begin{abstract}
Topological deep learning (TDL) has emerged as a powerful tool for modeling higher-order interactions in relational data. However, phenomena such as oversquashing in topological message-passing remain understudied and lack theoretical analysis. We propose a unifying axiomatic framework that bridges graph and topological message-passing by viewing simplicial and cellular complexes and their message-passing schemes through the lens of relational structures. This approach extends graph-theoretic results and algorithms to higher-order structures, facilitating the analysis and mitigation of oversquashing in topological message-passing networks. Through theoretical analysis and empirical studies on simplicial networks, we demonstrate the potential of this framework to advance TDL.
\end{abstract}

\section{Introduction}
\label{sec:introduction}

Recent years have witnessed a growing recognition that traditional machine learning, rooted in Euclidean spaces, often fails to capture the complex structure and relationships present in real-world data. This shortcoming has driven the development of geometric deep learning (GDL) \citep{bronstein2021geometric} and, more recently, topological deep learning (TDL) \citep{hajij2022topological}, for handling non-Euclidean and relational data. TDL, in particular, has emerged as a promising frontier for relational learning, that extends beyond graph neural networks (GNNs). TDL offers tools to capture and analyze higher-order interactions and topological features in complex data and higher-order structures, such as simplicial complexes, cell complexes, and sheaves \citep{hajij2022topological}. However, the TDL field is young, and the TDL community has yet many open theoretical and practical questions relating to, e.g., oversquashing and rewiring (research directions 2 and 9 of \citealp{pmlr-v235-papamarkou24a}).

Oversquashing is a challenging failure mode in GNNs, where information struggles to propagate across long paths due to the compression of an exponentially growing number of messages into fixed-size vectors \citep{alon2021on}. This phenomenon has been examined through various perspectives, including curvature \citep{topping2022understanding}, graph expansion \citep{9929363}, effective resistance \citep{pmlr-v202-black23a}, and spectral properties \citep{karhadkar2023fosr}. Despite the potential of higher-order message passing architectures—such as simplicial neural networks \citep{ebli2020simplicial}, message passing simplicial networks \citep{pmlr-v139-bodnar21a}, and CW networks \citep{bodnar2021weisfeiler}—there remains a lack of unified frameworks for analyzing and mitigating oversquashing in these settings.

In this paper, we take a first step toward studying oversquashing in TDL by showing that simplicial complexes and their message passing schemes can be interpreted as relational structures, making it possible to extend key GNN insights and tools to higher-order message passing architectures. The conceptual framework and theoretical results developed in this paper address pressing questions in the TDL community (e.g., research directions 2 and 9 of \citealp{pmlr-v235-papamarkou24a}).

\textbf{Contributions.} Our contributions are threefold:
\begin{itemize}[noitemsep,nolistsep]
\item \textbf{Axiomatic}: We provide a unifying view of simplicial complexes and their message passing schemes through the lens of relational structures. 
\item \textbf{Theoretical}: We introduce \textit{influence graphs} which enable novel extensions of prior graph analyses to higher-order structures, where existing methods for analysis do not apply. We extend graph-theoretic concepts and results on oversquashing to relational structures, analyzing network sensitivity (Lemma~\ref{lemma: jacobian bound higher order}), local geometry (Proposition~\ref{prop:local-geometry}), and the impact of network depth (Theorem~\ref{theorem: impact of depth}) and hidden dimensions (Section~\ref{subsec:impact-of-hidden-dimensions}). 
\item \textbf{Practical}: We propose a heuristic to extend oversquashing-mitigation techniques from graph-based models to relational structures.
\end{itemize}

\textbf{Related Work.} Our work sits at the intersection of graph neural networks, topological deep learning, relational learning, and the study of oversquashing and graph rewiring in graph neural networks. We review related work in Appendix~\ref{appendix:related-work}.

The rest of this paper is organized as follows: 
Section~\ref{sec:relational_message_passing} provides the axiomatic groundwork for relating simplicial and relational message passing. Section~\ref{sec:sensitivity_analysis} presents our theoretical analysis of oversquashing in this context.
Section~\ref{sec:rewiring} introduces a heuristic rewiring strategy to mitigate oversquashing in relational message passing.
Section~\ref{sec:experiments} presents our experimental results, followed by a discussion and conclusions in Section~\ref{sec:discussion-and-conclusions}.

\section{Simplicial Complexes are Relational Structures}
\label{sec:relational_message_passing}

In this section, we reinterpret simplicial complexes and message passing through the lens of \emph{relational structures}. We begin by recalling simplicial complexes and a representative simplicial message passing scheme, then reframe these notions within a relational framework. We illustrate the definitions in this section with a small worked example in Appendx~\ref{appendix:worked-example}. We note that the connection we establish here extends to other higher-order structures, such as cellular complexes \citep{hansen2019toward,bodnar2021weisfeiler,giusti2024cinplusplus}.

\subsection{Simplicial Complexes and Message Passing}
\label{subsec:simplicial-complexes-and-message-passing}

Simplicial complexes are mathematical structures that generalize graphs to higher dimensions, capturing relationships among vertices, edges, triangles, and higher-dimensional objects. 

\begin{definition}[Simplicial Complex, \citealp{nanda2021computational}]
Let $V$ be a non-empty set. 
A \emph{simplicial complex} $\mathcal{K}$ is a collection of non-empty subsets of $V$ that contains all the singleton subsets of $V$ and is closed under the operation of taking non-empty subsets. 
\end{definition}

A member $\sigma = \{v_0, v_1, \ldots, v_d\}\in\mathcal{K}$ with cardinality $|\sigma| = d + 1$ is called a \emph{$d$-simplex}. Geometrically, $0$-simplices are vertices, $1$-simplices are edges, $2$-simplices are triangles, and so on.

\begin{definition}[Boundary Incidence Relation]
\label{def:boundary_incidence}
We say that \emph{$\tau$ covers $\sigma$}, written $\sigma \prec \tau$, iff $\sigma \subsetneq \tau$ and there is no $\delta \in \mathcal{K}$ such that $\sigma \subsetneq \delta \subsetneq \tau$. 
\end{definition}

The incidence relations from Definition~\ref{def:boundary_incidence} can be used to construct four types of (local) adjacencies.

\begin{definition}
\label{def:adjacency_relations}
Consider a simplex $\sigma \in \mathcal{K}$. Four types of adjacent simplices can be defined:
\begin{enumerate}[noitemsep,nolistsep]
    \item \emph{Boundary adjacency}: $\mathcal{B}(\sigma) = \{\tau \colon \tau \prec \sigma\}$;  
    \item \emph{Co-boundary adjacency}: $\mathcal{C}(\sigma) = \{\tau \colon \sigma \prec \tau\}$; 
    \item \emph{Lower adjacency}: $\mathcal{N}_{\downarrow}(\sigma) = \{\tau \colon \exists \delta \text{ such that } \delta \prec \tau \text{ and } \delta \prec \sigma\}$;  
    \item \emph{Upper adjacency}: $\mathcal{N}_{\uparrow}(\sigma) = \{\tau \colon \exists \delta \text{ such that } \tau \prec \delta \text{ and } \sigma \prec \delta\}$. 
\end{enumerate}
\end{definition} 

In Figure~\ref{fig:simplicial-relational}, we illustrate an example of a simplicial complex and its adjacency relations.

\definecolor{myblue}{rgb}{0.38, 0.31, 0.86}
\definecolor{myred}{rgb}{1.0, 0.71, 0.76}

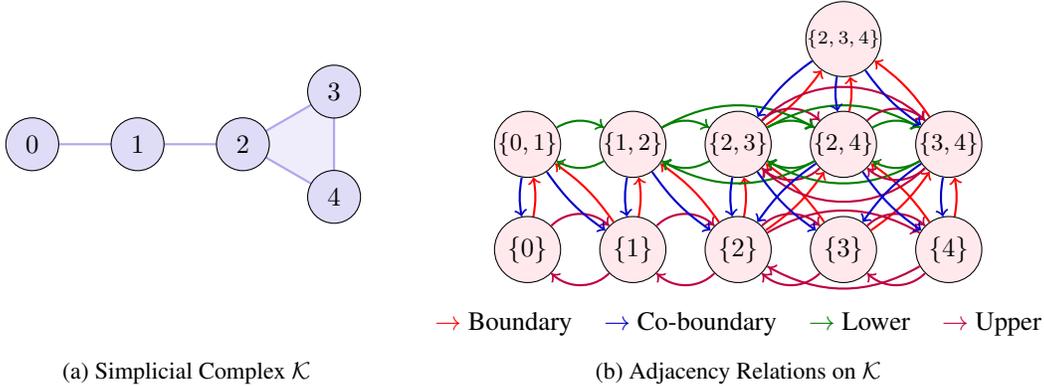
\begin{figure}[htbp]
    \centering
    \begin{subfigure}[b]{0.35\textwidth}
        \centering
        \begin{tikzpicture}[scale=0.7]
            \tikzstyle{vertex}=[circle,fill=myblue!20,draw,minimum size=20pt,inner sep=0pt]
            \tikzstyle{edge}=[thick,-,myblue!50]
            \tikzstyle{face}=[fill=myblue!20,opacity=0.5]
            
            \filldraw[face] (4,0) -- (5.732, 1) -- (5.732, -1) -- cycle;
            
            \draw[edge] (0,0) -- (2,0) -- (4,0);
            \draw[edge] (4,0) -- (5.732, 1) -- (5.732, -1) -- (4,0);

            \node[vertex] at (0,0) {$0$};
            \node[vertex] at (2,0) {$1$};
            \node[vertex] at (4,0) {$2$};
            \node[vertex] at (5.732, 1) {$3$};
            \node[vertex] at (5.732, -1) {$4$};
        \end{tikzpicture}
        \vspace{1.6cm}
        \caption{Simplicial Complex $\mathcal{K}$}
        \label{fig:simplicial-complex}
    \end{subfigure}
    \hfill
    \begin{subfigure}[b]{0.6\textwidth}
        \centering
        \begin{tikzpicture}[scale=0.7]
            \tikzstyle{rgnode}=[circle,fill=myred!30,draw, minimum size=25pt,inner sep=0pt]
            \tikzstyle{boundary}=[->,thick,blue!80!black,bend right=10]
            \tikzstyle{coboundary}=[->,thick,red,bend right=10]
            \tikzstyle{lower}=[thick,green!50!black,decoration={markings,mark=at position 0.99 with {\arrow{>}}},postaction={decorate}]
            \tikzstyle{upper}=[thick,purple,decoration={markings,mark=at position 0.99 with {\arrow{>}}},postaction={decorate}]
            
            % nodes
            \node[rgnode] (234) at (6,2) {\scriptsize$\{2,3,4\}$};
            \node[rgnode] (01) at (0,0) {\small$\{0,1\}$};
            \node[rgnode] (12) at (2,0) {\small$\{1,2\}$};
            \node[rgnode] (23) at (4,0) {\small$\{2,3\}$};
            \node[rgnode] (24) at (6,0) {\small$\{2,4\}$};
            \node[rgnode] (34) at (8,0) {\small$\{3,4\}$};
            \node[rgnode] (0) at (0,-2) {$\{0\}$};
            \node[rgnode] (1) at (2,-2) {$\{1\}$};
            \node[rgnode] (2) at (4,-2) {$\{2\}$};
            \node[rgnode] (3) at (6,-2) {$\{3\}$};
            \node[rgnode] (4) at (8,-2) {$\{4\}$};
            
            % edges
            \draw[coboundary] (23) to (234);
            \draw[coboundary] (24) to (234);
            \draw[coboundary] (34) to (234);
            \draw[lower] (01) to[out=30,in=150] (12);
            \draw[coboundary] (0) to (01);
            \draw[coboundary] (1) to (01);
            \draw[lower] (12) to[out=-150,in=-30] (01);
            \draw[lower] (12) to[out=30,in=150] (23);
            \draw[lower] (12) to[out=30,in=150] (24);
            \draw[coboundary] (1) to (12);
            \draw[coboundary] (2) to (12);
            \draw[boundary] (234) to (23);
            \draw[lower] (23) to[out=-150,in=-30] (12);
            \draw[lower] (23) to[out=30,in=150] (24);
            \draw[upper] (23) to[out=45,in=135] (24);
            \draw[lower] (23) to[out=30,in=150] (34);
            \draw[upper] (23) to[out=45,in=135] (34);
            \draw[coboundary] (2) to (23);
            \draw[coboundary] (3) to (23);
            \draw[boundary] (234) to (24);
            \draw[lower] (24) to[out=-150,in=-30] (12);
            \draw[lower] (24) to[out=-150,in=-30] (23);
            \draw[upper] (24) to[out=-135,in=-45] (23);
            \draw[lower] (24) to[out=30,in=150] (34);
            \draw[upper] (24) to[out=45,in=135] (34);
            \draw[coboundary] (2) to (24);
            \draw[coboundary] (4) to (24);
            \draw[boundary] (234) to (34);
            \draw[lower] (34) to[out=-150,in=-30] (23);
            \draw[upper] (34) to[out=-135,in=-45] (23);
            \draw[lower] (34) to[out=-150,in=-30] (24);
            \draw[upper] (34) to[out=-135,in=-45] (24);
            \draw[coboundary] (3) to (34);
            \draw[coboundary] (4) to (34);
            \draw[boundary] (01) to (0);
            \draw[upper] (0) to[out=45,in=135] (1);
            \draw[boundary] (01) to (1);
            \draw[boundary] (12) to (1);
            \draw[upper] (1) to[out=-135,in=-45] (0);
            \draw[upper] (1) to[out=45,in=135] (2);
            \draw[boundary] (12) to (2);
            \draw[boundary] (23) to (2);
            \draw[boundary] (24) to (2);
            \draw[upper] (2) to[out=-135,in=-45] (1);
            \draw[upper] (2) to[out=45,in=135] (3);
            \draw[upper] (2) to[out=30,in=150] (4);
            \draw[boundary] (23) to (3);
            \draw[boundary] (34) to (3);
            \draw[upper] (3) to[out=-135,in=-45] (2);
            \draw[upper] (3) to[out=45,in=135] (4);
            \draw[boundary] (24) to (4);
            \draw[boundary] (34) to (4);
            \draw[upper] (4) to[out=-150,in=-30] (2);
            \draw[upper] (4) to[out=-135,in=-45] (3);

            % legend
            \node[anchor=north] at (4,-3.0) {\textcolor{red}{$\rightarrow$} Boundary \hspace{0.25cm} \textcolor{blue!80!black}{$\rightarrow$} Co-boundary   \hspace{0.25cm} \textcolor{green!50!black}{$\rightarrow$} Lower \hspace{0.25cm} \textcolor{purple}{$\rightarrow$} Upper};
            
        \end{tikzpicture}
        \caption{Adjacency Relations on $\mathcal{K}$}
        \label{fig:relational-graph}
    \end{subfigure}
    \caption{The left panel shows a simplicial complex $\mathcal{K}$ consisting of five vertices, five edges, and one triangle. The right panel shows the corresponding adjacency relations depicted as arrows to each simplex $\sigma\in\mathcal{K}$ emanating from each of its adjacent simplices in $\mathcal{B}(\sigma)$, $\mathcal{C}(\sigma)$, $\mathcal{N}_{\downarrow}(\sigma)$, $\mathcal{N}_{\uparrow}(\sigma)$.}
    \label{fig:simplicial-relational}
\end{figure}

We now, following \citet[Section 4]{pmlr-v139-bodnar21a}, review a general scheme for message passing on simplicial complexes. In Appendix~\ref{appendix:related-work}, we provide references for topological message passing architectures that fit this scheme. We refer readers to Appendix~\ref{appendix:implementation:layers} for specific instantiations of this scheme in our graph and topological message passing models.

Simplicial message passing extends graph message passing from pairwise node connections to higher-dimensional adjacency connections between simplices.
Message passing schemes on simplicial complexes iteratively update feature vectors assigned to simplices by exchanging messages between adjacent simplices.
For a simplicial complex $\mathcal{K}$, we denote the feature vector of a simplex $\sigma \in \mathcal{K}$ as $\mathbf{h}_\sigma \in \mathbb{R}^p$. At each iteration (layer) $t$, the feature vectors $\mathbf{h}_\sigma^{(t)}$ of simplices $\sigma\in\mathcal{K}$ are updated by aggregating messages from adjacent simplices. For a simplex $\sigma \in \mathcal{K}$, the messages passed from adjacent simplices are defined as follows: 
\begin{align}
\begin{split}
\mathbf{m}_{\mathcal{B}}\textsuperscript{\smash{$(t+1)$}}(\sigma) &= \AGG_{\tau \in \mathcal{B}(\sigma)} \left( M_{\mathcal{B}}(\mathbf{h}_\sigma\textsuperscript{\smash{$(t)$}}, \mathbf{h}_\tau\textsuperscript{\smash{$(t)$}}) \right), \\[-0.2ex]
\mathbf{m}_{\mathcal{C}}\textsuperscript{\smash{$(t+1)$}}(\sigma) &= \AGG_{\tau \in \mathcal{C}(\sigma)} \left( M_{\mathcal{C}}(\mathbf{h}_\sigma\textsuperscript{\smash{$(t)$}}, \mathbf{h}_\tau\textsuperscript{\smash{$(t)$}}) \right), \\[-0.2ex]
\mathbf{m}_{\downarrow}\textsuperscript{\smash{$(t+1)$}}(\sigma) &= \AGG_{\tau \in \mathcal{N}_{\downarrow}(\sigma)} \left( M_{\downarrow}(\mathbf{h}_\sigma\textsuperscript{\smash{$(t)$}}, \mathbf{h}_\tau\textsuperscript{\smash{$(t)$}}, \mathbf{h}_{\sigma \cap \tau}\textsuperscript{\smash{$(t)$}}) \right), \\[-0.2ex]
\mathbf{m}_{\uparrow}\textsuperscript{\smash{$(t+1)$}}(\sigma) &= \AGG_{\tau \in \mathcal{N}_{\uparrow}(\sigma)} \left( M_{\uparrow}(\mathbf{h}_\sigma\textsuperscript{\smash{$(t)$}}, \mathbf{h}_\tau\textsuperscript{\smash{$(t)$}}, \mathbf{h}_{\sigma \cup \tau}\textsuperscript{\smash{$(t)$}}) \right).
\end{split}
\end{align}
Here $\AGG$ is an aggregation function (e.g., sum or mean), and $M_\mathcal{B}, M_\mathcal{C}, M_\downarrow, M_\uparrow$ are message functions (e.g., linear or MLP). Then, an update operation $\UPDATE$ (e.g., MLP) incorporates these four different types of incoming messages:
\begin{equation}
\mathbf{h}_\sigma^{(t+1)} = \UPDATE\left(\mathbf{h}_\sigma^{(t)}, \mathbf{m}_\mathcal{B}^{(t+1)}(\sigma), \mathbf{m}_\mathcal{C}^{(t+1)}(\sigma), \mathbf{m}_{\downarrow}^{(t+1)}(\sigma), \mathbf{m}_{\uparrow}^{(t+1)}(\sigma)\right). 
\end{equation}
Finally, after the last iteration, a read-out function is applied to process the features to perform a desired task, such as classification or regression. 

\subsection{Relational Structures and Message Passing}

We model simplicial complexes and the above message passing scheme using \emph{relational structures}.

\begin{definition}[Relational Structure, \citealp{hodges1993model}]
A \emph{relational structure} $\mathcal{R} = (\mathcal{S}, R_1, \ldots, R_k)$ consists of a finite set $\mathcal{S}$ of \emph{entities}, and relations $R_i \subseteq \mathcal{S}^{n_i}$, 
where $n_i$ is the arity of $R_i$. 
\end{definition}
We note that modeling simplicial complexes as relational structures generalizes a powerful perspective in which simplicial complexes and similar constructs are treated as augmented Hasse diagrams
as demonstrated, for example, by \citet{hajij2022topological}, \citet{eitan2024topological}, and \citet{papillon2024topotune}.

We now introduce a general scheme for message passing on relational structures which encompasses the simplicial message passing scheme from Section~\ref{subsec:simplicial-complexes-and-message-passing}. This scheme is an extension of the relational graph convolution model from \citet{schlichtkrull2018modeling} which allows for relations of different arities, not just binary relations.

\begin{definition}[Relational Message Passing Model]
\label{def:relational_message_passing}
A \emph{relational message passing model} on a relational structure $\mathcal{R} = (\mathcal{S}, R_1, \ldots, R_k)$ consists of:
\begin{itemize}[noitemsep,nolistsep, leftmargin=1em]
    \item \emph{Feature vectors}: $\mathbf{h}_\sigma^{(t)} \in \mathbb{R}^{p_t}$ for each $\sigma \in \mathcal{S}$ at layer $t \geq 0$, initialized as $\mathbf{h}_\sigma^{(0)} = \mathbf{x}_\sigma$ (input features). Here, $p_t$ denotes the dimensionality of the feature vectors at layer $t$.
    \item \emph{Message functions} $\boldsymbol{\psi}_{i}^{(t)}\colon \mathbb{R}^{p_t} \times \cdots \times \mathbb{R}^{p_t} \to \mathbb{R}^{p_{i,t}}$ (with $n_i$ arguments) for each relation $R_i$, where $i=1,\ldots,k$. Each message function takes $n_i$ input feature vectors (corresponding to the target simplex and its $n_i-1$ related simplices) and outputs a message vector of dimension $p_{i,t}$. The parameter $n_i$ represents the arity of the relation $R_i$.
    \item \emph{Update function} $\boldsymbol{\phi}^{(t)}\colon \mathbb{R}^{p_{1,t}} \times \cdots \times \mathbb{R}^{p_{k,t}} \to \mathbb{R}^{p_{t+1}}$. The output dimension $p_{t+1}$ specifies the dimensionality of the feature vectors at layer $t+1$.
    \item \emph{Shift operators} $\mathbf{A}^{R_i} \in \mathbb{R}_{\geq 0}^{|\mathcal{S}|^{n_i}}$ associated with each relation $R_i$, for $i = 1, \ldots, k$. For each $\sigma \in \mathcal{S}$ and $\boldsymbol{\xi} = (\xi_1, \ldots, \xi_{n_i - 1}) \in \mathcal{S}^{n_i-1}$ with $(\sigma, \boldsymbol{\xi}) \in R_i$, the element $\mathbf{A}^{R_i}_{\sigma, \boldsymbol{\xi}}$ represents the strength of the signal passed from $\boldsymbol{\xi}$ to $\sigma$. More specifically, for any combination of entities $(\zeta_1, \zeta_2, \dots, \zeta_{n_i}) \in \mathcal{S}^{n_i}$ where the relation $R_i$ does not hold among the entities $\zeta_1, \zeta_2, \dots, \zeta_{n_i}$ (i.e., $(\zeta_1, \zeta_2, \dots, \zeta_{n_i}) 
\notin R_i$), the tensor $\mathbf{A}^{R_i}$ satisfies $\mathbf{A}_{\zeta_1, \zeta_2, \dots, \zeta_{n_i}}^{R_i} = 0$.
\end{itemize}
The update rule is given by:
\begin{equation}\label{eq: update rule higher order}
\mathbf{h}_\sigma^{(t + 1)} = \boldsymbol{\phi}^{(t)} \left( \mathbf{m}_{\sigma,1}^{(t)}, \ldots, \mathbf{m}_{\sigma,k}^{(t)} \right),
\end{equation}
where for each $i = 1, \ldots, k$, the message $\mathbf{m}_{\sigma,i}^{(t)}$ received by $\sigma$ over $R_i$ is computed as:
\begin{equation}
\mathbf{m}_{\sigma,i}^{(t)} = \sum_{\boldsymbol{\xi} \in \mathcal{S}^{n_i - 1}} \mathbf{A}^{R_i}_{\sigma, \boldsymbol{\xi}}\, \boldsymbol{\psi}_{i}^{(t)}\left( \mathbf{h}_\sigma^{(t)}, \mathbf{h}_{\xi_1}^{(t)}, \ldots, \mathbf{h}_{\xi_{n_i-1}}^{(t)} \right).
\end{equation}
\end{definition}

\begin{remark}
The shift operators in Definition~\ref{def:relational_message_passing} extend the definition of graph shift operators \citep{mateos2019connecting, gama2020stability, dasoulas2021learning} from graphs to relational structures.
Whereas relations indicate whether entities are connected, shift operators numerically encode these connections with weights.
\end{remark}

In the context of message passing, a simplicial complex $\mathcal{K}$ can be viewed as a relational structure $\mathcal{R}(\mathcal{K}) = (\mathcal{S}, R_1, \ldots, R_5)$, where $\mathcal{S} = \mathcal{K}$ are the entities, and $R_i$ are relations defined as follows: $R_1 = \{(\sigma) : \sigma \in \mathcal{K}\}$ (identity), $R_2 = \{(\sigma,\tau) : \sigma \in \mathcal{K}, \tau \in \mathcal{B}(\sigma)\}$ (boundary), $R_3 = \{(\sigma,\tau) : \sigma \in \mathcal{K} \tau \in \mathcal{C}(\sigma)\}$ (co-boundary), $R_4 = \{(\sigma,\tau,\delta) : \sigma \in \mathcal{K}, \tau \in \mathcal{N}_{\downarrow}(\sigma), \delta = \sigma \cap \tau\}$ (lower adjacency), $R_5 = \{(\sigma,\tau,\delta) : \sigma\in\mathcal{K},\tau \in \mathcal{N}_{\uparrow}(\sigma), \delta = \sigma \cup \tau\}$ (upper adjacency). The message functions $\boldsymbol{\psi}_i^{(t)}$ correspond to $M_{\mathcal{B}}, M_{\mathcal{C}}, M_{\downarrow}, M_{\uparrow}$, the update function $\boldsymbol{\phi}^{(t)}$ to $\UPDATE$, and aggregation uses shift operators $\mathbf{A}^{R_i}$. This establishes an equivalence between message passing on the simplicial complex $\mathcal{K}$ and the relational structure $\mathcal{R}(\mathcal{K})$.

\begin{remark}
The relational message passing scheme in Definition~\ref{def:relational_message_passing} encompasses relational graph neural networks \citep{schlichtkrull2018modeling}, simplicial neural networks \citep{pmlr-v139-bodnar21a}, higher-order graph neural networks~\citep{morris2019weisfeiler}, and cellular complex neural networks~\citep{bodnar2021weisfeiler}. We demonstrate how higher-order graphs fit the relational framework in Appendix~\ref{appendix:higher-order-graphs}.
\end{remark}

\takeawaymessage{1 (Axiomatic)}{
Simplicial complexes can be represented as \emph{relational structures}, where the entities are simplices, and the relations capture the adjacency among simplices of different dimensions. Simplicial message passing is an instance of relational message passing on these structures.
}

\section{Oversquashing in Relational Message-Passing}
\label{sec:sensitivity_analysis}

The existing literature on oversquashing in GNNs does not directly address relational message passing.  In this section, we address that gap by deriving new extensions of key results on oversquashing in GNNs to relational message passing. We illustrate the definitions in this section with a small worked example in Appendx~\ref{appendix:worked-example}.

In our analysis of relational structures and message passing schemes, we naturally encounter matrices and graphs that capture the aggregated influence of the underlying shift operators. For convenience, we introduce notation for these matrices and graphs. For each relation $R_i$ of arity $n_i$ 
with shift operator $\mathbf{A}^{R_i}$, we define the matrix $\tilde{\mathbf{A}}^{R_i} \in \mathbb{R}_{\geq 0}^{|\mathcal{S}| \times |\mathcal{S}|}$ as:
\begin{equation}
\tilde{\mathbf{A}}^{R_i}_{\sigma, \tau} = \sum_{j=1}^{n_i - 1} \sum_{\boldsymbol{\xi} \in \mathcal{S}^{n_i - 2}} \mathbf{A}^{R_i}_{\sigma, \xi_1, \ldots, \xi_{j-1}, \tau, \xi_j, \ldots, \xi_{n_i - 2}},\quad \sigma, \tau\in \mathcal{S}.  
\end{equation}
This matrix captures all possible ways an entity $\tau$ can influence entity $\sigma$ via the relation $R_i$. Specifically, it sums over all positions $j$ where $\tau$ can appear among the arguments of the shift operator $\mathbf{A}^{R_i}$, and over all possible combinations of the other entities $\boldsymbol{\xi}$.

We aggregate these matrices over all relations to form the \emph{aggregated influence matrix} $\tilde{\mathbf{A}}\in \mathbb{R}_{\geq 0}^{|\mathcal{S}| \times |\mathcal{S}|}$: 
\begin{equation}
\tilde{\mathbf{A}} = \sum_{i=1}^{k} \tilde{\mathbf{A}}^{R_i}.
\label{eq:aggregated_influence_matrix_sum}
\end{equation}
Next, we define the \emph{augmented influence matrix} $\mathbf{B}$, which plays the role of an augmented adjacency matrix in our analysis: 
\begin{equation}
\mathbf{B} = \gamma \mathbf{I} + \tilde{\mathbf{A}},
\label{eq:augmented_influence_matrix}
\end{equation}
where $\gamma = \max_\sigma \sum_{\boldsymbol{\xi} \in \mathcal{S}^{n_i - 1}} \tilde{\mathbf{A}}_{\sigma, \boldsymbol{\xi}}$ is the maximum row sum of $\tilde{\mathbf{A}}$. 

Lastly, we introduce graphs that capture the aggregated message passing dynamical structure implied by the relational structure and the message passing scheme.
\begin{definition}[Influence Graph]
\label{def:influence_graph}
Given a relational structure $\mathcal{R} = (\mathcal{S}, R_1, \ldots, R_k)$ and a relational message passing scheme with update rule given by Equation~\ref{eq: update rule higher order}, and given $\mathbf{Q} \in \{\tilde{\mathbf{A}}, \mathbf{B}\}$, where $\tilde{\mathbf{A}}$ and $\mathbf{B}$ are defined by Equations~\ref{eq:aggregated_influence_matrix_sum} and \ref{eq:augmented_influence_matrix} respectively, we define the \emph{influence graph} $\mathcal{G}(\mathcal{S}, \mathbf{Q}) = (\mathcal{S}, \mathcal{E}, w)$ as follows: The set of entities (i.e., nodes) is $\mathcal{S}$. The set of edges $\mathcal{E}$ consists of directed edges from entity $\tau$ to entity $\sigma$ for each pair $(\sigma, \tau) \in \mathcal{S} \times \mathcal{S}$ with $\mathbf{Q}_{\sigma, \tau} > 0$. Each edge from $\tau$ to $\sigma$ is assigned a weight $w_{\tau \rightarrow \sigma} = \mathbf{Q}_{\sigma, \tau}$.
\end{definition}
As we will see 
next, these graphs make it possible to leverage and extend graph-theoretic concepts, results, and intuition to
understand and analyze the behavior of relational message passing schemes.

\subsection{Sensitivity Analysis}
\label{subsec:sensitivity-analysis}

We now analyze the sensitivity of relational message passing to changes in the input features. This analysis is crucial for understanding how information propagates through the network and for identifying potential bottlenecks or oversquashing effects. We begin with a standard assumption about the boundedness of the Jacobians of the message and update functions.

\begin{assumption}[Bounded Jacobians]
\label{assump:bounded_jacobian}
All message functions $\boldsymbol{\psi}_{i}\textsuperscript{\smash{$(\ell)$}}$ and update functions $\boldsymbol{\phi}\textsuperscript{\smash{$(\ell)$}}$ are differentiable with bounded Jacobians: 
There exist constants $\beta_{i}\textsuperscript{\smash{$(\ell)$}}$ and $\alpha\textsuperscript{\smash{$(\ell)$}}$ such that 
$\left\Vert \partial \boldsymbol{\psi}_{i}\textsuperscript{\smash{$(\ell)$}} / \partial \mathbf{h}_\sigma \right\Vert_{1} \leq \beta_{i}\textsuperscript{\smash{$(\ell)$}}$ 
for any input feature vector $\mathbf{h}_\sigma$, and 
$\left\Vert \partial \boldsymbol{\phi}\textsuperscript{\smash{$(\ell)$}} / \partial \mathbf{m}_j \right\Vert_{1} \leq \alpha\textsuperscript{\smash{$(\ell)$}}$ 
for any message input $\mathbf{m}_j$. 
We write $\beta\textsuperscript{\smash{$(\ell)$}} = \max_i \beta_{i}\textsuperscript{\smash{$(\ell)$}}$.
\end{assumption}

Our main result on sensitivity is the following, which
is a novel extension of GNN sensitivity analysis results (e.g., \citealp[Lemma~1]{topping2022understanding} and \citealp[Theorem~3.2]{pmlr-v202-di-giovanni23a}) to relational (and topological) message passing. We provide the proof in Appendix~\ref{appendix-subsec-proofs-sensitivity-analysis}. 

\begin{lemma}[Sensitivity Bound for Relational Message Passing] 
\label{lemma: jacobian bound higher order}
Consider a relational structure $\mathcal{R} = (\mathcal{S}, R_1, \ldots, R_k)$ with update rule given by Equation~\ref{eq: update rule higher order} and satisfying Assumption~\ref{assump:bounded_jacobian}. Then, for any $\sigma, \tau \in \mathcal{S}$ and $t > 0$, the Jacobian at layer $t$ with respect to the input features ($t = 0$) 
satisfies 
\begin{equation}
\left\Vert \frac{\partial \mathbf{h}_{\sigma}^{(t)}}{\partial \mathbf{h}^{(0)}_{\tau}} \right\Vert_{1} \leq \left( \prod_{\ell=0}^{t-1} \alpha^{(\ell)} \beta^{(\ell)} \right) \left( \mathbf{B}^{t} \right)_{\sigma,\tau}.
\end{equation}
\end{lemma}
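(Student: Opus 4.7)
\emph{Approach.} The plan is to prove the bound by induction on $t$. Writing $J_\nu^{(t)} := \lVert \partial \mathbf{h}_\nu^{(t)}/\partial \mathbf{h}_\tau^{(0)}\rVert_1$, the goal of the inductive step is to establish the one-step componentwise recursion
\[
J_\sigma^{(t+1)} \;\leq\; \alpha^{(t)}\beta^{(t)}\sum_{\nu\in\mathcal{S}} \mathbf{B}_{\sigma,\nu}\,J_\nu^{(t)}.
\]
Since every entry of $\mathbf{B}$ is nonnegative, iterating this recursion and using the trivial base case $J_\nu^{(0)} = \delta_{\nu,\tau} = (\mathbf{B}^0)_{\nu,\tau}$ then yields the claimed factorization with prefactor $\prod_{\ell=0}^{t-1}\alpha^{(\ell)}\beta^{(\ell)}$ multiplying $(\mathbf{B}^t)_{\sigma,\tau}$.

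\emph{Key steps.} For the inductive step, I would first differentiate Equation~\ref{eq: update rule higher order} by the chain rule to express $\partial \mathbf{h}_\sigma^{(t+1)}/\partial \mathbf{h}_\tau^{(0)}$ as a sum over relations of $(\partial \boldsymbol{\phi}^{(t)}/\partial \mathbf{m}_{\sigma,i}^{(t)})\cdot(\partial \mathbf{m}_{\sigma,i}^{(t)}/\partial \mathbf{h}_\tau^{(0)})$, and then expand each inner factor by the chain rule across all $n_i$ arguments of $\boldsymbol{\psi}_i^{(t)}$ (the target feature $\mathbf{h}_\sigma^{(t)}$ and the $n_i-1$ related features $\mathbf{h}_{\xi_j}^{(t)}$). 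Taking the matrix $1$-norm and invoking triangle inequality, submultiplicativity, and Assumption~\ref{assump:bounded_jacobian}, this yields
\[
J_\sigma^{(t+1)} \;\leq\; \alpha^{(t)}\beta^{(t)} \sum_{i=1}^{k}\sum_{\boldsymbol{\xi}\in\mathcal{S}^{n_i-1}} \mathbf{A}^{R_i}_{\sigma,\boldsymbol{\xi}}\left(J_\sigma^{(t)} + \sum_{j=1}^{n_i-1}J_{\xi_j}^{(t)}\right).
\]
The combinatorial heart of the argument is then to regroup the $\xi_j$ terms by fixing $\nu=\xi_j$ and summing over the position $j$ and the remaining indices: by the defining Equation of $\tilde{\mathbf{A}}^{R_i}$, this gives exactly $\sum_\nu \tilde{\mathbf{A}}^{R_i}_{\sigma,\nu} J_\nu^{(t)}$, which aggregates to $\sum_\nu \tilde{\mathbf{A}}_{\sigma,\nu} J_\nu^{(t)}$ after summing over relations (Equation~\ref{eq:aggregated_influence_matrix_sum}). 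For the self-term, the coefficient $\sum_i \sum_{\boldsymbol{\xi}} \mathbf{A}^{R_i}_{\sigma,\boldsymbol{\xi}}$ equals $\sum_i \tfrac{1}{n_i-1}\sum_\nu \tilde{\mathbf{A}}^{R_i}_{\sigma,\nu}$, which is bounded above by the row sum $\sum_\nu \tilde{\mathbf{A}}_{\sigma,\nu} \leq \gamma$. Collecting both contributions reproduces the desired kernel $\mathbf{B}=\gamma\mathbf{I}+\tilde{\mathbf{A}}$, and the inductive hypothesis propagates cleanly through multiplication by the nonnegative matrix $\mathbf{B}$.

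\emph{Main obstacle.} The only nontrivial bookkeeping lies in the higher-arity accounting of the second step: for relations of arity $n_i > 2$, the shift operators are tensors rather than matrices, and one must verify that collapsing each argument position $j$ of $\mathbf{A}^{R_i}$ into the matrix $\tilde{\mathbf{A}}^{R_i}$ correctly aggregates all Jacobian contributions a neighbor $\nu$ can exert on $\sigma$, while the target feature's ``loopback'' coefficient uniformly fits inside the $\gamma\mathbf{I}$ diagonal of $\mathbf{B}$. Once this translation between the tensorial update rule and the matrix $\mathbf{B}$ is set up, the remainder is the standard induction-plus-chain-rule template familiar from GNN sensitivity analyses such as \citealp[Lemma~1]{topping2022understanding}.
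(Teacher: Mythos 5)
Your proposal is correct and follows essentially the same route as the paper's proof: differentiate the update rule via the chain rule, bound the norms using Assumption~\ref{assump:bounded_jacobian}, regroup the neighbor terms into $\tilde{\mathbf{A}}$ via its defining sum over argument positions, absorb the self-term coefficient into $\gamma\mathbf{I}$, and induct on $t$ through the nonnegative kernel $\mathbf{B}$. The only cosmetic difference is that you phrase the induction as iterating a one-step componentwise recursion from $J_\nu^{(0)}=\delta_{\nu,\tau}$, whereas the paper starts the induction at $t=1$ and propagates the bound $(\mathbf{B}^t)_{\nu,\tau}$ directly; these are the same argument.
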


Thus, the bound on the Jacobian of the $\sigma$-feature with respect to the input $\tau$-feature depends on the $(\sigma,\tau)$-entry of the $t$-th matrix power $\mathbf{B}^t$, which reflects the number and strength of $t$-length paths from $\tau$ to $\sigma$ in the graph $\mathcal{G}(\mathcal{S}, \mathbf{B})$. Structural properties of $\mathcal{G}(\mathcal{S}, \mathbf{B})$ that lead to small values of $\left( \mathbf{B}^t \right)_{\sigma,\tau}$, such as bottlenecks or long distances between nodes, therefore contribute to the phenomenon of oversquashing, where the influence of distant entities is diminished. 

As demonstrated throughout this work, our result offers a systematic framework for extending other theoretical findings on oversquashing in graphs, which do not directly apply to simplicial complexes and similar relational structures. This includes the influential works by \citet{topping2022understanding}, \citet{pmlr-v202-di-giovanni23a}, and \citet{fesser2023mitigating}.
By leveraging our axiomatic framework, we derive principled extensions on the impact of local geometry (Section~\ref{subsec:impact-of-local-geometry}), depth (Section~\ref{subsec:impact-of-depth}), and hidden dimensions (Section~\ref{subsec:impact-of-hidden-dimensions}) in higher-order message passing, addressing settings where prior results are not applicable.
Additionally, this result offers a clear approach for deriving analogs of key quantities such as curvature (Definition~\ref{definition:motif-counts}), and can serve as a guide for future work.

\subsection{The Impact of Local Geometry}
\label{subsec:impact-of-local-geometry}

Lemma~\ref{lemma: jacobian bound higher order} shows that the entries of the matrix $\mathbf{B}^t$, which encode the number and strength of connections in a relational message passing scheme, control feature sensitivity. Prior works relate similar bounds to notions of discrete curvature for unweighted undirected graphs, such as balanced Forman curvature \citep{topping2022understanding}, Ollivier-Ricci curvature \citep{pmlr-v202-nguyen23c}, and augmented Forman curvature \citep{fesser2023mitigating}, via counting local motifs, such as triangles and squares, in the underlying graphs. Following this approach, we derive a result analogous to \citet[Proposition~3.4]{fesser2023mitigating}, introducing a motif-counting quantity inspired by the augmented Forman curvature, but adapted for the particular weighted directed graphs arising in our setting.

\begin{definition}
\label{definition:motif-counts}
Let $\mathcal{G} = (\mathcal{S}, \mathcal{E}, w)$ be a weighted directed graph with entities (nodes) $\mathcal{S}$, edges $\mathcal{E}$, and edge weights $w \colon \mathcal{E} \to \mathbb{R}_{\geq 0}$. For each entity $\tau \in \mathcal{S}$, define the \emph{weighted out-degree} $w_{\tau}^{\text{\normalfont out}} = \sum_{(\tau \to \sigma) \in \mathcal{E}} w_{\tau \to \sigma}$ and the \emph{weighted in-degree} $w_{\tau}^{\text{\normalfont in}} = \sum_{(\sigma \to \tau) \in \mathcal{E}} w_{\sigma \to \tau}$. For an edge $(\tau \to \sigma) \in \mathcal{E}$, define the \emph{weighted triangle count} $w_T = \sum_{\xi \in \mathcal{S}} w_{\tau \to \xi} \cdot w_{\xi \to \sigma}$ and the \emph{weighted quadrangle count} $w_F = \sum_{\xi_1, \xi_2 \in \mathcal{S}} w_{\tau \to \xi_1} \cdot w_{\xi_1 \to \xi_2} \cdot w_{\xi_2 \to \sigma}$. Then, the \emph{extended Forman curvature} of the edge $(\tau \to \sigma)$ is defined as:
\begin{equation}
\mathrm{EFC}_{\mathcal{G}}(\tau, \sigma) = 4 - w_{\tau}^{\text{\normalfont out}} - w_{\sigma}^{\text{\normalfont in}} + 3 w_T + 2 w_F.
\end{equation}
\end{definition}

We immediately get the following result, inspired by \citet[Proposition~4.4]{pmlr-v202-nguyen23c} and \citet[Proposition~3.4]{fesser2023mitigating}, and which is exemplary of results connecting sensitivity analysis to notions of discrete curvature. We provide the proof in Appendix~\ref{appendix-subsec-proofs-impact-of-local-geometry}.

\begin{proposition}
\label{prop:local-geometry}
Consider a relational structure $\mathcal{R} = (\mathcal{S}, R_1, \ldots, R_k)$ with update rule given by Equation~\ref{eq: update rule higher order} and satisfying Assumption~\ref{assump:bounded_jacobian}. Denote $\mathcal{G} = \mathcal{G}(\mathcal{S}, \mathbf{B})$. Then, for any $\sigma, \tau \in \mathcal{S}$ with an edge $(\tau \to \sigma) \in \mathcal{G}$, the following holds: 
\begin{equation}
\left\Vert \frac{\partial \mathbf{h}_{\sigma}^{(2)}}{\partial \mathbf{h}^{(0)}_{\tau}} \right\Vert_{1} \leq \frac{1}{3} \left( \prod_{\ell=0}^{1} \alpha^{(\ell)} \beta^{(\ell)} \right) \left[ \mathrm{EFC}_\mathcal{G}(\tau, \sigma) + w_{\tau}^{\text{\normalfont out}} + w_{\sigma}^{\text{\normalfont in}} - 4 \right].
\end{equation}
\end{proposition}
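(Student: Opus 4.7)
The plan is to reduce the statement to Lemma~\ref{lemma: jacobian bound higher order} specialized at $t=2$, and then identify the two-step matrix power $(\mathbf{B}^2)_{\sigma,\tau}$ with the weighted triangle count $w_T$ defined on the influence graph $\mathcal{G}(\mathcal{S},\mathbf{B})$. Since the constant prefactor matches exactly, the whole argument boils down to showing that $(\mathbf{B}^2)_{\sigma,\tau}\leq \tfrac{1}{3}[\mathrm{EFC}_{\mathcal{G}}(\tau,\sigma)+w_\tau^{\text{out}}+w_\sigma^{\text{in}}-4]$.

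First, I would apply Lemma~\ref{lemma: jacobian bound higher order} with $t=2$ to get
\begin{equation*}
\left\Vert \frac{\partial \mathbf{h}_{\sigma}^{(2)}}{\partial \mathbf{h}^{(0)}_{\tau}} \right\Vert_{1} \;\leq\; \Bigl(\prod_{\ell=0}^{1}\alpha^{(\ell)}\beta^{(\ell)}\Bigr)\,(\mathbf{B}^{2})_{\sigma,\tau}.
\end{equation*}
Next I would unpack the $(\sigma,\tau)$-entry of $\mathbf{B}^{2}$ using the definition of $\mathcal{G}(\mathcal{S},\mathbf{B})$, where edge weights satisfy $w_{\alpha\to\beta}=\mathbf{B}_{\beta,\alpha}$ (Definition~\ref{def:influence_graph}). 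A direct expansion yields
\begin{equation*}
(\mathbf{B}^{2})_{\sigma,\tau}\;=\;\sum_{\xi\in\mathcal{S}}\mathbf{B}_{\sigma,\xi}\,\mathbf{B}_{\xi,\tau}\;=\;\sum_{\xi\in\mathcal{S}}w_{\xi\to\sigma}\,w_{\tau\to\xi}\;=\;w_{T},
\end{equation*}
i.e.\ the entry equals precisely the weighted triangle count associated with the edge $(\tau\to\sigma)$ from Definition~\ref{definition:motif-counts}.

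Then I would finish by rearranging the definition of extended Forman curvature to obtain $3w_{T}+2w_{F}=\mathrm{EFC}_{\mathcal{G}}(\tau,\sigma)+w_\tau^{\text{out}}+w_\sigma^{\text{in}}-4$. Because $w$ is nonnegative we have $w_{F}\geq 0$, hence
\begin{equation*}
w_{T}\;\leq\;w_{T}+\tfrac{2}{3}w_{F}\;=\;\tfrac{1}{3}\bigl[\mathrm{EFC}_{\mathcal{G}}(\tau,\sigma)+w_\tau^{\text{out}}+w_\sigma^{\text{in}}-4\bigr].
\end{equation*}
Substituting this back into the bound from Lemma~\ref{lemma: jacobian bound higher order} completes the proof.

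There is no real conceptual obstacle once Lemma~\ref{lemma: jacobian bound higher order} is available; the hardest part is purely bookkeeping, namely being precise about the transpose convention $w_{\alpha\to\beta}=\mathbf{B}_{\beta,\alpha}$ when turning the matrix product $(\mathbf{B}^{2})_{\sigma,\tau}$ into a sum of edge-weight products over two-step walks from $\tau$ to $\sigma$, and verifying that the algebraic rearrangement of $\mathrm{EFC}$ exactly matches the prefactor $\tfrac{1}{3}$ in the statement. The assumption that the edge $(\tau\to\sigma)$ lies in $\mathcal{G}$ is used only so that $\mathrm{EFC}_{\mathcal{G}}(\tau,\sigma)$ is well-defined; it plays no other role in the bound.
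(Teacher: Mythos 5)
Your proposal is correct and follows essentially the same route as the paper's proof: apply Lemma~\ref{lemma: jacobian bound higher order} at $t=2$, identify $(\mathbf{B}^2)_{\sigma,\tau}$ with the weighted triangle count $w_T$ via the convention $w_{\xi\to\eta}=\mathbf{B}_{\eta,\xi}$, and then drop the nonnegative quadrangle term $w_F$ after rearranging the definition of $\mathrm{EFC}_{\mathcal{G}}(\tau,\sigma)$. No gaps.
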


In principle, a similar result using balanced Forman curvature \citep[as in][Theorem~4]{topping2022understanding} is possible using our framework, and we leave that extension for future work. Connections to Ollivier-Ricci curvature are discussed in Appendix~\ref{appendix-remarks-subsec=impact-of-local-geometry}.

We present experimental analyses related to curvature on relational structures in Section~\ref{sec:experiments:curvature_distribution} (edge curvature distribution) and Appendices~\ref{appendix-graph-lifting-curvature} (edge curvature visualization) and ~\ref{appendix:graph_lift_wc} (weighted curvature). We further propose a relational extension of curvature-based rewiring techniques in Section~\ref{sec:rewiring} and empirically analyze the impact of relational rewiring using real-world and synthetic benchmarks in Sections~\ref{sec:experiments:realworld} and ~\ref{sec:experiments:synthetic}, respectively.

\subsection{The Impact of Depth}
\label{subsec:impact-of-depth}

To facilitate our analysis of depth, we make the following non-restrictive assumptions:

\begin{assumption}[Row-Normalized Shift Operators]
\label{assump:normalized_shift_operator}
Each shift operator $\mathbf{A}^{R_i}$ associated with relation $R_i$ is row-normalized, such that for all $\sigma \in \mathcal{S}$,
\begin{equation}
\sum_{\boldsymbol{\xi} \in \mathcal{S}^{n_i - 1}} A_{\sigma, \boldsymbol{\xi}}^{R_i} = \begin{cases}
1, & \text{if } \sum_{\boldsymbol{\xi}} A_{\sigma, \boldsymbol{\xi}}^{R_i} \ne 0, \\
0, & \text{if } \sum_{\boldsymbol{\xi}} A_{\sigma, \boldsymbol{\xi}}^{R_i} = 0.
\end{cases}
\end{equation}
\end{assumption}

\begin{assumption}[Bounded $\alpha^{(\ell)}$ and $\beta^{(\ell)}$]
\label{assump:bounded_alpha_beta}
There exist constants $\alpha_{\text{\normalfont max}} > 0$ and $\beta_{\text{\normalfont max}} > 0$ such that for all layers $\ell$, $\alpha^{(\ell)} \leq \alpha_{\text{\normalfont max}}$ and $\beta^{(\ell)} \leq \beta_{\text{\normalfont max}}$.
\end{assumption}

We now present our main result on the impact of depth in relational message passing, extending a previous result by \citet[Theorem~4.1]{pmlr-v202-di-giovanni23a} to our setting. 
We provide the proof in Appendix~\ref{appendix-subsec-proofs-impact-of-depth}. 
By \emph{the combinatorial distance} from $\tau$ to $\sigma$ in the graph $\mathcal{G}(\mathcal{S}, \tilde{\mathbf{A}})$, we mean the smallest number of edges in a directed path from $\tau$ to $\sigma$ in the graph. Similarly, by \emph{combinatorial length} of a directed path, we mean the number of edges in the path. 

\begin{theorem}[Impact of Depth on Relational Message Passing]
\label{theorem: impact of depth}
Consider a relational structure $\mathcal{R} = (\mathcal{S}, R_1, \ldots, R_k)$ with update rule given by Equation~\ref{eq: update rule higher order} and satisfying Assumptions~\ref{assump:bounded_jacobian},~\ref{assump:normalized_shift_operator}, and~\ref{assump:bounded_alpha_beta}. 
Let $\sigma, \tau \in \mathcal{S}$ be entities such that the combinatorial distance from $\tau$ to $\sigma$ in the graph $\mathcal{G}(\mathcal{S}, \tilde{\mathbf{A}})$ is $r$. Denote by $\omega_\ell(\sigma,\tau)$ the number of directed paths from $\tau$ to $\sigma$ of combinatorial length at most $\ell$ in $\mathcal{G}(\mathcal{S}, \tilde{\mathbf{A}})$. Then, for any $0 \leq m < r$, there exists a constant $C > 0$, depending only on $\alpha_{\text{\normalfont max}}$, $\beta_{\text{\normalfont max}}$, $k$, and $m$, but not on $r$ nor the specific relations in $\mathcal{R}$, such that
\begin{equation}
\left\|\frac{\partial \mathbf{h}_\sigma^{(r+m)}}{\partial \mathbf{h}_\tau^{(0)}}\right\|_1 \leq C \omega_{r+m}(\sigma,\tau) (2 \alpha_{\text{\normalfont max}} \beta_{\text{\normalfont max}} M)^r,
\end{equation}
where $M = \max_{\sigma,\tau} \tilde{\mathbf{A}}_{\sigma,\tau}$.
\end{theorem}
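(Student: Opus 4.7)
The plan is to invoke Lemma~\ref{lemma: jacobian bound higher order} to reduce the derivative bound to a combinatorial bound on $(\mathbf{B}^{r+m})_{\sigma,\tau}$, and then exploit the binomial structure of $\mathbf{B} = \gamma\mathbf{I} + \tilde{\mathbf{A}}$ together with the combinatorial distance hypothesis to separate the $(2\alpha_{\max}\beta_{\max}M)^r$ factor from quantities that depend only on $m$, $k$, and the bounded Jacobian constants.

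First I would apply Lemma~\ref{lemma: jacobian bound higher order} with $t = r+m$ and Assumption~\ref{assump:bounded_alpha_beta} to get $\Vert\partial\mathbf{h}_\sigma^{(r+m)}/\partial\mathbf{h}_\tau^{(0)}\Vert_1 \leq (\alpha_{\max}\beta_{\max})^{r+m}(\mathbf{B}^{r+m})_{\sigma,\tau}$. Since $\gamma\mathbf{I}$ commutes with $\tilde{\mathbf{A}}$, the binomial theorem gives
\begin{equation*}
(\mathbf{B}^{r+m})_{\sigma,\tau} = \sum_{j=0}^{r+m} \binom{r+m}{j}\gamma^{r+m-j}(\tilde{\mathbf{A}}^j)_{\sigma,\tau}.
\end{equation*}
By the distance hypothesis, $(\tilde{\mathbf{A}}^j)_{\sigma,\tau} = 0$ for every $j < r$, so only indices $j \in \{r,\ldots,r+m\}$ contribute. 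For each such $j$, the entry $(\tilde{\mathbf{A}}^j)_{\sigma,\tau}$ equals a sum of weighted products along directed paths of length exactly $j$ from $\tau$ to $\sigma$ in $\mathcal{G}(\mathcal{S},\tilde{\mathbf{A}})$; each summand is at most $M^j$, and the number of such paths is at most $\omega_{r+m}(\sigma,\tau)$, so $(\tilde{\mathbf{A}}^j)_{\sigma,\tau} \leq \omega_{r+m}(\sigma,\tau)\,M^j$.

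Substituting and factoring out $M^r$ via the change of index $i = j-r$ yields
\begin{equation*}
(\mathbf{B}^{r+m})_{\sigma,\tau} \leq \omega_{r+m}(\sigma,\tau)\,M^r \sum_{i=0}^{m}\binom{r+m}{r+i}\gamma^{m-i}M^i.
\end{equation*}
I would then apply the crude bound $\binom{r+m}{r+i} \leq 2^{r+m}$ and $\gamma^{m-i}M^i \leq \max(\gamma,M)^m$, yielding the inner sum $\leq (m+1)\,2^{r+m}\,\max(\gamma,M)^m$. Combining with the $(\alpha_{\max}\beta_{\max})^{r+m}$ prefactor and splitting $2^{r+m} = 2^r\cdot 2^m$, $(\alpha_{\max}\beta_{\max})^{r+m} = (\alpha_{\max}\beta_{\max})^r(\alpha_{\max}\beta_{\max})^m$ rearranges into the desired form $C\,\omega_{r+m}(\sigma,\tau)\,(2\alpha_{\max}\beta_{\max}M)^r$, with $C = (m+1)\bigl(2\alpha_{\max}\beta_{\max}\max(\gamma,M)\bigr)^m$. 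The constant $\gamma$ is controlled by Assumption~\ref{assump:normalized_shift_operator}: since each row of $\mathbf{A}^{R_i}$ sums to at most $1$, summing $\tilde{\mathbf{A}}^{R_i}_{\sigma,\tau}$ over $\tau$ gives at most $n_i - 1$, so $\gamma \leq \sum_i (n_i - 1)$, which for relational structures of bounded arity (as for simplicial/cellular complexes) is controlled by $k$ alone and independent of the specific relations.

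The principal obstacle is bookkeeping: one must carefully track the distance constraint through the binomial expansion, and pick the crude estimate on $\binom{r+m}{j}$ so that the factor $2^{r+m}$ cleanly splits into a $2^r$ that joins $(\alpha_{\max}\beta_{\max}M)^r$ and a $2^m$ that is absorbed into the $m$-dependent prefactor $C$. A secondary subtlety is isolating the dependence of $C$ on the relational structure: the bound on $\gamma$ via row-normalization is what allows $C$ to depend only on $\alpha_{\max},\beta_{\max},k,m$, rather than on global features of $\mathcal{R}$ such as the number of entities or specific connectivity.
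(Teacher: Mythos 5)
Your proposal is correct and follows essentially the same route as the paper's proof: invoke Lemma~\ref{lemma: jacobian bound higher order}, expand $(\gamma\mathbf{I}+\tilde{\mathbf{A}})^{r+m}$ by the binomial theorem, discard the terms of order below $r$ using the combinatorial distance, bound $(\tilde{\mathbf{A}}^j)_{\sigma,\tau}\leq\omega_{r+m}(\sigma,\tau)\,M^j$, and fold the remaining sum into an $m$-dependent constant times $2^r$. The only divergence is bookkeeping: the paper controls $\binom{r+m}{r+q}$ by a ratio argument reducing it to $\binom{m}{q}$ and resums to $(\gamma+M)^m$, whereas you use the cruder $\binom{r+m}{r+i}\le 2^{r+m}$ split as $2^r\cdot 2^m$; both yield the stated form with a valid (slightly different) constant $C$, and your explicit bound $\gamma\le\sum_i(n_i-1)$ from row-normalization is, if anything, a more careful justification than the paper's bare assertion $\gamma\le k$, $M\le k$.
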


This result indicates that the sensitivity can decay exponentially with depth when $M < 1/(2 \alpha_{\text{max}} \beta_{\text{max}})$, particularly when the number of walks $\omega_t(\sigma,\tau)$ is limited by the structure of $\mathcal{G}(\mathcal{S}, \tilde{\mathbf{A}})$. Such exponential decay is a characteristic of the oversquashing phenomenon, where information from distant nodes becomes increasingly compressed, reducing its influence on the output.

We present experimental validation of this result in Section~\ref{sec:experiments:synthetic}.

\subsection{The Impact of Hidden Dimensions}
\label{subsec:impact-of-hidden-dimensions}

In situations where the Lipschitz constants of the message and update functions from Assumption~\ref{assump:bounded_jacobian} are affected by hyperparameters, such as the widths of neural networks 
implementing
said functions, one can have $\beta_{i}^{(\ell)} = O(p_{i,\ell})$ and $\alpha^{(\ell)} = O(p_{\ell+1})$. This is the case, for instance, when the message and update functions are
shallow neural networks (see 
Appendix~\ref{appendix-remarks-subsec-impact-of-hidden-dimensions} and 
Appendix~\ref{appendix-subsec-proofs-impact-of-hidden-dimensions}).

Writing $p_\ell^\prime = \max_i p_{i,\ell}$ and substituting $\beta^{(\ell)} = O(p_\ell^\prime)$ and $\alpha^{(\ell)} = O(p_{\ell+1})$ into the bound from Lemma~\ref{lemma: jacobian bound higher order}, one gets:
\begin{equation}
\left\| \frac{\partial \mathbf{h}_\sigma^{(t)}}{\partial \mathbf{h}_\tau^{(0)}} \right\|_{1} \leq C \cdot \left( \prod_{\ell=0}^{t-1} p_{\ell+1} \cdot p_\ell^\prime \right) \left( \mathbf{B}^{t} \right)_{\sigma,\tau},
\end{equation}
where $C$ is a constant independent of the layer widths, $p_\ell'$ is the maximum dimension of the message vectors at layer $\ell$, and $p_{\ell+1}$ is the output dimension of the update function at layer $\ell$.

This implies that low hidden dimensions in the message and update functions contribute to a low sensitivity upper bound, which can exacerbate the oversquashing problem. Increasing the hidden dimensions will raise the upper bound,
which can help improve
the model’s ability to propagate information effectively and enhance performance on tasks. However, increasing the hidden dimensions risks overfitting due to the increased model complexity \citep{bartlett2017spectrally}.

We present experimental validation of this result in Section~\ref{sec:experiments:synthetic}.

\takeawaymessage{2 (Theoretical)}{
By reformulating higher order structures as relational structures, key results on oversquashing in graph neural networks extend to relational message passing schemes
through the \emph{aggregated influence matrix} and the \emph{influence graph}. This conceptual framework enables analysis of the impact of local geometry, depth, and hidden dimensions in relational message passing schemes, just as in graph neural networks.
}

\section{Rewiring Heuristics for Relational Structures}
\label{sec:rewiring}

Inspired by First-Order Spectral Rewiring (FoSR) \citep{karhadkar2023fosr}, we propose a rewiring heuristic that integrates additional connections into a relational structure without altering its original connections. To capture the overall connectivity of a relational structure, we define the collapsed adjacency matrix, which counts the number of direct connections between entities.

\begin{definition}[Collapsed Adjacency Matrix]
\label{def:collapsed_adjacency_matrix}
Given a relational structure $\mathcal{R} = (\mathcal{S}, R_1, \ldots, R_k)$, the \emph{collapsed adjacency matrix} $\mathbf{A}^\text{\normalfont col}$ for the structure $\mathcal{R}$ is defined by:
\begin{equation}
\mathbf{A}_{\sigma, \tau}^\text{\normalfont col} = \sum_{i=1}^k \sum_{\boldsymbol{\xi} \in \mathcal{S}^{n_i - 1}} \mathbf{1}_{\{(\sigma, \boldsymbol{\xi}) \in R_i, \tau = \xi_j \text{ for some } j \in \{1, \ldots, n_i\}\}},\quad \sigma, \tau\in \mathcal{S}.
\end{equation}
\end{definition}
This matrix captures direct connections between entities through any relation, effectively collapsing the relational structure into a graph. Our proposed relational rewiring algorithm is as follows.

\begin{algorithm}[htbp]
\caption{Relational Rewiring Algorithm}
\label{alg:rewiring}
\begin{algorithmic}[1]
\Require Relational structure $\mathcal{R} = (\mathcal{S}, R_1, \ldots, R_k)$; graph rewiring algorithm \textsc{RewireAlgo}
\State Construct the collapsed adjacency matrix $\mathbf{A}^\text{col}$ (Definition~\ref{def:collapsed_adjacency_matrix})
\State Build the graph $\mathcal{G}^\text{col} = (\mathcal{S}, \mathbf{A}^\text{col})$
\State Apply \textsc{RewireAlgo} to $\mathcal{G}^\text{col}$ to obtain additional edges $E_{\text{new}}$
\State Define a new relation $R_{k+1} = E_{\text{new}}$
\State Update the relational structure: $\mathcal{R}^\prime = (\mathcal{S}, R_1, \ldots, R_k, R_{k+1})$
\end{algorithmic}
\end{algorithm}
Adding new connections ($E_{\text{new}}$) without removing existing ones improves the model capacity to capture long-range dependencies while preserving the original relational structure. We experimentally analyze the impact of relational rewiring using real-world and synthetic benchmarks in Sections~\ref{sec:experiments:realworld} and ~\ref{sec:experiments:synthetic}, respectively. Future work could explore rewiring algorithms that remove or reclassify edges, such as spectral pruning \citep{jamadandi2024spectral}, either by re-labeling the edges as new relations or by deletion. We report preliminary empirical results on pruning in Appendix~\ref{appendix:supplementary-analyses}.

\takeawaymessage{3 (Practical)}{
Graph rewiring techniques for improving information flow and mitigating oversquashing can be adapted to relational structures. This approach improves long-range connectivity and enhances message propagation while maintaining the integrity of the original connections.
}

\section{Experiments and Results}
\label{sec:experiments}

\subsection{Real-World Benchmark: Graph Classification}
\label{sec:experiments:realworld}

We run an empirical analysis with real-world datasets to compare the performance of different graph, relational graph, and simplicial message passing models, and the impact of relational rewiring on said models. We provide more details in Appendix~\ref{appendix:experiments}.

\textbf{Task and Datasets.} We use graph classification tasks ENZYMES, IMDB-B, MUTAG, NCI1, and PROTEINS from the TUDataset \citep{morris2020} for evaluation.

\textbf{Graph Lifting.} For the topological message passing models, graphs are separately treated as complexes with only graph nodes ($0$-dimensional simplices) and upper adjacencies, and also lifted into clique and ring complexes (see appendix).

\textbf{Models.} We evaluate three types of models: a) \textit{Graph message passing models}: SGC \citep{wu2019simplifying}, GCN \citep{kipf2017semisupervised}, and GIN \citep{xu2018how}; b) \textit{Relational graph message passing models}: RGCN \citep{schlichtkrull2018modeling} and RGIN; c) \textit{Topological message passing models}: SIN \citep{pmlr-v139-bodnar21a}, CIN \citep{bodnar2021weisfeiler}, and CIN++ \citep{giusti2024cinplusplus}.

\textbf{Relational Rewiring.} We apply relational rewiring for 40 iterations (Section~\ref{sec:rewiring}) using three choices for \textsc{RewireAlgo}: SDRF \citep{topping2022understanding}, FoSR \citep{karhadkar2023fosr}, and AFRC \citep{fesser2023mitigating}. Due to computational constraints, we run the three choices on all datasets except IMDB with Clique graph lifting, where we only run FoSR. We use fixed, dataset- and model-agnostic hyperparameters, diverging from prior work where hyperparameter sweeps are carried out. It is important to note that hyperparameter tuning can significantly impact performance on downstream tasks, as highlighted by, e.g., \citet{tori2024effectiveness}.

\textbf{Training.} Models are trained for up to 500 epochs, with early stopping and learning rate decay based on a validation set. Additional details can be found in Appendix~\ref{appendix:experiments}. Results are reported as \emph{mean $\pm$ standard error} over 10 trials.

\textbf{Results.} Table \ref{tab:test_acc_main} shows test accuracies for the TUDataset experiments.
Rewiring generally boosts performance for base graphs across models and datasets, and the impact of rewiring with our dataset- and model-agnostic choice of hyperparameters varies across datasets, with relational and topological models performance responding to rewiring similarly to graph models.

\begin{table}[htbp]
\centering
\resizebox{\textwidth}{!}{
\begin{tabular}{llllllllllll}
\toprule
 &  & \multicolumn{2}{c}{ENZYMES} & \multicolumn{2}{c}{IMDB-B} & \multicolumn{2}{c}{MUTAG} & \multicolumn{2}{c}{NCI1} & \multicolumn{2}{c}{PROTEINS} \\
Lift & Model & No Rew. & Best Rew. & No Rew. & FoSR & No Rew. & Best Rew. & No Rew. & Best Rew. & No Rew. & Best Rew. \\
\midrule
\multirow[c]{8}{*}{None}
    & SGC & $18.3 \pm 1.2$ & \increased{$21.5 \pm 1.6$} & $49.5 \pm 1.5$ & \increased{$50.0 \pm 1.8$} & $64.5 \pm 5.8$ & \increased{$70.0 \pm 2.6$} & $55.2 \pm 1.0$ & \decreased{$54.4 \pm 0.7$} & $62.2 \pm 1.4$ & \increased{$65.0 \pm 1.5$} \\
    & GCN & $32.2 \pm 2.0$ & \decreased{$30.7 \pm 1.5$} & $49.1 \pm 1.4$ & \decreased{$47.9 \pm 1.0$} & $71.0 \pm 3.8$ & \increased{$83.0 \pm 1.5$} & $48.3 \pm 0.6$ & \increased{$49.1 \pm 0.8$} & \cellcolor{gold}$73.1 \pm 1.2$ & \cellcolor{gold}\increased{$75.8 \pm 1.7$} \\
    & GIN & $47.2 \pm 1.9$ & \increased{$50.0 \pm 2.7$} & \cellcolor{gold}$71.7 \pm 1.5$ & \decreased{$67.1 \pm 1.5$} & $83.0 \pm 3.1$ & \increased{$88.0 \pm 2.4$} & $77.2 \pm 0.5$ & \increased{$77.9 \pm 0.5$} & $70.6 \pm 1.4$ & \increased{$72.2 \pm 0.8$} \\
    \cline{2-12}
    & RGCN & $33.8 \pm 1.6$ & \increased{$42.5 \pm 1.3$} & $47.6 \pm 1.4$ & \increased{$68.0 \pm 1.3$} & $72.5 \pm 2.5$ & \increased{$83.5 \pm 1.8$} & $53.2 \pm 0.7$ & \increased{$63.4 \pm 1.1$} & $71.9 \pm 1.6$ & \cellcolor{silver}\increased{$75.6 \pm 1.2$} \\
    & RGIN & $46.8 \pm 1.8$ & \increased{$49.8 \pm 2.0$} & $69.6 \pm 1.6$ & \decreased{$48.9 \pm 2.9$} & $81.5 \pm 1.7$ & \increased{$85.5 \pm 2.0$} & $76.8 \pm 1.1$ & \increased{$77.0 \pm 0.7$} & $70.8 \pm 1.2$ & \increased{$72.4 \pm 1.4$} \\
    \cline{2-12}
    & SIN & $47.5 \pm 2.3$ & \decreased{$46.8 \pm 2.1$} & $70.0 \pm 1.4$ & \decreased{$ 63.0 \pm 2.7$} & $88.5 \pm 3.0$ & \decreased{$85.5 \pm 1.7$} & $77.0 \pm 0.6$ & \decreased{$76.4 \pm 0.4$} & $70.2 \pm 1.3$ & \increased{$73.2 \pm 1.5$} \\
    & CIN & $50.0 \pm 1.9$ & \decreased{$49.0 \pm 2.0$} & $58.1 \pm 4.0$ & \increased{$58.4 \pm 2.7$} & $86.5 \pm 1.8$ & \increased{$87.0 \pm 2.4$} & $51.4 \pm 2.5$ & \increased{$66.2 \pm 2.0$} & $70.7 \pm 1.0$ & \increased{$71.0 \pm 1.4$} \\
    & CIN++ & $48.5 \pm 1.9$ & \increased{$51.0 \pm 1.5$} & $66.6 \pm 3.7$ & \decreased{$56.0 \pm 3.9$} & $85.0 \pm 3.4$ & \cellcolor{silver}\increased{$91.0 \pm 2.3$} & $60.8 \pm 3.8$ & \increased{$64.8 \pm 3.1$} & $67.9 \pm 1.9$ & \increased{$71.4 \pm 1.4$} \\
\midrule
\multirow[c]{8}{*}{Clique}
    & SGC & $14.5 \pm 1.4$ & \increased{$16.8 \pm 0.9$} & $48.7 \pm 2.2$ & \decreased{$47.8 \pm 1.6$} & $70.0 \pm 3.3$ & \decreased{$69.5 \pm 2.6$} & $50.0 \pm 1.3$ & \increased{$56.8 \pm 0.8$} & $59.9 \pm 1.8$ & \decreased{$59.1 \pm 1.4$} \\
    & GCN & $30.7 \pm 1.2$ & \decreased{$30.2 \pm 2.4$} & $64.0 \pm 3.1$ & \increased{$65.5 \pm 3.1$} & $67.0 \pm 3.5$ & \increased{$81.5 \pm 2.9$} & $48.4 \pm 0.4$ & \increased{$49.6 \pm 0.6$} & $69.9 \pm 0.6$ & \increased{$75.0 \pm 1.4$} \\
    & GIN & $44.0 \pm 1.7$ & \increased{$48.5 \pm 2.2$} & $69.1 \pm 1.2$ & \cellcolor{silver}\increased{$70.8 \pm 1.1$} & $83.0 \pm 2.8$ & \decreased{$82.5 \pm 2.6$} & $78.8 \pm 0.7$ & \decreased{$78.2 \pm 0.6$} & $68.7 \pm 1.4$ & \increased{$72.8 \pm 1.2$} \\
    \cline{2-12}
    & RGCN & $48.8 \pm 1.2$ & \decreased{$45.2 \pm 1.5$} & $71.0 \pm 1.0$ & \decreased{$69.7 \pm 1.5$} & $79.5 \pm 1.7$ & \increased{$81.5 \pm 3.8$} & $72.9 \pm 0.8$ & \increased{$75.0 \pm 0.9$} & \cellcolor{silver}$72.4 \pm 1.6$ & \increased{$74.2 \pm 1.2$} \\
    & RGIN & \cellcolor{silver}$50.8 \pm 1.5$ & \cellcolor{gold}\increased{$55.8 \pm 2.5$} & \cellcolor{silver}$71.6 \pm 0.9$ & \decreased{$69.0 \pm 1.4$} & $86.0 \pm 2.3$ & \decreased{$85.0 \pm 2.4$} & \cellcolor{gold}$79.2 \pm 0.6$ & \cellcolor{silver}\increased{$79.5 \pm 0.4$} & $71.5 \pm 1.5$ & \increased{$71.8 \pm 1.7$} \\
    \cline{2-12}
    & SIN & \cellcolor{gold}$51.0 \pm 2.4$ & \decreased{$46.5 \pm 1.2$} & $53.0 \pm 1.9$ & \increased{$64.0 \pm 2.3$} & $87.0 \pm 3.2$ & \decreased{$83.5 \pm 1.7$} & $76.6 \pm 1.3$ & \decreased{$75.4 \pm 0.7$} & $66.9 \pm 1.3$ & \increased{$70.4 \pm 1.2$} \\
    & CIN & $49.8 \pm 1.9$ & \decreased{$46.7 \pm 1.3$} & $52.6 \pm 2.4$ & \increased{$68.1 \pm 1.6$} & $85.5 \pm 2.8$ & \increased{$86.5 \pm 2.6$} & $51.8 \pm 2.3$ & \increased{$72.5 \pm 0.8$} & $70.7 \pm 1.2$ & \decreased{$70.3 \pm 0.8$} \\
    & CIN++ & $50.5 \pm 2.1$ & \cellcolor{silver}\increased{$52.7 \pm 1.6$} & $62.8 \pm 3.8$ & \increased{$64.7 \pm 1.5$} & \cellcolor{silver}$90.5 \pm 2.2$ & \decreased{$84.5 \pm 3.3$} & $61.5 \pm 4.6$ & \increased{$76.8 \pm 0.4$} & $68.3 \pm 1.3$ & \increased{$71.9 \pm 1.0$} \\
\midrule
\multirow[c]{8}{*}{Ring}
    & SGC & $16.5 \pm 1.6$ & \increased{$19.3 \pm 1.2$} & $50.1 \pm 1.9$ & \decreased{$49.9 \pm 1.9$} & $65.5 \pm 3.6$ & \increased{$75.0\pm5.7$} & $51.5\pm1.2$ & \decreased{$51.4\pm0.4$} & $44.8\pm2.3$ & \increased{$49.3\pm3.6$} \\
    & GCN & $34.8\pm1.3$ & \decreased{$32.0\pm1.4$} & $46.9\pm1.4$ & \increased{$48.0\pm1.2$} & $72.0\pm2.7$ & \increased{$77.5\pm2.4$} & $49.3\pm0.9$ & \increased{$49.4 \pm 0.6$} & $72.2\pm1.3$ & \increased{$72.7 \pm 1.2$} \\
    & GIN & $46.7\pm2.4$ & \increased{$47.0\pm1.6$} & $70.1\pm1.7$ & \cellcolor{gold}\increased{$73.1\pm1.1$} & $88.0\pm2.1$ & \increased{$89.0\pm1.9$} & \cellcolor{silver}$78.9 \pm 0.6$ & \decreased{$77.5\pm0.9$} & $69.8 \pm 1.4$ & \increased{$72.1 \pm 0.9$} \\
    \cline{2-12}
    & RGCN & $35.2\pm1.7$ & \increased{$45.7 \pm 1.5$} & $71.1 \pm 1.4$ & \decreased{$70.0\pm1.6$} & $83.5\pm2.7$ & \increased{$84.0\pm2.1$} & $73.9\pm0.5$ & \decreased{$73.5\pm0.5$} & $70.7\pm1.6$ & \increased{$71.3\pm1.2$} \\
    & RGIN & $45.3\pm1.3$ & \increased{$49.2\pm1.5$} & $68.6\pm1.2$ & \decreased{$67.2\pm1.8$} & $87.0\pm2.9$ & \increased{$87.5\pm2.4$} & $78.4 \pm 0.7$ & \cellcolor{gold}\increased{$79.8 \pm 0.7$} & $68.8\pm1.5$ & \increased{$71.3 \pm 1.5$} \\
    \cline{2-12}
    & SIN & $40.3\pm2.2$ & \increased{$48.0\pm2.0$} & $50.6\pm1.9$ & \increased{$60.9\pm2.1$} & $85.0 \pm 2.1$ & \increased{$88.5 \pm 2.5$} & $80.0\pm0.8$ & \decreased{$79.1\pm0.9$} & $70.6\pm1.1$ & \increased{$72.1\pm0.7$} \\
    & CIN & $47.5\pm2.0$ & \increased{$49.5\pm2.0$} & $48.6\pm1.6$ & \increased{$66.1 \pm 2.0$} & \cellcolor{gold}$93.5\pm2.1$ & \cellcolor{gold}\increased{$95.0\pm1.3$} & $51.6 \pm 3.2$ & \increased{$76.5\pm0.5$} & $68.7\pm1.4$ & \decreased{$68.5\pm1.6$} \\
    & CIN++ & $47.5\pm1.7$ & \decreased{$46.3\pm1.8$} & $66.0\pm1.4$ & \increased{$67.8\pm1.3$} & $85.5\pm2.0$ & \increased{$90.0\pm2.7$} & $56.8\pm4.5$ & \increased{$76.0 \pm 0.6$} & $68.1 \pm 1.2$ & \increased{$70.1 \pm 1.2$} \\
\bottomrule
\end{tabular}
}
\caption{Test accuracy for TUDataset experiments. Each value is presented as the mean $\pm$ standard error across ten trials. The best-performing result for each dataset is highlighted in gold, while the second-best is in silver. The results after rewiring are shown with green text if the mean increased and red text if the mean decreased.}
\label{tab:test_acc_main}

\end{table}

\subsection{Synthetic Benchmark: \textsc{RingTransfer}}
\label{sec:experiments:synthetic}

We confirm the theoretical results from Section~\ref{sec:sensitivity_analysis} using the \textsc{RingTransfer} benchmark, a graph feature transfer task designed to tease out the effect of long-range dependencies in message-passing models using rings of growing size. We follow the experimental setup of \citet{karhadkar2023fosr} and \citet{pmlr-v202-di-giovanni23a}, and provide more details in Appendix~\ref{appendix:experiments:ringtransfer}. We test the impact of neural network hidden dimensions (Section~\ref{subsec:impact-of-hidden-dimensions}), relational structure depth (Section~\ref{subsec:impact-of-depth}), and relational structure local geometry (Sections~\ref{subsec:impact-of-hidden-dimensions} and \ref{sec:rewiring}) on task performance by varying the hidden dimensions, ring sizes, and rewiring iterations. The results, consistent with the theory, demonstrate that increasing network hidden dimensions improves performance up to a point, after which it declines, potentially due to overfitting. Larger ring sizes lead to performance deterioration, as the effects of long-range dependencies and bottlenecks start to take over. At the same time, rewiring improves performance by facilitating communication between distant nodes and mitigating oversquashing. As illustrated in Figure~\ref{fig:ring_transfer}, message passing on graphs and simplicial complexes demonstrate similar trends, consistent with our theoretical predictions.

\begin{figure}[htbp]
    \centering
    \includegraphics[width=.4\textwidth]{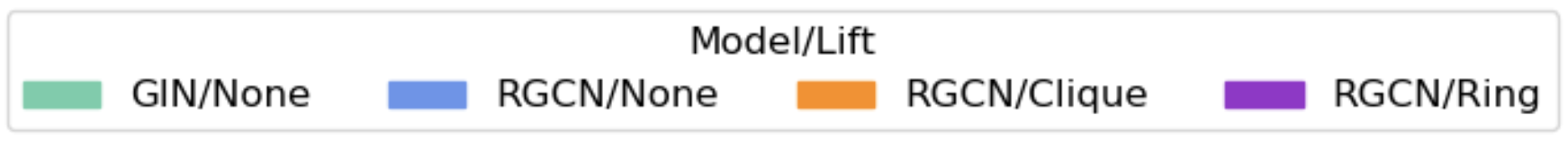}\\
    \begin{subfigure}[b]{0.32\textwidth}
        \centering
        \includegraphics[width=\textwidth]{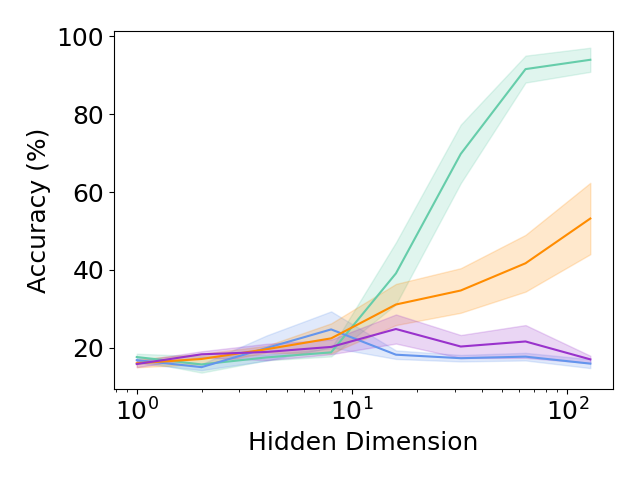}
        \caption{Hidden Dimensions}
    \end{subfigure}
    \begin{subfigure}[b]{0.32\textwidth}
        \centering
        \includegraphics[width=\textwidth]{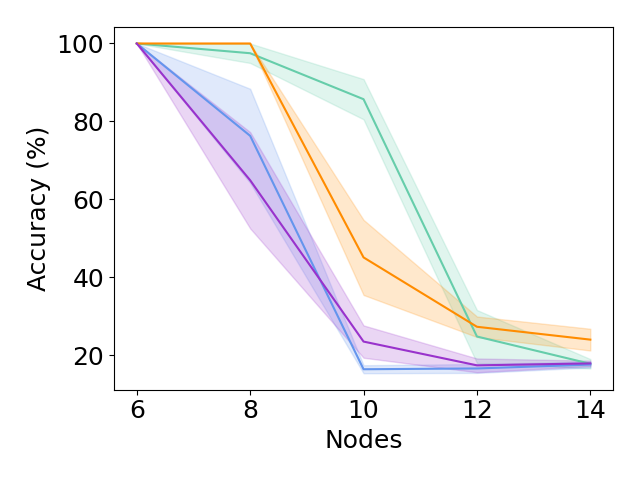}
        \caption{Size}
    \end{subfigure}
    \begin{subfigure}[b]{0.32\textwidth}
        \centering
        \includegraphics[width=\textwidth]{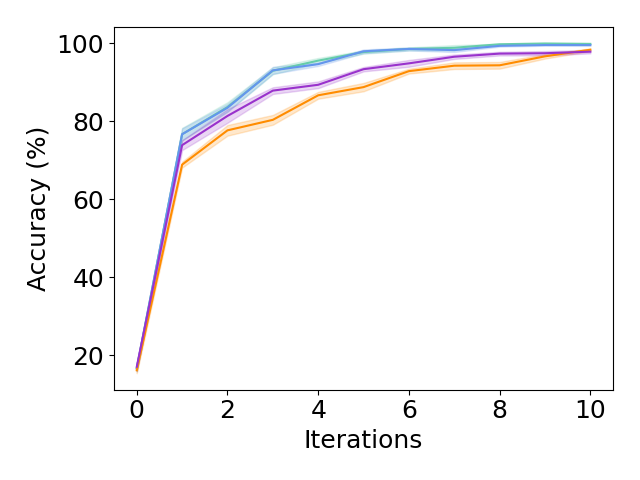}
        \caption{Rewiring}
    \end{subfigure}
    \caption{Performance on \textsc{RingTransfer} obtained by varying model hidden dimensions (left), ring size (middle), and number of rewiring iterations (right). 
    }
    \label{fig:ring_transfer}
\end{figure}

\subsection{Additional Experiments and Analyses}

We report additional analyses in Appendix~\ref{appendix-graph-lifting-curvature} and Appendix~\ref{appendix:graph_lift_wc}. There, we visualize the curvature of relational structures for dumbbell graphs and their corresponding clique complexes. We also reports a statistically significant linear relationship between the weighted curvature of graphs and their lifted clique complexes. These 
interesting patterns merit further investigation. 
We also present the following additional experiments: (1) neighbors match for path of cliques and tree datasets in Appendix~\ref{appendix:neighbors-match}, (2) graph regression for ZINC in Appendix~\ref{appendix:zinc}, (3) node classification for CORNELL, WISCONSIN, TEXAS, CORA, and CITESEER in Appendix~\ref{appendix:node-classification}, (4) simplex pruning on the MUTAG dataset in Appendix~\ref{apdx:prune}, and (5) full TUDataset results in Appendix~\ref{appendix:tudataset-full}.

\section{Discussion and Conclusions}
\label{sec:discussion-and-conclusions}

This work addresses pressing questions about oversquashing in topological networks and higher-order generalizations of rewiring algorithms raised by the TDL community (Questions 2 and 9 of \citealp{pmlr-v235-papamarkou24a}).
We introduce a theoretical framework for unifying graph and topological message passing via relational structures, extending key graph-theoretic results on oversquashing and rewiring strategies to higher-order networks such as simplicial complexes via \emph{influence graphs} that capture the aggregated message passing dynamical structure on relational structures.
Our approach applies broadly to other message-passing schemes, including relational GNNs, high-order GNNs, and CW networks, providing a foundation for future theoretical and empirical research. Empirical results on real-world datasets show that simplicial networks respond to rewiring similarly to graph networks, and synthetic benchmarks further confirm our theoretical findings.

Certain aspects are worthy of further investigation. 
In particular, we compare message passing on graphs and their clique complexes through proxies (e.g., performance on tasks), as the significant differences in size and structure make direct empirical comparisons, e.g., of curvature, less theoretically rigorous. While we observe statistically significant patterns when comparing weighted curvatures, further theoretical and empirical investigation is needed. Furthermore, the rewiring algorithms we applied our relational rewiring heuristic to were not originally designed with weighted directed influence graphs in mind.  
Potentially, further improvements could be obtained by implementing algorithms specifically tailored for rewiring weighted directed graphs. 

For future work, exploring global geometric properties of relational structures, studying oversmoothing, and empirically analyzing more relational message-passing schemes are promising directions. 
Developing theoretical tools and tailored rewiring heuristics for weighted directed graphs
will be crucial, as will be tools for direct comparisons of message-passing across different relational structures.
Furthermore, systematically and empirically assessing our framework's higher-order extensions of more state-of-the-art (SoTA) graph rewiring solutions is essential.
By unifying topological message passing into message passing on relational structures and generalizing graph-based analysis to this setting, we hope that the present work can aid in both rigorous analysis and direct comparison between different higher-order message passing schemes.

Lastly, for practitioners, we recommend topological message passing as yet another relational learning tool with relational rewiring as a preprocessing step.

\paragraph{Reproducibility Statement}
The code for replicating our experiments is available at
\url{https://github.com/chapman20j/Simplicial-Oversquashing}.
Experimental settings and implementation details are described in Section~\ref{sec:experiments}, and Appendices~\ref{appendix:experiments} and ~\ref{appendix:implementation-details}.

\subsubsection*{Acknowledgments}
GM and JC have been supported by NSF CCF-2212520. GM also acknowledges support from NSF DMS-2145630, DFG SPP 2298 project 464109215, 
and BMBF in DAAD project 57616814. 

\bibliography{references}
\bibliographystyle{iclr2025_conference}

\newpage 
\appendix

\addcontentsline{toc}{section}{Appendix} 
\part{Appendix} 
\parttoc

\section{Related Work}
\label{appendix:related-work}

\paragraph{Topological Networks} Topological deep learning (TDL) integrates algebraic topology with neural networks to create message-passing schemes that are more expressive than graph neural networks. An early contribution was the simplicial Weisfeiler-Lehman (SWL) test introduced by \citet{bodnar2021weisfeiler}, which extends the Weisfeiler-Lehman (WL) test from graphs to simplicial complexes. Simplicial neural networks (SNNs) built on SWL generalize graph isomorphism networks (GIN) \citep{xu2018how}, offering provably stronger expressiveness. CW networks (CWNs) \citep{pmlr-v139-bodnar21a} extend message passing to cell complexes, achieving greater power than the traditional WL test and exceeding the $3$-WL test. These hierarchical, geometrically-grounded representations enable effective handling of higher-order interactions. \citet{hajij2020cell} proposed a general message-passing scheme for cell complexes, though it lacks formal analysis of expressiveness and complexity. In contrast, \citet{bodnar2022neural} and \citet{suk2022surfing} introduced neural sheaf diffusion models, which learn sheaf structures over graphs, particularly excelling in heterophilic graph tasks. Attention mechanisms have been integrated into topological message passing in works such as \citet{goh2022simplicial} for simplicial complexes, \citet{barbero2022sheaf} for sheaves, and \citet{giusti2023cell} for cellular complexes. Additionally, \citet{hajij2022higher} extends message-passing to combinatorial complexes. Our work represents a first step toward extending the theory of oversquashing and oversmoothing, widely studied in graph neural networks, to these topological networks.

We note that our relational structures approach generalizes a powerful perspective in which simplicial complexes and similar constructs are treated as augmented Hasse diagrams---a viewpoint that has shown both practical and theoretical advantages. Recent works have leveraged this perspective: \citet{hajij2022topological} provided a general description of topological message passing schemes, \citet{eitan2024topological} explored the expressivity limits of topological message passing, and \citet{papillon2024topotune} introduced a framework for systematically transforming any graph neural network into a topological analog. Our work aligns with and contributes to this growing body of research.

For detailed surveys of topological deep learning architectures, we refer to \citet{papillon2023architectures} and \citet{giusti2024awesome}. We also refer the reader to the recent position paper by \citet{pmlr-v235-papamarkou24a} on open problems in TDL.

\paragraph{Oversquashing, Oversmoothing, and Graph Rewiring} 
Oversquashing refers to the phenomenon where information from distant nodes is compressed into fixed-size vectors during message passing, limiting the ability of a model to capture long-range dependencies. In GNNs, this has been extensively studied, with works such as those of \citet{alon2021on}, \citet{topping2022understanding}, and \citet{pmlr-v202-di-giovanni23a} naming and relating the phenomenon to the geometric properties of graphs. 
To address this issue, numerous techniques have been proposed, including spatial and curvature-based rewiring methods such as SDRF \citep{topping2022understanding}, BORF \citep{pmlr-v202-nguyen23c}, and AFRC \citep{fesser2023mitigating}, which modify the graph structure to improve connectivity and alleviate oversquashing. 
Spectral rewiring approaches, such as FOSR \citep{karhadkar2023fosr}, optimize the graph spectral gap to enhance long-range message propagation, while implicit methods such as Graph Beltrami Diffusion \citep{chamberlain2021beltrami} and Graph Transformers \citep{dwivedi2020generalization} allow information flow across a fully connected graph without explicitly modifying the topology.

Related to oversquashing, another significant challenge in GNNs is oversmoothing, where node features become indistinguishable as they are excessively aggregated through layers of message passing. \citet{li2018deeper}, \citet{oono2019graph}, and \citet{nt2019revisiting} have identified and theoretically analyzed oversmoothing, showing that it limits the effectiveness of deep GNNs. Generally, there is a trade-off between mitigating oversquashing and avoiding oversmoothing, and various rewiring techniques aim to balance these competing objectives.

Following foundational work on understanding and mitigating oversquashing and oversmoothing in graph neural networks, the field has rapidly evolved with various innovative and sophisticated approaches in recent years. For example, \citet{liu2023curvdrop} introduced CurvDrop, a Ricci curvature-based, topology-aware method to address both failure modes of message passing, while \citet{sun2023deepricci} proposed DeepRicci, a self-supervised Riemannian model designed to alleviate oversquashing. \citet{qian2023probabilistic} developed a probabilistic framework leveraging differentiable $k$-subset sampling, and \citet{shen2024handling} introduced a technique based on maximization operations in graph convolution. \citet{shen2024graph} used effective resistance for graph rewiring and preprocessing. \citet{li2024addressing} combined graph rewiring with ordered neurons, while \citet{stanovic2024graph} proposed maximal independent set-based pooling to mitigate both oversquashing and oversmoothing. \citet{shi2024design} presented a physics-informed, agnostic method targeting both failure modes, and \citet{peimulti} addressed them through multi-track message passing. \citet{attalidelaunay} employed Delaunay triangulation of features, while \citet{huang2024how} proposed particular spectral filters to improve spectral GNNs. Collectively, these works demonstrate the breadth and depth of recent advancements, highlighting a vibrant and rapidly evolving research landscape.

However, the study of oversquashing and oversmoothing in topological message passing remains largely unexplored. While works such as those of \citet{pmlr-v139-bodnar21a}, \citet{bodnar2021weisfeiler}, and \cite{giusti2024cinplusplus} have alluded to the potential of topological message passing to capture group interactions and long-range dependencies, a theoretical analysis of oversquashing and oversmoothing in higher-order structures like simplicial or cellular complexes is, to the best of our knowledge, absent from the literature. 
Our work aims to fill this gap, providing a rigorous first step in studying oversquashing in topological message passing. Moreover, our framework for extending graph rewiring to relational rewiring (Section~\ref{sec:rewiring}), along with future innovations from the topological deep learning community, promises a wealth of topological and relational rewiring techniques for further assessment and refinement.

For more exhaustive expositions on oversquashing and oversmoothing, we refer the reader to the excellent recent surveys of \citet{shi2023exposition} and \citet{rusch2023survey}. 

\paragraph{Relational Learning} Relational graph neural networks (R-GNNs) extend traditional GNNs to handle multi-relational data, particularly in the context of knowledge graphs, where nodes represent entities and edges capture diverse types of relationships between them. Knowledge graphs, such as those used in knowledge representation and reasoning tasks \citep{nickel2015review}, are inherently multi-relational and benefit significantly from RGNNs, which model the varying nature of relations explicitly. The relational graph convolutional network (R-GCN) \citep{schlichtkrull2018modeling} is a foundational approach that assigns distinct transformations to each relation type, allowing for efficient representation learning on multi-relational graphs. Extensions such as Relational Graph Attention Networks (RGAT) \citep{velivckovic2018graph} incorporate attention mechanisms, enabling the model to focus on the most important relations during message passing. Additionally, Composition-based Graph Convolutional Networks (CompGCN) \citep{vashishth2019composition} apply compositional operators to better capture interactions in knowledge graphs.

Recent works in relational learning have introduced various extensions for modeling multi-relational and higher-order interactions. Hypergraph-based approaches, such as \citet{fatemi2023knowledge} and \citet{huang2024link}, unify relational reasoning with hypergraph structures, enabling the modeling of higher-order relationships. Message-passing frameworks like the one proposed by \citet{yadati2020neural} extend traditional GNN paradigms to ordered and recursive hypergraphs. Additionally, tensor decomposition methods such as GETD \citep{liu2020generalizing} represent hypergraphs as high-dimensional tensors, allowing efficient encoding of hyper-relational data. We refer the reader to the excellent survey by \citet{antelmi2023survey} for a comprehensive overview of relational hypergraphs and their representation learning techniques. \citet{robinson2024relational} recently extended relational learning to relational databases consisting of data laid out across multiple tables. Lastly, topological deep learning, which is one focus of this work, is a new frontier in relational learning \citep{pmlr-v235-papamarkou24a}.

Our work unifies relational graph neural networks and topological neural networks by viewing complexes as relational structures, bridging the gap between the two fields.

\paragraph{Graph Lifting} Graph lifting transforms a graph into a higher-dimensional structure to enable more expressive message-passing schemes. For example, higher-order graph neural networks ($k$-GNNs) \citep{morris2019weisfeiler} lift graphs by representing $k$-node subgraphs as entities, capturing more complex relationships between nodes. Similarly, \citet{chen2019equivalence} and \citet{maron2019provably} proposed lifting graphs to higher-order structures to improve graph isomorphism testing. In topological message passing, graphs are lifted into structures like clique complexes to capture interactions beyond pairwise relationships. Recently, this topic has gained increased attention, as highlighted by the ICML Topological Deep Learning Challenge 2024 \citep{icml2024challenge}, which emphasized the development of topological lifting techniques across various data structures, including graphs, hypergraphs, and simplicial complexes. Among these, high-order graphs, simplicial complexes, and cellular complexes fit naturally within the relational structure framework and can be analyzed uniformly through this lens.

\section{Additional Remarks}
\label{appendix:additional-remarks}

\subsection{Remarks for Section~\ref{subsec:impact-of-local-geometry}}
\label{appendix-remarks-subsec=impact-of-local-geometry}

\begin{remark}[Ollivier-Ricci Curvature for Weighted Directed Graphs]
\label{remark:orc-weighted-directed-definition}
We recall the definition of \emph{Ollivier-Ricci curvature} (ORC) for weighted directed graphs from \citet{Eidi}. Consider a weighted directed graph $\mathcal{G} = (\mathcal{S}, \mathcal{E}, w)$, where $\mathcal{S}$ is the set of entities (nodes), $\mathcal{E}$ is the set of directed edges, and $w: \mathcal{E} \to \mathbb{R}_{\geq 0}$ assigns non-negative weights to edges. We abuse notation, and for any two entities $\xi, \eta \in \mathcal{S}$ without an edge $(\xi \to \eta) \in \mathcal{E}$, we write $w_{\xi \to \eta} = 0$. For each entity $\sigma \in \mathcal{S}$, denote the \emph{weighted out-degree} and \emph{weighted in-degree}, as in Definition~\ref{definition:motif-counts}, by:
\[
w_{\sigma}^{\mathrm{out}} = \sum_{(\sigma \to \eta) \in \mathcal{E}} w_{\sigma \to \eta}, \quad
w_{\sigma}^{\mathrm{in}} = \sum_{(\xi \to \sigma) \in \mathcal{E}} w_{\xi \to \sigma}.
\]
For a directed edge $(\sigma \to \tau) \in \mathcal{E}$ with non-zero weight $w_{\sigma \to \tau} > 0$, we define the probability measures: 
\[
\mu_{\tau}^{\mathrm{out}}(\xi) = \frac{w_{\tau \to \xi}}{w_{\tau}^{\mathrm{out}}}, \quad \mu_{\sigma}^{\mathrm{in}}(\xi) = \frac{w_{\xi \to \sigma}}{w_{\sigma}^{\mathrm{in}}}, \quad \xi \in \mathcal{S}.
\]
The \emph{Ollivier-Ricci curvature} of an edge $(\sigma \to \tau)$ with non-zero weight $w_{\sigma \to \tau}$ is then defined as:
\[
k(\sigma, \tau) = 1 - \frac{W\left(\mu_{\sigma}^{\mathrm{in}},\, \mu_{\tau}^{\mathrm{out}}\right)}{w_{\sigma \to \tau}},
\]
where $W\left(\mu_{\sigma}^{\mathrm{in}},\, \mu_{\tau}^{\mathrm{out}}\right)$ is the (directed) Wasserstein distance between the measures $\mu_{\sigma}^{\mathrm{in}}$ and $\mu_{\tau}^{\mathrm{out}}$, defined by:
\[
W\left(\mu_{\sigma}^{\mathrm{in}},\, \mu_{\tau}^{\mathrm{out}}\right) = \inf_{\pi \in \Pi(\mu_{\sigma}^{\mathrm{in}}, \mu_{\tau}^{\mathrm{out}})} \sum_{\xi, \eta \in \mathcal{S}} \pi(\xi, \eta) d(\xi, \eta),
\]
where $\Pi(\mu_{\sigma}^{\mathrm{in}}, \mu_{\tau}^{\mathrm{out}})$ is the set of all joint probability measures on $\mathcal{S} \times \mathcal{S}$ with marginals $\mu_{\sigma}^{\mathrm{in}}$ and $\mu_{\tau}^{\mathrm{out}}$, and $d(\xi, \eta)$ is the distance from $\xi$ (incoming neighbor of $\sigma$) to $\eta$ (outgoing neighbor of $\tau$) in the graph $\mathcal{G}$ (the shortest directed path distance based on edge weights).
\end{remark}

\begin{remark}
\label{remark:orc-sensitivity}
Assume that the edge $(\sigma \to \tau) \in \mathcal{E}$ exists and has non-zero weight $w_{\sigma \to \tau} > 0$ in the influence graph $\mathcal{G}(\mathcal{S}, \mathbf{B})$. Then, the sensitivity bound for information flow from $\tau$ to $\sigma$ from Lemma~\ref{lemma: jacobian bound higher order} can be related to the Ollivier-Ricci curvature $k(\sigma, \tau)$ of the edge $(\sigma \to \tau)$ as follows:
\[
\left\Vert \frac{\partial \mathbf{h}_{\sigma}^{(2)}}{\partial \mathbf{h}_{\tau}^{(0)}} \right\Vert_{1} \leq \left( \prod_{\ell=0}^{1} \alpha^{(\ell)} \beta^{(\ell)} \right) w_{\tau}^{\mathrm{out}} w_{\sigma}^{\mathrm{in}} \left(1 - \dfrac{w_{\sigma \to \tau}}{w_{\mathrm{max}}^3} (1 - k(\sigma, \tau))\right),
\]
where $w_{\mathrm{max}}^3$ is the maximum weighted $3$-step path from incoming neighbors of $\sigma$ to outgoing neighbors of $\tau$. This result indicates that lower Ollivier-Ricci curvature (smaller $k(\sigma, \tau)$) leads to reduced sensitivity, thereby contributing to oversquashing. The connection leverages the existence of the reversed edge $(\sigma \to \tau)$ in addition to the edge $(\tau \to \sigma)$ over which information flows, which is not guaranteed to be the case in a generic relational message passing scheme, but is the case for undirected graphs where $k(\sigma, \tau)= k(\tau, \sigma)$. This result aligns with Theorem~4.5 from \citet{pmlr-v202-nguyen23c} but without requiring their assumption of linearity in the message and update functions. Analyzing oversquashing for connections beyond $2$-steps requires stronger assumptions, such as the influence graph $\mathcal{G}(\mathcal{S}, \mathbf{B})$ being strongly connected. These assumptions present obstructions to analyzing oversquashing in general relational structures with Ollivier-Ricci curvature.
\end{remark}

We provide the proof in Appendix~\ref{appendix-proofs-local-geometry}.

\subsection{Remarks for Section~\ref{subsec:impact-of-hidden-dimensions}}
\label{appendix-remarks-subsec-impact-of-hidden-dimensions}

\begin{remark}[Lipschitz Constants for MLP Message and Update Functions]
\label{remark:bounded_jacobian_special_cases}
Consider the following message and update functions at layer $t$:

\textbf{Message function}:
\begin{equation*}
\boldsymbol{\psi}_{i}^{(t)}\left( \mathbf{h}_\sigma^{(t)}, \mathbf{h}_{\xi_1}^{(t)}, \dots, \mathbf{h}_{\xi_{n_i-1}}^{(t)} \right) = \mathbf{W}_{i}^{(t)} \begin{bmatrix}
\mathbf{h}_\sigma^{(t)} \\ \mathbf{h}_{\xi_1}^{(t)} \\ \vdots \\ \mathbf{h}_{\xi_{n_i-1}}^{(t)}
\end{bmatrix},
\end{equation*}
where $\mathbf{W}_{i}^{(t)}$ is a weight matrix of appropriate dimensions, and $\begin{bmatrix} \cdot \end{bmatrix}$ denotes column-wise concatenation.

\textbf{Update function}:
\begin{equation*}
\boldsymbol{\phi}^{(t)}\left( \mathbf{m}_{\sigma,1}^{(t)}, \dots, \mathbf{m}_{\sigma,k}^{(t)} \right) = \mathbf{f}\left( \mathbf{W}^{(t)} \begin{bmatrix}
\mathbf{g}\left( \mathbf{m}_{\sigma,1}^{(t)} \right) \\ \vdots \\ \mathbf{g}\left( \mathbf{m}_{\sigma,k}^{(t)} \right)
\end{bmatrix} \right),
\end{equation*}
where $\mathbf{f}$ and $\mathbf{g}$ are the component-wise applications of non-linear functions $f$ and $g$ with bounded derivatives $C_f$ and $C_g$, respectively. I.e., $f, g: \mathbb{R} \to \mathbb{R}$, and $|f^\prime(x)| \leq C_f$ and $|g^\prime(x)| \leq C_g$ for all $x$.

Assume that the entries of all weight matrices $\mathbf{W}_{i}^{(t)}$ and $\mathbf{W}^{(t)}$ are bounded in absolute value by a constant $C_w > 0$. Then, the Lipschitz constants $\beta_{i}^{(t)}$ and $\alpha^{(t)}$ satisfy:
\begin{equation*}
\beta_{i}^{(t)} \leq C_w p_{i,t},
\end{equation*}
and
\begin{equation*}
\alpha^{(t)} \leq C_w C_f C_g p_{t+1} ,
\end{equation*}
where $p_{i,t}$ is the dimension of the message vector $\mathbf{m}_{\sigma,i}^{(t)}$, and $p_{t+1}$ is the output dimension of the update function $\boldsymbol{\phi}_t$.
\end{remark} 

We provide the proof in Appendix~\ref{appendix-proof-hidden-dimensions}.

\section{Proofs}
\label{appendix:proofs}

\subsection{Proofs for Section~\ref{subsec:sensitivity-analysis}}
\label{appendix-subsec-proofs-sensitivity-analysis}

\begin{proof}[Proof of Lemma~\ref{lemma: jacobian bound higher order}]
First, we compute the Jacobian of Equation~\ref{eq: update rule higher order}:
\begin{align*}
\frac{\partial \mathbf{h}_\sigma^{(s+1)}}{\partial \mathbf{h}_\tau^{(0)}}
&= \sum_{i=1}^k \frac{\partial \boldsymbol{\phi}^{(s)}}{\partial \mathbf{m}_i^{(s)}} \frac{\partial \mathbf{m}_{\sigma,i}^{(s)}}{\partial \mathbf{h}_\tau^{(0)}} \\
&= \sum_{i=1}^k \left( \frac{\partial \boldsymbol{\phi}^{(s)}}{\partial \mathbf{m}_i^{(s)}} \right) \sum_{\boldsymbol{\xi} \in \mathcal{S}^{n_i - 1}} \mathbf{A}^{R_i}_{\sigma, \boldsymbol{\xi}} \left[ \frac{\partial \boldsymbol{\psi}_{i}^{(s)}}{\partial \mathbf{h}_\sigma^{(s)}} \frac{\partial \mathbf{h}_\sigma^{(s)}}{\partial \mathbf{h}_\tau^{(0)}} + \sum_{j=1}^{n_i-1} \frac{\partial \boldsymbol{\psi}_{i}^{(s)}}{\partial \mathbf{h}_{\xi_j}^{(s)}} \frac{\partial \mathbf{h}_{\xi_j}^{(s)}}{\partial \mathbf{h}_\tau^{(0)}} \right].
\end{align*}
By the submultiplicative and additive properties of the induced 1-norm (maximum absolute column sum) and the boundedness of the Jacobians of the functions $\boldsymbol{\phi}^{(s)}$ and $\boldsymbol{\psi}_{i}^{(s)}$ (Assumption~\ref{assump:bounded_jacobian}), we get:
\begin{align*}
\left\Vert \frac{\partial \mathbf{h}_\sigma^{(s+1)}}{\partial \mathbf{h}_\tau^{(0)}} \right\Vert_1
&\leq \sum_{i=1}^k \left\Vert \frac{\partial \boldsymbol{\phi}^{(s)}}{\partial \mathbf{m}_i^{(s)}} \right\Vert_1 \sum_{\boldsymbol{\xi} \in \mathcal{S}^{n_i - 1}} \mathbf{A}^{R_i}_{\sigma, \boldsymbol{\xi}} \left[ \left\Vert \frac{\partial \boldsymbol{\psi}_{i}^{(s)}}{\partial \mathbf{h}_\sigma^{(s)}} \right\Vert_1 \left\Vert \frac{\partial \mathbf{h}_\sigma^{(s)}}{\partial \mathbf{h}_\tau^{(0)}} \right\Vert_1 + \sum_{j=1}^{n_i-1} \left\Vert \frac{\partial \boldsymbol{\psi}_{i}^{(s)}}{\partial \mathbf{h}_{\xi_j}^{(s)}} \right\Vert_1 \left\Vert \frac{\partial \mathbf{h}_{\xi_j}^{(s)}}{\partial \mathbf{h}_\tau^{(0)}} \right\Vert_1 \right] \\
&\leq \alpha^{(s)} \sum_{i=1}^k \beta_{i}^{(s)} \sum_{\boldsymbol{\xi} \in \mathcal{S}^{n_i - 1}} \mathbf{A}^{R_i}_{\sigma, \boldsymbol{\xi}} \left[ \left\Vert \frac{\partial \mathbf{h}_\sigma^{(s)}}{\partial \mathbf{h}_\tau^{(0)}} \right\Vert_1 + \sum_{j=1}^{n_i-1} \left\Vert \frac{\partial \mathbf{h}_{\xi_j}^{(s)}}{\partial \mathbf{h}_\tau^{(0)}} \right\Vert_1 \right] \\
&\leq \alpha^{(s)} \beta^{(s)} \sum_{i=1}^k \sum_{\boldsymbol{\xi} \in \mathcal{S}^{n_i - 1}} \mathbf{A}^{R_i}_{\sigma, \boldsymbol{\xi}} \left[ \left\Vert \frac{\partial \mathbf{h}_\sigma^{(s)}}{\partial \mathbf{h}_\tau^{(0)}} \right\Vert_1 + \sum_{j=1}^{n_i-1} \left\Vert \frac{\partial \mathbf{h}_{\xi_j}^{(s)}}{\partial \mathbf{h}_\tau^{(0)}} \right\Vert_1 \right] \\
&\leq \alpha^{(s)} \beta^{(s)} \left[ \gamma \left\Vert \frac{\partial \mathbf{h}_\sigma^{(s)}}{\partial \mathbf{h}_\tau^{(0)}} \right\Vert_1 + \sum_{i=1}^k \sum_{\boldsymbol{\xi} \in \mathcal{S}^{n_i - 1}} \mathbf{A}^{R_i}_{\sigma, \boldsymbol{\xi}} \sum_{j=1}^{n_i-1} \left\Vert \frac{\partial \mathbf{h}_{\xi_j}^{(s)}}{\partial \mathbf{h}_\tau^{(0)}} \right\Vert_1 \right].
\end{align*}
Here, we used that the entries of $\mathbf{A}^{R_i}$ are nonnegative, and $\sum_{i=1}^k \sum_{\boldsymbol{\xi} \in \mathcal{S}^{n_i - 1}} \mathbf{A}^{R_i}_{\sigma, \boldsymbol{\xi}} \leq \gamma$.

We now prove the lemma using induction on the layer $t$. For the base case $t = 1$, we get
\begin{align*}
\left\Vert \frac{\partial \mathbf{h}_\sigma^{(1)}}{\partial \mathbf{h}_\tau^{(0)}} \right\Vert_1
&\leq \alpha^{(0)} \beta^{(0)} \left[ \gamma \left\Vert \frac{\partial \mathbf{h}_\sigma^{(0)}}{\partial \mathbf{h}_\tau^{(0)}} \right\Vert_1 + \sum_{i=1}^k \sum_{\boldsymbol{\xi} \in \mathcal{S}^{n_i - 1}} \mathbf{A}^{R_i}_{\sigma, \boldsymbol{\xi}} \sum_{j=1}^{n_i-1} \left\Vert \frac{\partial \mathbf{h}_{\xi_j}^{(0)}}{\partial \mathbf{h}_\tau^{(0)}} \right\Vert_1 \right] \\
&= \alpha^{(0)} \beta^{(0)} \left[ \gamma \mathbf{I}_{\sigma,\tau} + \sum_{i=1}^k \sum_{\boldsymbol{\xi} \in \mathcal{S}^{n_i - 1}} \mathbf{A}^{R_i}_{\sigma, \boldsymbol{\xi}} \sum_{j=1}^{n_i-1} \mathbf{I}_{\xi_j,\tau} \right] \\ 
&= \alpha^{(0)} \beta^{(0)} \left[ \gamma \mathbf{I}_{\sigma,\tau} + \sum_{i=1}^k \tilde{\mathbf{A}}_{\sigma,\tau}^{R_i} \right] \\
&= \alpha^{(0)} \beta^{(0)} \left( \gamma \mathbf{I} + \sum_{i=1}^k \tilde{\mathbf{A}}^{R_i} \right)_{\sigma, \tau} \\
&= \alpha^{(0)} \beta^{(0)} \left(\mathbf{B}^1\right)_{\sigma,\tau},
\end{align*}
where we used that $\tilde{\mathbf{A}}^{R_i}_{\sigma, \tau} = \sum_{j=1}^{n_i-1} \sum_{\boldsymbol{\xi} \in \mathcal{S}^{n_i - 2}} \mathbf{A}^{R_i}_{\sigma, \xi_1, ..., \xi_{j-1}, \tau, \xi_j, ..., \xi_{n_i-2}}$. This proves the base case.

For the induction step, assume the bound holds for $t$. We now compute:
\begin{align*}
\left\Vert \frac{\partial \mathbf{h}_\sigma^{(t+1)}}{\partial \mathbf{h}_\tau^{(0)}} \right\Vert_1
&\leq \alpha^{(t)} \beta^{(t)} \left[ \gamma \left\Vert \frac{\partial \mathbf{h}_\sigma^{(t)}}{\partial \mathbf{h}_\tau^{(0)}} \right\Vert_1 + \sum_{i=1}^k \sum_{\boldsymbol{\xi} \in \mathcal{S}^{n_i - 1}} \mathbf{A}^{R_i}_{\sigma, \boldsymbol{\xi}} \sum_{j=1}^{n_i-1} \left\Vert \frac{\partial \mathbf{h}_{\xi_j}^{(t)}}{\partial \mathbf{h}_\tau^{(0)}} \right\Vert_1 \right] \\
&\leq \left(\prod_{\ell=0}^t \alpha^{(\ell)}\beta^{(\ell)}\right) \left[ \gamma \left(\mathbf{B}^{t} \right)_{\sigma,\tau} + \sum_{i=1}^k \sum_{\boldsymbol{\xi} \in \mathcal{S}^{n_i - 1}} \mathbf{A}^{R_i}_{\sigma, \boldsymbol{\xi}} \sum_{j=1}^{n_i-1} \left( \mathbf{B}^{t} \right)_{\xi_j,\tau} \right] \\
&= \left(\prod_{\ell=0}^t \alpha^{(\ell)}\beta^{(\ell)}\right) \left[ \left(\gamma \mathbf{I}\mathbf{B}^{t} \right)_{\sigma,\tau} + \left( \left( \sum_{i=1}^k \tilde{\mathbf{A}}^{R_i} \right) \mathbf{B}^{t} \right)_{\sigma, \tau} \right] \\
&= \left(\prod_{\ell=0}^t \alpha^{(\ell)}\beta^{(\ell)}\right) \left(\left(\gamma \mathbf{I} + \sum_{i=1}^k \tilde{\mathbf{A}}^{R_i} \right) \mathbf{B}^{t} \right)_{\sigma, \tau} \\
&= \left(\prod_{\ell=0}^t \alpha^{(\ell)}\beta^{(\ell)}\right) \left( \mathbf{B}^{t+1} \right)_{\sigma, \tau} .
\end{align*}
To see this, note that:
\begin{align*}
\left( \left( \sum_{i=1}^k \tilde{\mathbf{A}}^{R_i} \right) \mathbf{B}^t \right)_{\sigma, \tau}
&= \sum_{i=1}^k \sum_{\nu \in \mathcal{S}} \tilde{\mathbf{A}}^{R_i}_{\sigma, \nu} \left(\mathbf{B}^t\right)_{\nu, \tau} \\
&= \sum_{i=1}^k \sum_{j=1}^{n_i-1} \sum_{\boldsymbol{\xi} \in \mathcal{S}^{n_i - 2}} \sum_{\nu \in \mathcal{S}} \mathbf{A}^{R_i}_{\sigma, \xi_1, ..., \xi_{j-1}, \nu, \xi_j, ..., \xi_{n_i-2}} \left(\mathbf{B}^t\right)_{\nu, \tau} \\
&= \sum_{i=1}^k \sum_{\boldsymbol{\xi} \in \mathcal{S}^{n_i - 1}} \mathbf{A}^{R_i}_{\sigma, \boldsymbol{\xi}} \sum_{j=1}^{n_i-1} \left(\mathbf{B}^t\right)_{\xi_j, \tau}.
\end{align*}
This completes the proof.
\end{proof}

\subsection{Proofs for Section~\ref{subsec:impact-of-local-geometry} and Appendix~\ref{appendix-remarks-subsec=impact-of-local-geometry}}
\label{appendix-subsec-proofs-impact-of-local-geometry}
\label{appendix-proofs-local-geometry}

\begin{proof}[Proof of Proposition~\ref{prop:local-geometry}]
In the influence graph $\mathcal{G}(\mathcal{S}, \mathbf{B})$, the edge weights correspond to the entries of $\mathbf{B}$, that is, $w_{\tau \to \sigma} = \mathbf{B}_{\sigma, \tau}$. Therefore, the $(\sigma, \tau)$ entry of $\mathbf{B}^2$ is
\[
(\mathbf{B}^2)_{\sigma, \tau} = \sum_{\xi \in \mathcal{S}} \mathbf{B}_{\sigma, \xi} \mathbf{B}_{\xi, \tau} = \sum_{\xi \in \mathcal{S}} w_{\xi \to \sigma} \cdot w_{\tau \to \xi} = w_T.
\]
From Lemma~\ref{lemma: jacobian bound higher order} with $t = 2$, we have
\[
\left\Vert \frac{\partial \mathbf{h}_{\sigma}^{(2)}}{\partial \mathbf{h}_{\tau}^{(0)}} \right\Vert_{1} \leq \left( \prod_{\ell=0}^{1} \alpha^{(\ell)} \beta^{(\ell)} \right) (\mathbf{B}^2)_{\sigma, \tau} = \left( \prod_{\ell=0}^{1} \alpha^{(\ell)} \beta^{(\ell)} \right) w_T.
\]
Rewriting the curvature formula to solve for $w_T$, and since $w_F \geq 0$, we get
\begin{align*}
w_T &= \frac{1}{3} \left( \mathrm{EFC}_{\mathcal{G}}(\tau, \sigma) + w_{\tau}^{\mathrm{out}} + w_{\sigma}^{\mathrm{in}} - 4 - 2 w_F \right) \\
    &\leq \frac{1}{3} \left( \mathrm{EFC}_{\mathcal{G}}(\tau, \sigma) + w_{\tau}^{\mathrm{out}} + w_{\sigma}^{\mathrm{in}} - 4 \right).
\end{align*}
Substituting back, we obtain the desired inequality. This completes the proof.
\end{proof}

\begin{proof}[Proof of Remark~\ref{remark:orc-sensitivity}]
Consider the influence graph $\mathcal{G}(\mathcal{S}, \mathbf{B}) = (\mathcal{S}, \mathcal{E}, w)$ derived from the matrix $\mathbf{B}$, where the edge weights correspond to the entries of $\mathbf{B}$: $w_{\xi \to \eta} = \mathbf{B}_{\eta, \xi}$. Assume that the reversed edge $(\sigma \to \tau) \in \mathcal{E}$ exists with weight $w_{\sigma \to \tau} > 0$. From Lemma~\ref{lemma: jacobian bound higher order} with $t = 2$, we have:
\[
\left\Vert \frac{\partial \mathbf{h}_{\sigma}^{(2)}}{\partial \mathbf{h}_{\tau}^{(0)}} \right\Vert_{1} \leq \left( \prod_{\ell=0}^{1} \alpha^{(\ell)} \beta^{(\ell)} \right) (\mathbf{B}^2)_{\sigma, \tau} = \left( \prod_{\ell=0}^{1} \alpha^{(\ell)} \beta^{(\ell)} \right) \sum_{\xi \in \mathcal{S}} w_{\xi \to \sigma} \cdot w_{\tau \to \xi} = \left( \prod_{\ell=0}^{1} \alpha^{(\ell)} \beta^{(\ell)} \right) w_T,
\]
where $w_T = \sum_{\xi} w_{\tau \to \xi} \cdot w_{\xi \to \sigma}$.

We construct a \emph{transference plan} to transport mass from $\mu_{\sigma}^{\mathrm{in}}$ to $\mu_{\tau}^{\mathrm{out}}$: We do not move $\frac{w_T}{w_\tau^\mathrm{out} w_\sigma^\mathrm{in}}$, but move the rest of the mass with a cost at most $w_\mathrm{max}^3$ from the incoming neighbors of $\sigma$ to the outgoing neighbors of $\tau$, and we get:
\[
W\left(\mu_{\sigma}^{\mathrm{in}},\, \mu_{\tau}^{\mathrm{out}}\right) \leq \left(1 - \frac{w_T}{w_{\sigma}^{\mathrm{in}} w_{\tau}^{\mathrm{out}}}\right) w_{\mathrm{max}}^3.
\]
The Ollivier-Ricci curvature of the edge $(\sigma \to \tau)$ is:
\[
k(\sigma, \tau) = 1 - \frac{W\left(\mu_{\sigma}^{\mathrm{in}},\, \mu_{\tau}^{\mathrm{out}}\right)}{w_{\sigma \to \tau}}.
\]
Substituting and rearranging, we obtain:
\[
w_T \leq w_{\sigma}^{\mathrm{in}} w_{\tau}^{\mathrm{out}} \left(1 - \dfrac{w_{\sigma \to \tau}}{w_{\mathrm{max}}^3} (1 - k(\sigma, \tau))\right).
\]
We thus get:
\[
\left\Vert \dfrac{\partial \mathbf{h}_{\sigma}^{(2)}}{\partial \mathbf{h}_{\tau}^{(0)}} \right\Vert_{1} \leq \left( \prod_{\ell=0}^{1} \alpha^{(\ell)} \beta^{(\ell)} \right) w_T \leq \left( \prod_{\ell=0}^{1} \alpha^{(\ell)} \beta^{(\ell)} \right) w_{\tau}^{\mathrm{out}} w_{\sigma}^{\mathrm{in}} \left(1 - \dfrac{w_{\sigma \to \tau}}{w_{\mathrm{max}}^3} (1 - k(\sigma, \tau))\right).
\]
This completes the proof.
\end{proof}

\subsection{Proofs for Section~\ref{subsec:impact-of-depth}}
\label{appendix-subsec-proofs-impact-of-depth}

\begin{proof}[Proof of Theorem~\ref{theorem: impact of depth}]
We start with the bound from Lemma \ref{lemma: jacobian bound higher order} and expand the right-hand side using the binomial theorem:

\begin{eqnarray*}
\left\|\frac{\partial \mathbf{h}_\sigma^{(r+m)}}{\partial \mathbf{h}_\tau^{(0)}}\right\|_1 &\leq& \left(\prod_{\ell=0}^{r+m-1} \alpha^{(\ell)} \beta^{(\ell)}\right)(\gamma \mathbf{I} + \tilde{\mathbf{A}})^{r+m}_{\sigma,\tau} \\
&=& \left(\prod_{\ell=0}^{r+m-1} \alpha^{(\ell)} \beta^{(\ell)}\right) \sum_{i=0}^{r+m} \binom{r+m}{i} \gamma^{r+m-i} \left(\tilde{\mathbf{A}}^i\right)_{\sigma,\tau}.
\end{eqnarray*}

Since the combinatorial distance from $\tau$ to $\sigma$ in the graph $\mathcal{G}(\mathcal{S}, \tilde{\mathbf{A}})$ is $r$, the first $r-1$ terms of the sum vanish:
\begin{equation*}
\left\|\frac{\partial \mathbf{h}_\sigma^{(r+m)}}{\partial \mathbf{h}_\tau^{(0)}}\right\|_1 \leq \left(\prod_{\ell=0}^{r+m-1} \alpha^{(\ell)} \beta^{(\ell)}\right) \sum_{i=r}^{r+m} \binom{r+m}{i} \gamma^{r+m-i} \left(\tilde{\mathbf{A}}^i\right)_{\sigma,\tau}.
\end{equation*}

Using $\tilde{\mathbf{A}}^i_{\sigma,\tau} \leq \omega_i(\sigma,\tau) M^i \leq \omega_{r+m}(\sigma,\tau) M^i$ and letting $q = i-r$:

\begin{eqnarray*}
\left\|\frac{\partial \mathbf{h}_\sigma^{(r+m)}}{\partial \mathbf{h}_\tau^{(0)}}\right\|_1 &\leq& \left(\prod_{\ell=0}^{r+m-1} \alpha^{(\ell)} \beta^{(\ell)}\right) M^r \omega_{r+m}(\sigma,\tau) \sum_{q=0}^m \binom{r+m}{r+q} \gamma^{m-q} M^q.
\end{eqnarray*}

We can bound $\binom{r+m}{r+q}$ as follows:

\begin{align*}
    \binom{r+m}{r+q} &= \frac{(r+m)(r-1+m)\cdots(1+m)}{(r+q)(r-1+q)\cdots(1+q)}\binom{m}{q} \leq \frac{(r+m)(r-1+m)\cdots(1+m)}{r!}\binom{m}{q}  \\
    &\leq \left(1 + \frac{m}{r}\right)\cdots \left(1+\frac{m}{1}\right) \binom{m}{q} \leq \left(1+\frac{m}{m+1}\right)^{r-m}(1+m)^{m}\binom{m}{q}.
\end{align*}

Substituting this bound, we get:

\begin{eqnarray*}
\left\|\frac{\partial \mathbf{h}_\sigma^{(r+m)}}{\partial \mathbf{h}_\tau^{(0)}}\right\|_1
&\leq& \left(\prod_{\ell=0}^{r+m-1} \alpha^{(\ell)} \beta^{(\ell)}\right) M^r \omega_{r+m}(\sigma,\tau) \left(1+\frac{m}{m+1}\right)^{r-m}(1+m)^m \sum_{q=0}^m \binom{m}{q} \gamma^{m-q} M^q \\
&=& \left(\prod_{\ell=0}^{r+m-1} \alpha^{(\ell)} \beta^{(\ell)}\right) M^r \omega_{r+m}(\sigma,\tau) \left(1+\frac{m}{m+1}\right)^{r-m}(1+m)^m (\gamma+M)^m \\
&=& \left(\prod_{\ell=r}^{r+m-1} \alpha^{(\ell)} \beta^{(\ell)}\right) \left(\frac{(1+m)^2}{2m+1} (\gamma + M)\right)^m \omega_{r+m}(\sigma,\tau)  \left(\prod_{\ell=0}^{r-1} \alpha^{(\ell)} \beta^{(\ell)}\right) \left(\left(1 + \frac{m}{m+1}\right) M\right)^r.
\end{eqnarray*}

Using $1+\frac{m}{m+1} \leq 2$, $M \leq k$, and $\gamma \leq k$:

\begin{equation*}
\left\|\frac{\partial \mathbf{h}_\sigma^{(r+m)}}{\partial \mathbf{h}_\tau^{(0)}}\right\|_1 \leq \left(\prod_{\ell=r}^{r+m-1} \alpha^{(\ell)} \beta^{(\ell)}\right) (2k(1+m))^m \omega_{r+m}(\sigma,\tau) \left(\prod_{\ell=0}^{r-1} \alpha^{(\ell)} \beta^{(\ell)}\right) \left(2M\right)^r.
\end{equation*}

Define $C = (\alpha_{\text{max}} \beta_{\text{max}})^m (2k(1+m))^m$. This depends only on $\alpha_{\text{max}}$, $\beta_{\text{max}}$, $k$, and $m$.

Finally, we can write:
\begin{equation}
\left\|\frac{\partial \mathbf{h}_\sigma^{(r+m)}}{\partial \mathbf{h}_\tau^{(0)}}\right\|_1 \leq C \omega_{r+m}(\sigma,\tau) (\alpha_{\text{max}} \beta_{\text{max}})^r (2M)^r.
\end{equation}

This completes the proof.
\end{proof}

\subsection{Proofs for Section~\ref{subsec:impact-of-hidden-dimensions} and Appendix~\ref{appendix-remarks-subsec-impact-of-hidden-dimensions}}
\label{appendix-subsec-proofs-impact-of-hidden-dimensions}
\label{appendix-proof-hidden-dimensions}

\begin{proof}[Proof of Remark \ref{remark:bounded_jacobian_special_cases}]
We derive the bounds for $\beta_{i}^{(t)}$ and $\alpha^{(t)}$ separately.

\textbf{Derivation of $\beta_{i}^{(t)}$:}

The message function is linear:
\begin{equation*}
\boldsymbol{\psi}_{i}^{(t)}\left( \mathbf{h}_\sigma^{(t)}, \mathbf{h}_{\xi_1}^{(t)}, \dots, \mathbf{h}_{\xi_{n_i-1}}^{(t)} \right) = \mathbf{W}_{i}^{(t)} \mathbf{H}_{\sigma,\xi}^{(t)},
\end{equation*}
where $\mathbf{H}_{\sigma,\xi}^{(t)} = \begin{bmatrix}
\mathbf{h}_\sigma^{(t)} \\ \mathbf{h}_{\xi_1}^{(t)} \\ \vdots \\ \mathbf{h}_{\xi_{n_i-1}}^{(t)}
\end{bmatrix}$ is the concatenated feature vector.

We need to compute the Lipschitz constant $\beta_{i}^{(t)}$ of the message function $\boldsymbol{\psi}_{i}^{(t)}$ with respect to each neighbor's feature vector $\mathbf{h}_{\xi_j}^{(t)}$.

The Jacobian of $\boldsymbol{\psi}_{i}^{(t)}$ with respect to $ \mathbf{h}_{\xi_j}^{(t)}$ is:
\begin{equation*}
\frac{\partial \boldsymbol{\psi}_{i}^{(t)}}{\partial \mathbf{h}_{\xi_j}^{(t)}} = \mathbf{W}_{i}^{(t)(:,\, \mathcal{I}_j)},
\end{equation*}
where $\mathbf{W}_{i}^{(t)(:,\, \mathcal{I}_j)}$ denotes the columns of $\mathbf{W}_{i}^{(t)}$ corresponding to $\mathbf{h}_{\xi_j}^{(t)}$.

Since the entries of $\mathbf{W}_{i}^{(t)}$ are bounded by $C_w$, and $\mathbf{h}_{\xi_j}^{(t)} \in \mathbb{R}^{p_t}$, the matrix $\frac{\partial \boldsymbol{\psi}_{i}^{(t)}}{\partial \mathbf{h}_{\xi_j}^{(t)}}$ is of size $p_{i,t} \times p_t$ with entries bounded by $C_w$.

Using the induced matrix 1-norm:
\begin{equation*}
\left\| \frac{\partial \boldsymbol{\psi}_{i}^{(t)}}{\partial \mathbf{h}_{\xi_j}^{(t)}} \right\|_1 = \max_{1 \leq l \leq p_t} \sum_{k=1}^{p_{i,t}} \left| \left( \frac{\partial \boldsymbol{\psi}_{i}^{(t)}}{\partial \mathbf{h}_{\xi_j}^{(t)}} \right)_{k,l} \right|.
\end{equation*}

Since each entry $\left| \left( \frac{\partial \boldsymbol{\psi}_{i}^{(t)}}{\partial \mathbf{h}_{\xi_j}^{(t)}} \right)_{k,l} \right| \leq C_w$, the sum over $k$ is bounded by $C_w p_{i,t}$. Therefore,
\begin{equation*}
\left\| \frac{\partial \boldsymbol{\psi}_{i}^{(t)}}{\partial \mathbf{h}_{\xi_j}^{(t)}} \right\|_1 \leq C_w p_{i,t}.
\end{equation*}

Thus, the Lipschitz constant $\beta_{i}^{(t)}$ satisfies:
\begin{equation*}
\beta_{i}^{(t)} \leq C_w p_{i,t}.
\end{equation*}

\textbf{Derivation of $\alpha^{(t)}$:}

The update function is given by:
\begin{equation*}
\boldsymbol{\phi}^{(t)}\left( \mathbf{m}_{\sigma,1}^{(t)}, \dots, \mathbf{m}_{\sigma,k}^{(t)} \right) = \mathbf{f}\left( \mathbf{W}^{(t)} \mathbf{M}_\sigma^{(t)} \right),
\end{equation*}
where $\mathbf{M}_\sigma^{(t)} = \begin{bmatrix}
\mathbf{g}\left( \mathbf{m}_{\sigma,1}^{(t)} \right) \\ \vdots \\ \mathbf{g}\left( \mathbf{m}_{\sigma,k}^{(t)} \right)
\end{bmatrix}$.

We need to compute the Lipschitz constant $\alpha^{(t)}$ of the update function $\boldsymbol{\phi}^{(t)}$ with respect to each input message $\mathbf{m}_{\sigma,i}^{(t)}$.

First, compute the Jacobian of $\boldsymbol{\phi}^{(t)}$ with respect to $\mathbf{m}_{\sigma,i}^{(t)}$:
\begin{equation*}
\frac{\partial \boldsymbol{\phi}^{(t)}}{\partial \mathbf{m}_{\sigma,i}^{(t)}} = \frac{\partial \boldsymbol{\phi}^{(t)}}{\partial \mathbf{M}_\sigma^{(t)}} \frac{\partial \mathbf{M}_\sigma^{(t)}}{\partial \mathbf{m}_{\sigma,i}^{(t)}}.
\end{equation*}

Compute $\frac{\partial \mathbf{M}_\sigma^{(t)}}{\partial \mathbf{m}_{\sigma,i}^{(t)}}$:
\begin{equation*}
\frac{\partial \mathbf{M}_\sigma^{(t)}}{\partial \mathbf{m}_{\sigma,i}^{(t)}} = \begin{bmatrix}
\mathbf{0} \\ \vdots \\ \operatorname{diag}\left( g^\prime \left( \mathbf{m}_{\sigma,i}^{(t)} \right) \right) \\ \vdots \\ \mathbf{0}
\end{bmatrix},
\end{equation*}
where the non-zero block $\operatorname{diag}\left( g^\prime\left( \mathbf{m}_{\sigma,i}^{(t)} \right) \right)$ is at position $i$.

Compute $\frac{\partial \boldsymbol{\phi}^{(t)}}{\partial \mathbf{M}_\sigma^{(t)}}$:
\begin{equation*}
\frac{\partial \boldsymbol{\phi}^{(t)}}{\partial \mathbf{M}_\sigma^{(t)}} = \operatorname{diag}\left( f^\prime\left( \mathbf{W}^{(t)} \mathbf{M}_\sigma^{(t)} \right) \right) \mathbf{W}^{(t)}.
\end{equation*}

Therefore,
\begin{equation*}
\frac{\partial \boldsymbol{\phi}^{(t)}}{\partial \mathbf{m}_{\sigma,i}^{(t)}} = \operatorname{diag}\left( f^\prime\left( \mathbf{W}^{(t)} \mathbf{M}_\sigma^{(t)} \right) \right) \mathbf{W}^{(t)(:,\, \mathcal{I}_i)} \operatorname{diag}\left( g^\prime\left( \mathbf{m}_{\sigma,i}^{(t)} \right) \right),
\end{equation*}
where $\mathbf{W}^{(t)(:,\, \mathcal{I}_i)}$ denotes the columns of $\mathbf{W}^{(t)}$ corresponding to $\mathbf{m}_{\sigma,i}^{(t)}$.

The matrix $\frac{\partial \boldsymbol{\phi}^{(t)}}{\partial \mathbf{m}_{\sigma,i}^{(t)}}$ is of size $p_{t+1} \times p_{i,t}$.

Since $|f^\prime(x)| \leq C_f$, $|g^\prime(x)| \leq C_g$, and $|(\mathbf{W}^{(t)})_{k,l}| \leq C_w$, the entries of $\frac{\partial \boldsymbol{\phi}^{(t)}}{\partial \mathbf{m}_{\sigma,i}^{(t)}}$ are bounded by $C_w C_f C_g$.

Using the induced matrix 1-norm:
\begin{equation*}
\left\| \frac{\partial \boldsymbol{\phi}^{(t)}}{\partial \mathbf{m}_{\sigma,i}^{(t)}} \right\|_1 = \max_{1 \leq l \leq p_{i,t}} \sum_{k=1}^{p_{t+1}} \left| \left( \frac{\partial \boldsymbol{\phi}^{(t)}}{\partial \mathbf{m}_{\sigma,i}^{(t)}} \right)_{k,l} \right|.
\end{equation*}

Each column $l$ sums over $k$ up to $p_{t+1}$ entries, each bounded by $C_w C_f C_g$. Therefore,
\begin{equation*}
\left\| \frac{\partial \boldsymbol{\phi}^{(t)}}{\partial \mathbf{m}_{\sigma,i}^{(t)}} \right\|_1 \leq C_w C_f C_g p_{t+1}.
\end{equation*}

Thus, the Lipschitz constant $\alpha^{(t)}$ satisfies:
\begin{equation*}
\alpha^{(t)} \leq C_w C_f C_g p_{t+1}.
\end{equation*}

\end{proof}

\section{Supplementary Analyses}
\label{appendix:supplementary-analyses}

\subsection{Graph Lifting Example and Curvature}
\label{appendix-graph-lifting-curvature}

As shown in Section~\ref{sec:experiments:curvature_distribution}, the edge curvature distribution is non-trivially impacted by the graph lifting procedure. We explore this more in this section. 

One possible explanation for the general positive distribution shift is the widening of bottleneck regions as well as the addition of many nodes and edges in densely connected regions. We observe this qualitatively in Figure~\ref{fig:long_dumbbell_complex}. We can see that the narrow path is now widened into two nodes instead of one. 
We also see that the clique regions gain a lot more nodes and edges than the path. The Ollivier-Ricci curvature becomes much more red, while the balanced Forman curvature and augmented Forman curvature maintain a higher number of small, negatively curved edges. This example is consistent with the observations from Figure~\ref{fig:kde_summary}. 

In the augmented Forman curvature plot, we can see that the curvature for edge \texttt{DI} becomes more negative. While this edge may not propagate information very well due to its negative curvature, there are now many edges for information to flow around this edge. This provides some qualitative evidence that incorporating global structure into analysis of graph lifting could present an important future direction.

\begin{figure}
    \centering
    \begin{subfigure}[b]{\textwidth}
        \centering
        \includegraphics[scale=.4]{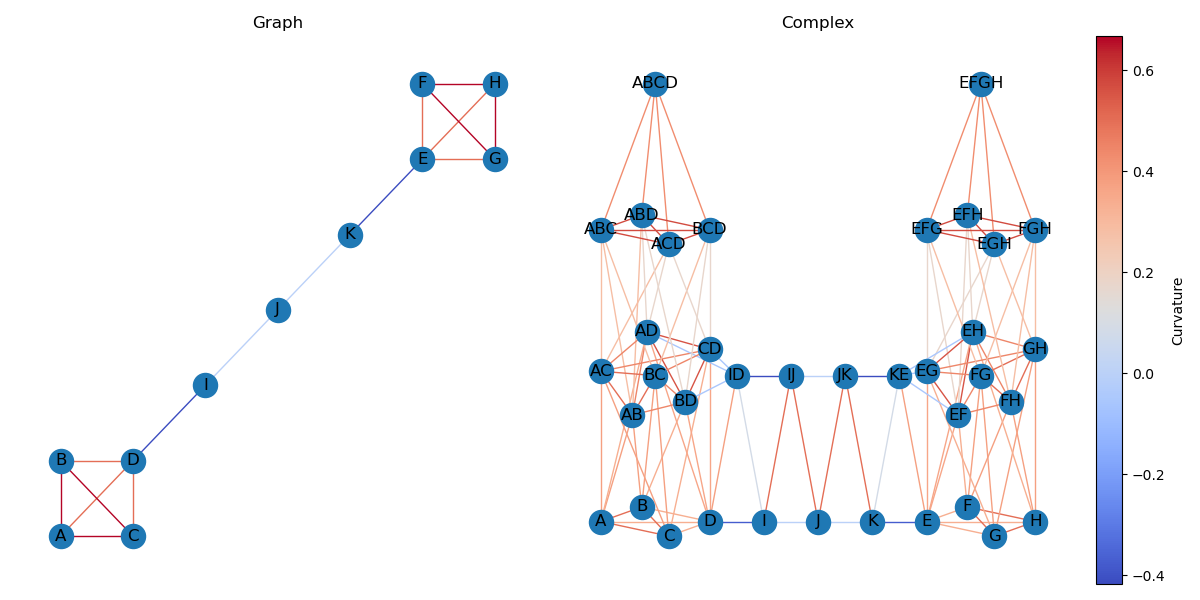}
        \caption{ORC}
    \end{subfigure}
    \begin{subfigure}[b]{\textwidth}
        \centering
        \includegraphics[scale=.4]{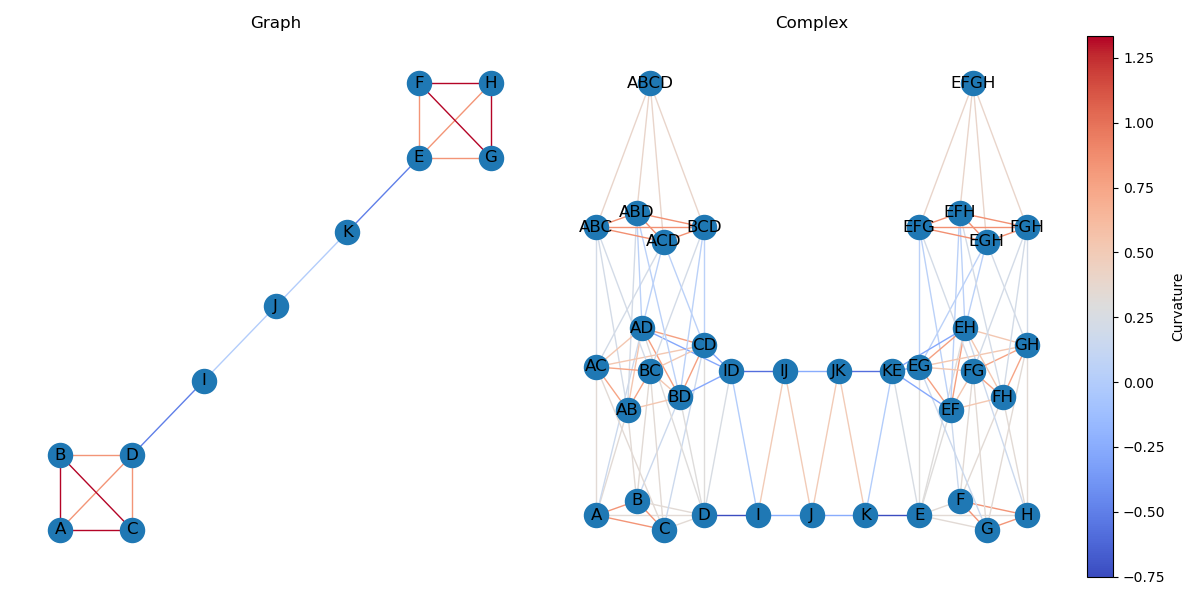}
        \caption{BFC}
    \end{subfigure}
    \begin{subfigure}[b]{\textwidth}
        \centering
        \includegraphics[scale=.4]{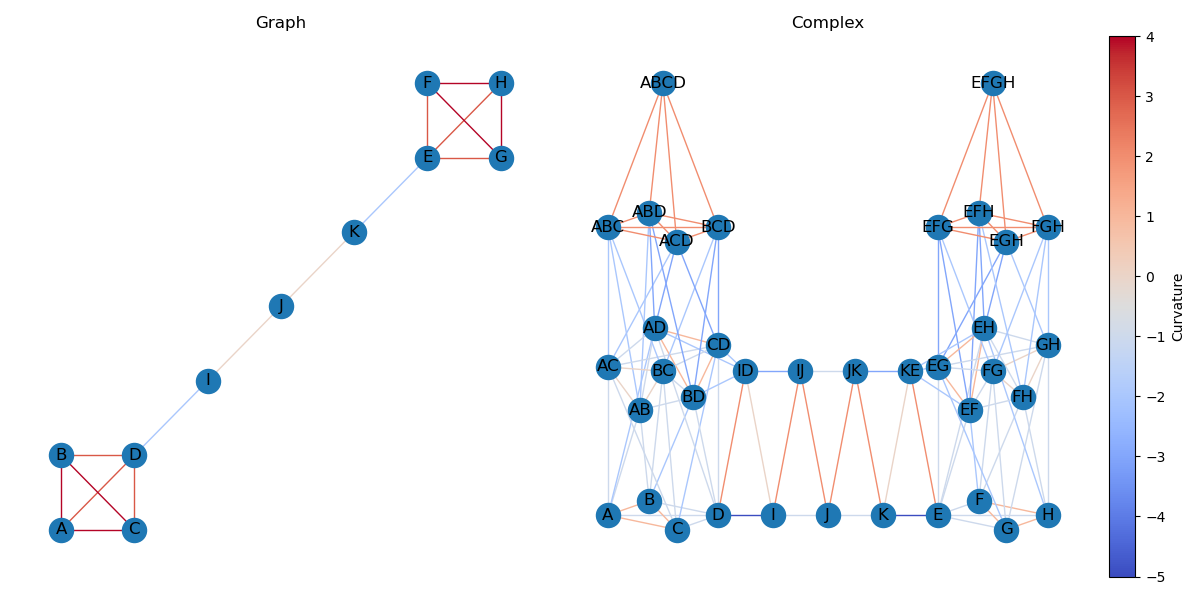}
        \caption{AFC}
    \end{subfigure}
    
    \caption{Long dumbbell graph before and after lifting to its clique complex. Edges are colored based on their curvature: Ollivier Ricci curvature (top), balanced Forman curvature (middle), and augmented Forman curvature (bottom). Note that this uses Boundary, Co-boundary, Lower, and Upper relations. These are presented as edges for visual clarity. 
    }
    \label{fig:long_dumbbell_complex}
\end{figure}

\subsection{Graph Lifting and Weighted Curvature}
\label{appendix:graph_lift_wc}

The addition of many nodes and edges, as well as the shift in the curvature distribution, makes direct comparisons between graphs and their corresponding lifts more challenging. While this widened bottleneck may alleviate oversquashing, measures like algebraic connectivity may not measure this effect since widened bottlenecks are counteracted by the addition of many nodes. Similarly, average curvature could be biased by the addition of many positively curved edges located in densely connected regions. We instead propose the \emph{betweenness weighted curvature}
\begin{equation}
    \text{wc} = \sum_{e\in E(G)} \text{bc}(e)\ \text{curv}(e),
\end{equation}
where bc denotes betwenness centrality and curv denotes the Ollivier-Ricci curvature. This measure, originally in
introduced by \citet{münch2022olliviercurvaturebetweennesscentrality}, places more weight on bottleneck edges which gives a weighted average of the curvature in the graph. This measure has the added benefit that it captures both local information propagation through curvature as well as global information propagation through betweenness centrality. To account for the influence of more positively weighted edges, we also consider the \textit{negative betweenness weighted curvature}
\begin{equation}
    \text{nwc} = \sum_{e\in E(G)} \text{bc}(e)\ \text{curv}(e) \chi_{\text{curv}(e) < 0},
\end{equation}
where $\chi$ is the indicator function. 

In Figure~\ref{fig:weighted_curvature}, we create a scatter plot of the weighted curvature of a graph and its corresponding clique complex: $(\text{wc}(G), \text{wc}(C))$. We can see that the weighted curvature generally becomes less negative after complex construction for Ollivier-Ricci and balanced Forman curvature, which may suggest that oversquashing is alleviated. However, the augmented Forman curvature makes the weighted curvature more negative for all graphs in the MUTAG dataset. This is consistent with Figure~\ref{fig:kde_summary} which produced a small number of negatively curved edges for augmented Forman curvature. Interestingly, there is a strong linear correlation in each of the scatter plots. Further study is required to understand this trend. 

\begin{figure}
    \centering
    \begin{subfigure}[b]{0.45\textwidth}
        \centering
        \includegraphics[width=\textwidth]{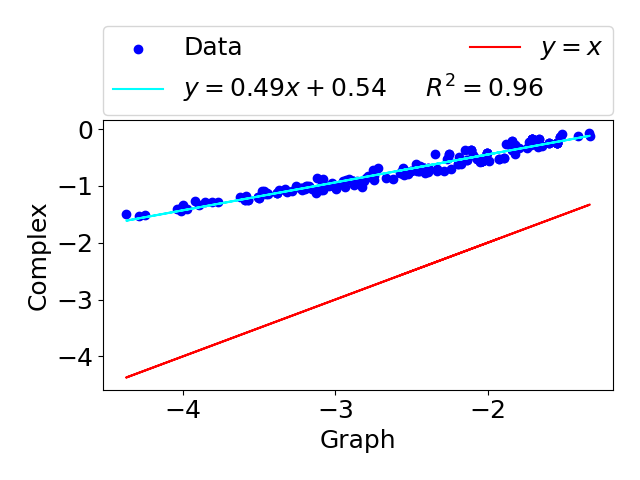}
        \caption{ORC WC}
    \end{subfigure}
    \begin{subfigure}[b]{0.45\textwidth}
        \centering
        \includegraphics[width=\textwidth]{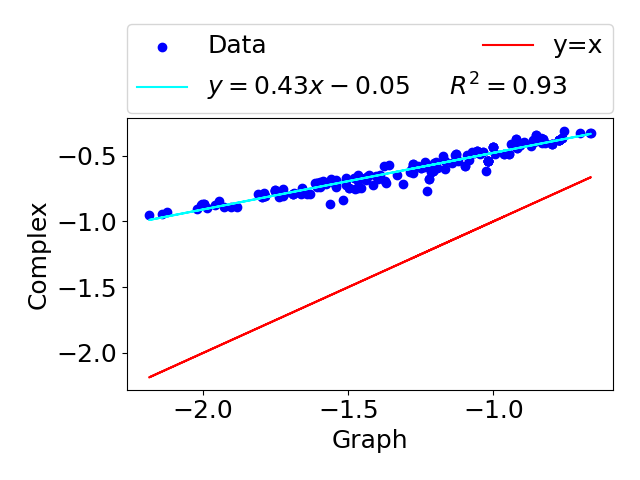}
        \caption{ORC NWC}
    \end{subfigure}
    \begin{subfigure}[b]{0.45\textwidth}
        \centering
        \includegraphics[width=\textwidth]{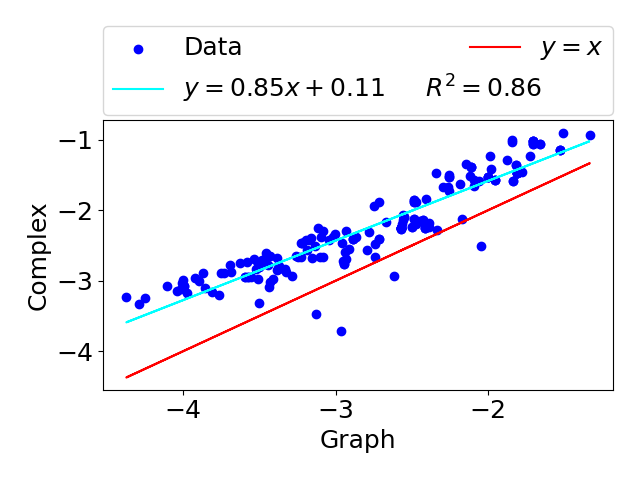}
        \caption{BFC WC}
    \end{subfigure}
    \begin{subfigure}[b]{0.45\textwidth}
        \centering
        \includegraphics[width=\textwidth]{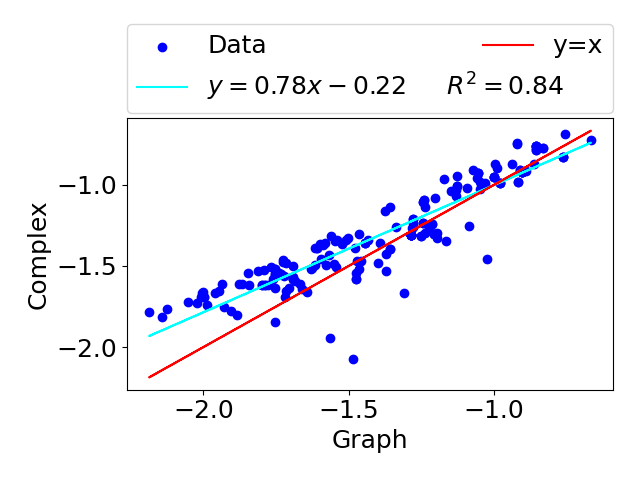}
        \caption{BFC NWC}
    \end{subfigure}
    \begin{subfigure}[b]{0.45\textwidth}
        \centering
        \includegraphics[width=\textwidth]{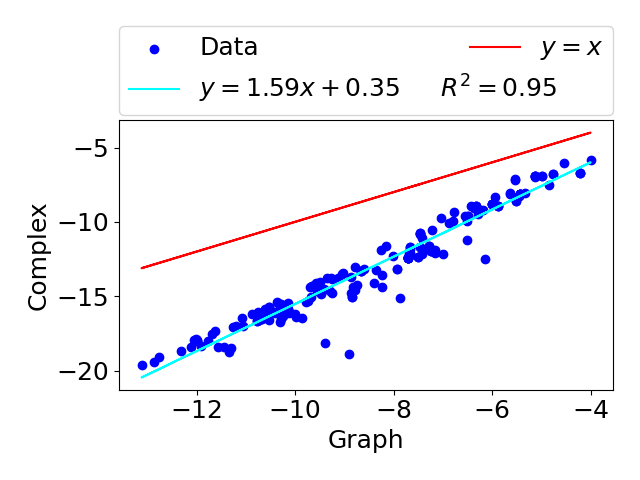}
        \caption{AFC WC}
    \end{subfigure}
    \begin{subfigure}[b]{0.45\textwidth}
        \centering
        \includegraphics[width=\textwidth]{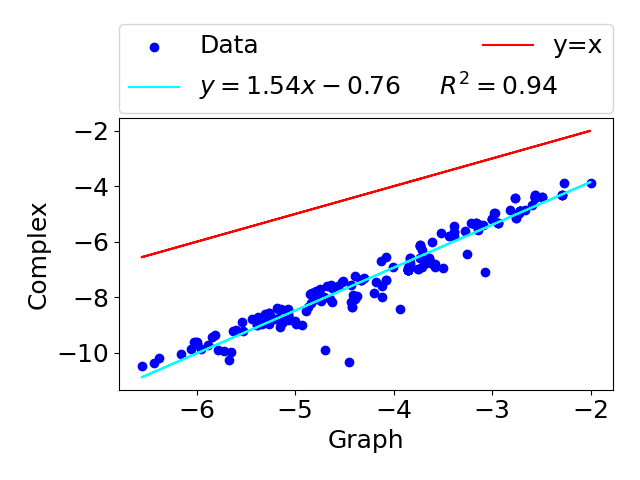}
        \caption{AFC NWC}
    \end{subfigure}
    \caption{Weighted curvature scatter plots for MUTAG: Ollivier Ricci curvature (top), balanced Forman curvature (middle), and augmented Forman curvature (bottom); betweeness weighted curvature (left) and negative betweenness weighted curvature (right).}
    \label{fig:weighted_curvature}
\end{figure}

\subsection{Graph Lifting and Edge Curvature Distribution}
\label{sec:experiments:curvature_distribution}

\begin{figure}[htbp]
    \centering
    \begin{subfigure}[b]{0.32\textwidth}
        \centering
        \includegraphics[width=\textwidth]{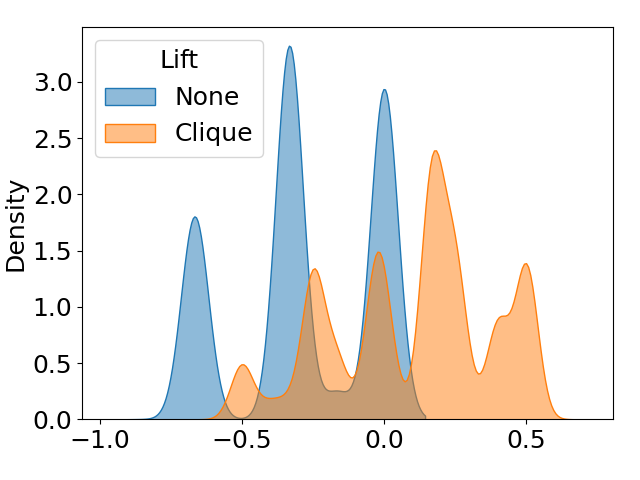}
        \caption{ORC}
    \end{subfigure}
    \begin{subfigure}[b]{0.32\textwidth}
        \centering
        \includegraphics[width=\textwidth]{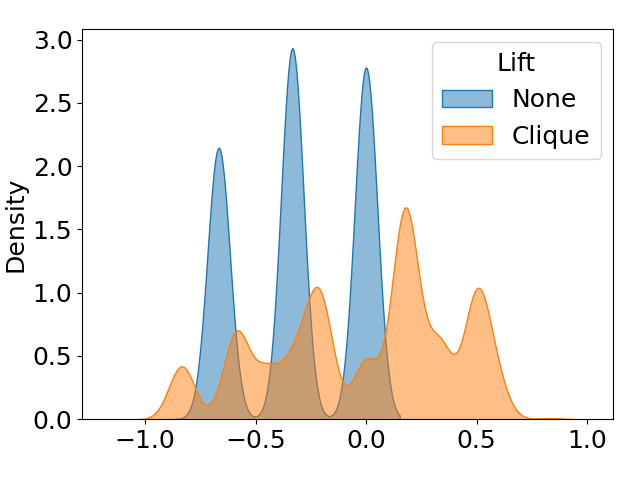}
        \caption{BFC}
    \end{subfigure}
    \begin{subfigure}[b]{0.32\textwidth}
        \centering
        \includegraphics[width=\textwidth]{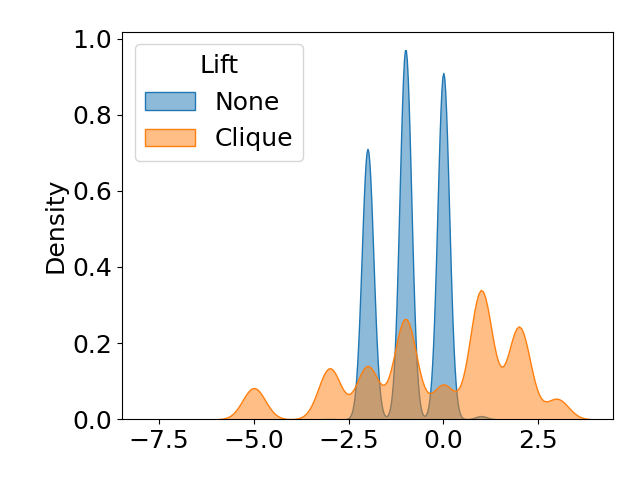}
        \caption{AFC}
    \end{subfigure}
    \caption{Edge curvature distribution across all graphs in the MUTAG dataset: Ollivier-Ricci curvature (left), Balanced Forman curvature (middle), and Augmented Forman curvature (right).}
    \label{fig:kde_summary}
\end{figure}

Understanding the impact on information propagation of lifting a graph to a clique complex is crucial. Graph lifting often adds numerous edges and nodes, complicating theoretical analysis, as the structural differences between the relational structures corresponding to a graph and its clique complex can be significant. As a first step, we analyze the edge curvature distribution to gain empirical insight into the clique graph lift. Figure~\ref{fig:kde_summary} shows the kernel density estimate (KDE) of edge curvature on the MUTAG dataset. Ollivier-Ricci curvature \citep{pmlr-v202-nguyen23c} increases uniformly, while balanced Forman curvature \citep{topping2022understanding} generally increases with some edges becoming slightly more negative. A more pronounced shift is observed with augmented Forman curvature \citep{fesser2023mitigating}, but the overall effect on information propagation remains unclear, as global graph structure is not captured in this analysis.

\subsection{Synthetic Benchmark: \textsc{NeighborsMatch}}
\label{appendix:neighbors-match}

The neighbors match experiment is a graph transfer task introduced by \cite{alon2021on} to test oversquashing in GNNs. We build on the implementation of \cite{karhadkar2023fosr} which adapts this benchmark to test rewiring. Each graph in the dataset is a path of cliques as shown in Figure~\ref{fig:nmatch_graph}. Each of the green nodes is assigned a distinct random label which is represented by a one-hot encoding. The root node, colored red, is also assigned a random label which is encoded using a one-hot encoding. The goal is for the neural network to learn which green node has the same one-hot encoding as the root node based solely on the output of the neural network at the root node. 

\begin{figure}
    \centering
    \includegraphics[width=0.5\linewidth]{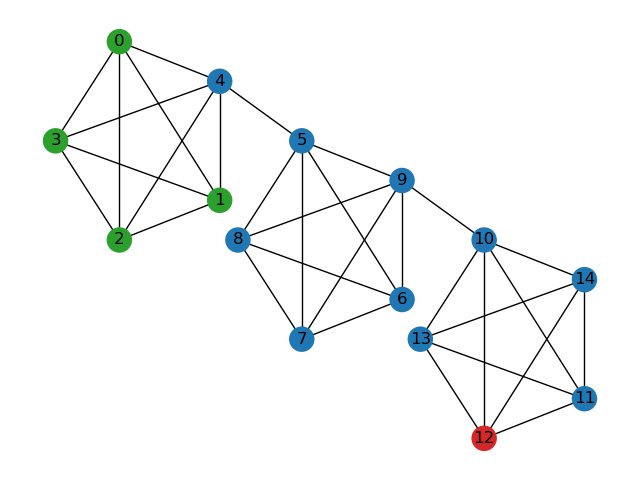}
    \caption{NeighborsMatch Graph with 3 cliques of 5 nodes. }
    \label{fig:nmatch_graph}
\end{figure}

In this experiment, we choose the number of nodes in each clique to be 5 to avoid excessively large clique complexes when performing graph lifting. Note that we see similar trends as in the RingTransfer experiment. However, this creates a much larger clique complex which may present a greater challenge for rewiring methods. We can see in Figure~\ref{fig:nmatch} that the Clique and Ring lifts require more rewiring iterations to achieve the same performance as their graph counterparts. This is likely related to the lifts having many more nodes and edges than the original graph. The original graph has 15 nodes and 32 edges, whereas the clique lift has 77 nodes and 764 edges and the ring lift has 77 nodes and 854 edges.

\begin{figure}[h]
    \centering
    \includegraphics[width=0.5\linewidth]{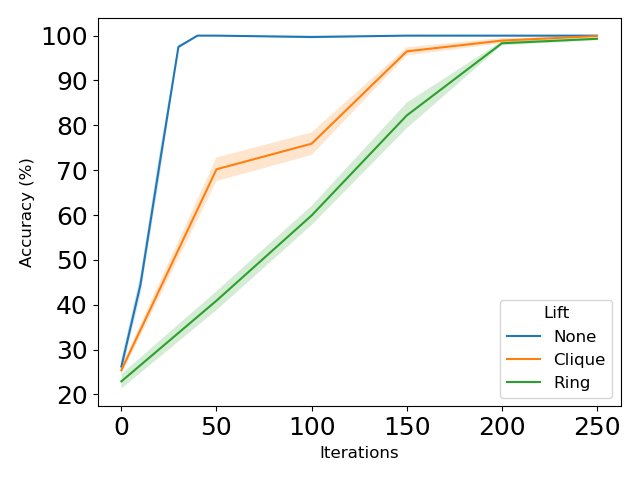}
    \caption{Neighbors Match Rewiring experiment. }
    \label{fig:nmatch}
\end{figure}

{
We also benchmark trees and trees with cycles attached to the leaves on the NeighborsMatch benchmark (see Figure~\ref{fig:trees}). We note that a tree as an acyclic connected graph has Betti numbers $b_0 = 1$ (number of connected components) and $b_1 = 0$ (number of independent loops or cycles). On the other hand, a tree with $n$ attached cycles has Betti numbers $b_0 = 1$ and $b_1 = n$. As such, the two structures are topologically very distinct when considered as 1-simplicial complexes: the tree is contractible and has a trivial fundamental group, while the tree with attached cycles is not contractible and has a non-trivial fundamental group corresponding to a free group with $n$ generators. However, their relational message passing structures are quite similar, and they demonstrate almost identical prototypical oversquashing trends (see Figure~\ref{fig:trees-oversquashing}).}

\begin{figure}[ht]
    \centering
    \begin{subfigure}[b]{0.45\textwidth}
        \centering
        \includegraphics[width=\textwidth]{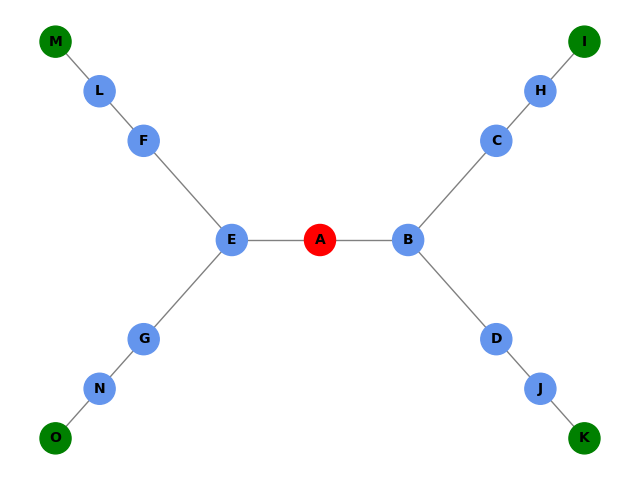}
        \caption{Tree with no attached cycles}
        \label{fig:subfig1}
    \end{subfigure}
    \hfill
    \begin{subfigure}[b]{0.45\textwidth}
        \centering
        \includegraphics[width=\textwidth]{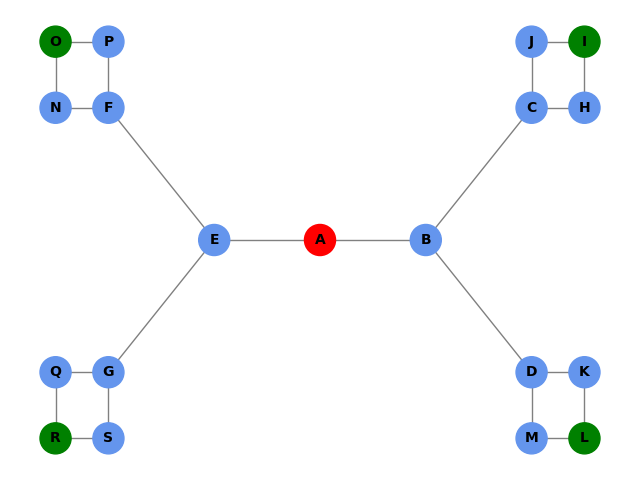}
        \caption{Tree with attached cycles}
        \label{fig:subfig2}
    \end{subfigure}
    \caption{Tree without and with attached cycles have very distinct topologies.}
    \label{fig:trees}
\end{figure}

\begin{figure}[ht]
    \centering
    \includegraphics[width=0.5\linewidth]{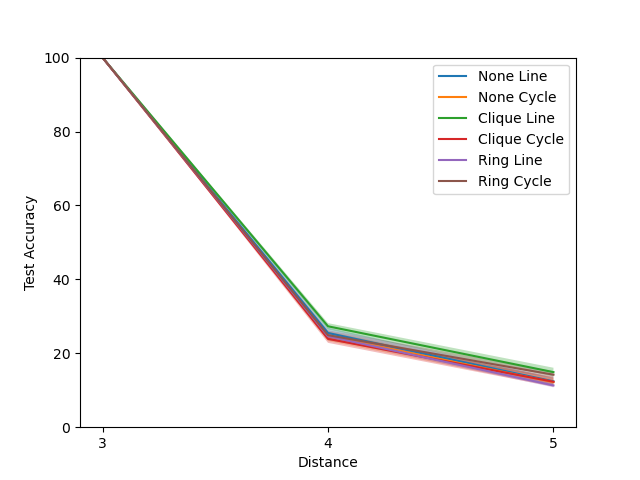}
    \caption{Neighbors Match Rewiring experiment on trees and trees with cycles, with increasing tree depths. ``None'', ``Clique'', and ``Ring'' indicate the graph lifting applied. ``Line'' and ``Cycle'' indicate whether lines or cycles are attached to the leaves.}
    \label{fig:trees-oversquashing}
\end{figure}

\subsection{Real-World Benchmark: Graph Regression (Zinc)}

\begin{table}[ht]
\begin{tabular}{llllll}
\toprule
 &  & GIN & R-GIN & SIN & CIN++\\
Complex & Rewiring &  \\
\midrule
\multirow[c]{2}{*}{None} 
    & None & $0.405 \pm 0.001$ & $0.368 \pm 0.003$ & $0.370 \pm 0.003$ & $0.339 \pm 0.005$\\
    & FoSR & $1.056 \pm 0.010$ & $1.485 \pm 1.045$ & $0.377 \pm 0.005$ & $0.374 \pm 0.008$ \\
 \midrule
\multirow[c]{2}{*}{Clique} 
& None & $0.409 \pm 0.006$ & $0.326 \pm 0.003$ & $0.331 \pm 0.004$ & $0.320 \pm 0.006$\\
& FoSR & $0.844 \pm 0.011$ & $0.420 \pm 0.002$ & $0.369 \pm 0.003$ & $0.379 \pm 0.008$\\
 \midrule
\multirow[c]{2}{*}{Ring} 
& None & \cellcolor{gold}$0.332 \pm 0.006$ & \cellcolor{gold}$0.215 \pm 0.003$ & $0.223 \pm 0.004$ & $0.214 \pm 0.003$\\
& FoSR & $0.641 \pm 0.011$ & $0.232 \pm 0.002$ & \cellcolor{gold}$0.212 \pm 0.003$ & \cellcolor{gold}$0.211 \pm 0.002$\\
\bottomrule
\end{tabular}
\caption{ZINC (MAE). Lower values are better. }
\label{tab:zinc}
\end{table}

{
\label{appendix:zinc}
In this section we test our methods on the ZINC dataset. Due to increased computational demands, we do not perform a hyperparameter sweep for this experiment and instead use the hyperparameters from the TUDataset experiments. We also test this on the ring lifting procedure. The ring lift consists of adding cycles in the graph as 2-cells in the complex as in \cite{bodnar2021weisfeiler}. In this work, we restrict the rings to have size at most 7. Structures like carbon rings are important in molecule datasets and prediction tasks can benefit from encoding larger rings in the lifting procedure. The results are presented in Table~\ref{tab:zinc}, where we see a clear improvement from the Ring lift across all models and rewirings tested for the ZINC dataset. 
}

\subsection{Real-World Benchmark: Node Classification}
\label{appendix:node-classification}
In this section, we test the lifting and rewiring methods on node classification tasks. The results of this experiment are shown in Table~\ref{tab:node_class}. The rewirings considered included FoSR and the pruning algorithm introduced in Appendix~\ref{apdx:prune}. The best results for each dataset and model are highlighted in gold. Notably, GIN and CIN++ benefited from lifting on all datasets tested. 

\begin{table}[ht]
\centering
\scriptsize
\begin{subtable}[h]{\textwidth}
\centering
\caption{\textbf{GIN}}
\label{tab:node_class_gin}
\begin{tabular}{lllllll}
\toprule
 &  & CORNELL & TEXAS & WISCONSIN & CITESEER & CORA\\
Complex & Rewiring &  &  &  \\
\midrule
\multirow[c]{3}{*}{None} & FoSR & $41.6 \pm 3.5$ & $57.9 \pm 2.7$ & $49.6 \pm 2.5$ & $69.3 \pm 0.8$ & $82.4 \pm 0.7$\\
 & Prune & $45.8 \pm 4.0$ & $55.3 \pm 4.9$ & $48.1 \pm 3.9$ & $70.6 \pm 0.4^*$ & $84.4 \pm 0.8^*$ \\
 & None & $33.7 \pm 3.3$ & $60.0 \pm 5.2$ & $49.2 \pm 2.9$ & $69.9 \pm 0.8$ & $82.7 \pm 0.9$\\
\midrule
\multirow[c]{3}{*}{Clique} & FoSR & $40.5 \pm 3.5$ & $64.7 \pm 5.3$ & \cellcolor{gold}$56.5 \pm 2.6$ & $68.6 \pm 0.8$ & \cellcolor{gold}$84.4 \pm 0.6$\\
 & Prune & $36.8 \pm 2.6$ & $61.6 \pm 3.3$ & $47.3 \pm 3.7$ & \cellcolor{gold}$72.0 \pm 0.8^*$ & $82.4 \pm 0.5^*$ \\
 & None & $44.7 \pm 3.6$ & $58.9 \pm 2.3$ & $55.8 \pm 2.2$ & $69.0 \pm 0.8$ & $83.2 \pm 0.6$\\
 \midrule
\multirow[c]{3}{*}{Ring} & FoSR & $45.3 \pm 2.1$ & $61.1 \pm 3.3$ & $55.0 \pm 3.3$ & & \\
 & Prune & \cellcolor{gold}$47.4 \pm 4.1$ & \cellcolor{gold}$65.3 \pm 4.5$ & $54.2 \pm 3.2^*$ & & \\
 & None  & $46.3 \pm 3.1$ & $58.9 \pm 3.3$ & $54.6 \pm 3.0$ &  & \\
\bottomrule
\end{tabular}
\end{subtable}

\begin{subtable}[h]{\textwidth}
\centering
\caption{\textbf{RGIN}}
\label{tab:node_class_rgin}
\begin{tabular}{lllllll}
\toprule
 &  & CORNELL & TEXAS & WISCONSIN & CITESEER & CORA\\
Complex & Rewiring &  &  &  \\
\midrule
\multirow[c]{3}{*}{None} & FoSR & $54.2 \pm 3.3$ & $51.1 \pm 3.3$ & $71.2 \pm 2.4$ & $73.2 \pm 0.6$ & $81.9 \pm 0.8$\\
& Prune & $52.6 \pm 3.5$ & $65.3 \pm 3.2$ & $68.5 \pm 2.1$ & $73.2 \pm 0.6^*$ & $85.6 \pm 0.7^*$ \\
 & None & \cellcolor{gold}$59.5 \pm 5.0$ & \cellcolor{gold}$67.9 \pm 3.2$ & $69.2 \pm 2.8$ & $72.2 \pm 1.0$ & $86.4 \pm 0.7$\\
\midrule
\multirow[c]{3}{*}{Clique} & FoSR & $51.1 \pm 3.8$ & $62.1 \pm 3.7$ & $68.5 \pm 3.3$ & \cellcolor{gold}$73.4 \pm 1.0$ & $85.7 \pm 0.6$ \\
 & Prune & $47.9 \pm 2.3$ & $64.2 \pm 3.6$ & $60.4 \pm 2.0$ & $73.3 \pm 0.6^*$ & $85.9 \pm 0.6^*$\\
 & None & $50.5 \pm 3.5$ & $67.4 \pm 3.4$ & $68.5 \pm 3.8$ & $72.1 \pm 0.9$ & \cellcolor{gold}$86.6 \pm 0.6$\\
 \midrule
\multirow[c]{3}{*}{Ring} & FoSR & $57.4 \pm 2.3$ & $69.5 \pm 3.6$ & $66.5 \pm 2.4$ & & \\
 & Prune & $54.2 \pm 4.1$ & $59.5 \pm 4.7$ & \cellcolor{gold}$74.6 \pm 2.5^*$ & & \\
 & None  & $56.3 \pm 2.7$ & $62.1 \pm 4.1$ & $67.7 \pm 2.2$ & & \\

\bottomrule
\end{tabular}
\end{subtable}

\begin{subtable}[h]{\textwidth}
\centering
\caption{\textbf{SIN}}
\label{tab:node_class_sin}
\begin{tabular}{lllllll}
\toprule
 &  & CORNELL & TEXAS & WISCONSIN & CITESEER & CORA\\
Complex & Rewiring &  &  &  \\
\midrule
\multirow[c]{3}{*}{None} & FoSR & $30.5 \pm 3.5$ & $34.7 \pm 4.1$ & $30.8 \pm 3.2$ & $54.3 \pm 1.1$ & $50.6 \pm 2.1$\\
 & Prune & \cellcolor{gold}$44.7 \pm 4.2$ & $55.8 \pm 4.9$ & $53.1 \pm 2.9$ & $66.9 \pm 0.6^*$ & $81.7 \pm 0.8^*$\\
 & None & $42.6 \pm 2.9$ & $52.6 \pm 4.3$ & $50.4 \pm 2.8$ & \cellcolor{gold}$66.9 \pm 0.8$ & \cellcolor{gold}$81.6 \pm 0.8$\\
\midrule
\multirow[c]{3}{*}{Clique} & FoSR & $36.3 \pm 2.7$ & $49.5 \pm 3.0$ & $47.3 \pm 3.0$ & $60.2 \pm 1.2$ & $68.2 \pm 0.9$\\
 & Prune & $40.5 \pm 2.9$ & $56.3 \pm 3.0$ & $51.5 \pm 3.1$ & $64.8 \pm 0.9^*$ & $75.6 \pm 1.2^*$\\
 & None & $36.8 \pm 2.9$ & $55.8 \pm 2.7$ & $45.8 \pm 3.8$ & $63.2 \pm 0.8$ & $77.7 \pm 0.7$\\
 \midrule
\multirow[c]{3}{*}{Ring} & FoSR & $33.7 \pm 4.1$ & $51.1 \pm 4.4$ & $53.1 \pm 3.5$ & & \\
 & Prune & $42.1 \pm 3.2$ & \cellcolor{gold}$58.4 \pm 2.8$ & $55.0 \pm 3.3^*$ & & \\
 & None  & $38.9 \pm 3.3$ & $54.7 \pm 2.2$ & \cellcolor{gold}$61.2 \pm 3.5$ & & \\
\bottomrule
\end{tabular}
\end{subtable}

\begin{subtable}[h]{\textwidth}
\centering
\caption{\textbf{CIN++}}
\label{tab:node_class_cin++}
\begin{tabular}{lllllll}
\toprule
 &  & CORNELL & TEXAS & WISCONSIN & CITESEER & CORA\\
Complex & Rewiring &  &  &  \\
\midrule
\multirow[c]{3}{*}{None} & FoSR & $41.6 \pm 2.0$ & $55.3 \pm 2.7$ & $51.9 \pm 4.1$ & $44.9 \pm 1.1$ & $64.0 \pm 1.8$\\
 & Prune & $42.6 \pm 3.5$ & $49.5 \pm 4.7$ & $61.2 \pm 2.3$ & $45.2 \pm 1.4^*$ & $55.7 \pm 1.8^*$ \\
 & None & $43.2 \pm 4.4$ & $59.5 \pm 2.6$ & $54.2 \pm 3.8$ & $44.7 \pm 1.7$ & $63.6 \pm 2.0$ \\
\midrule
\multirow[c]{3}{*}{Clique} & FoSR & $47.9 \pm 4.5$ & \cellcolor{gold}$66.3 \pm 3.7$ & $64.6 \pm 1.6$ & \cellcolor{gold}$64.4 \pm 1.0$ & $78.6 \pm 0.8$\\
 & Prune & \cellcolor{gold}$52.6 \pm 3.5$ & $57.9 \pm 3.2$ & $63.8 \pm 3.5$ & $60.5 \pm 0.7^*$ & $77.0 \pm 1.0^*$ \\
 & None & $50.5 \pm 2.7$ & $65.3 \pm 2.6$ & \cellcolor{gold}$66.2 \pm 2.8$ & $63.2 \pm 0.9$ & \cellcolor{gold}$79.4 \pm 0.7$\\
\midrule
\multirow[c]{3}{*}{Ring} & FoSR & $42.1 \pm 4.1$ & $65.3 \pm 3.3$ & $62.3 \pm 1.7$ & & \\
 & Prune & $47.9 \pm 3.7$ & $63.2 \pm 4.4$ & $56.9 \pm 2.5^*$ & & \\
 & None  & $54.2 \pm 4.2$ & $64.2 \pm 4.1$ & $63.1 \pm 3.0$ &  & \\

\bottomrule
\end{tabular}
\end{subtable}

\caption{Node classification experiments. Experiments with $*$ used 1d curvature for pruning. }
\label{tab:node_class}
\end{table}

\subsection{Ablation Tests}

We perform ablation studies to evaluate the effect of increasing the number of layers and rewiring iterations, as well as the effect of increasing the hidden dimensions and rewiring iterations, on the performance of CIN++ on the graph classification MUTAG dataset with FoSR rewiring. The results are presented in Tables~\ref{tab:ablation-layers} and ~\ref{tab:ablation-hidden}. This analysis complements the experiments in Section~\ref{sec:experiments:synthetic} by testing on real world data. 
We note that the results of the ablation studies for the MUTAG dataset do not show a clear pattern across all rewiring iterations. However, we broadly see that rewiring too much tends to hurt performance as they begin to oversmooth information. We also generally see that increasing the hidden dimension tends to improve performance. Due to time constraints and limited compute, we could only test one dataset and perform 10 trials for each set of hyperparamters. With more trials, the standard error will decrease and make the patterns more clear. \cite{pmlr-v202-nguyen23c} and \cite{fesser2024} performed similar ablations and considered 100 trials for each set of hyperparameters. As in the experiments from Section~\ref{sec:experiments} and the results reported in \cite{rusch2023survey} and \cite{pmlr-v202-nguyen23c}, we expect that increasing the hidden dimension will improve classification accuracy. Similarly, increasing layers will initially improve performance until the model becomes too deep and experiences oversmoothing. 

\begin{table}[ht]
    \centering
    \begin{tabular}{llllll}
\toprule
 & Rewire Iterations & 0 & 10 & 20 & 40 \\
Lift & Layers &  &  &  &  \\
\midrule
\multirow[t]{3}{*}{None} & 2 & $86.0 \pm 2.9$ & $85.0 \pm 2.4$ & $75.5 \pm 1.7$ & $85.5 \pm 3.1$ \\
 & 4 & $87.5 \pm 2.3$ & $81.5 \pm 2.6$ & $81.0 \pm 3.2$ & $77.5 \pm 2.9$ \\
 & 6 & $89.0 \pm 2.7$ & $82.5 \pm 2.8$ & $78.5 \pm 3.7$ & $80.0 \pm 2.8$ \\
\cline{1-6}
\multirow[t]{3}{*}{Clique} & 2 & $85.5 \pm 2.2$ & $84.0 \pm 1.8$ & $81.0 \pm 2.2$ & $83.5 \pm 3.0$ \\
 & 4 & $86.0 \pm 1.9$ & $78.5 \pm 3.6$ & $83.5 \pm 2.9$ & $79.0 \pm 2.1$ \\
 & 6 & $84.5 \pm 2.7$ & $80.5 \pm 2.0$ & $82.5 \pm 3.1$ & $80.5 \pm 2.3$ \\
\cline{1-6}
\multirow[t]{3}{*}{Ring} 
 & 2 & $84.5 \pm 1.9$ & $86.5 \pm 2.4$ & $86.5 \pm 2.6$ & $87.0 \pm 2.5$ \\
 & 4 & $86.5 \pm 2.6$ & $84.5 \pm 2.4$ & $80.5 \pm 3.0$ & $86.5 \pm 1.7$ \\
 & 6 & $82.0 \pm 1.9$ & $86.0 \pm 1.8$ & $86.0 \pm 2.7$ & $88.0 \pm 2.5$ \\
\bottomrule
\end{tabular}

    \caption{Impact of increasing the number of layers and rewiring iterations for FoSR on the MUTAG dataset with CIN++.}
    \label{tab:ablation-layers}
\end{table}

\begin{table}[ht]
    \centering
    \begin{tabular}{llllll}
\toprule
 & Rewire Iterations & 0 & 10 & 20 & 40 \\
Lift & Hidden Dimension &  &  &  &  \\
\midrule
\multirow[t]{3}{*}{None} & 16 & $81.5 \pm 2.5$ & $77.5 \pm 3.0$ & $78.5 \pm 3.7$ & $74.0 \pm 1.8$ \\
 & 32 & $83.0 \pm 2.9$ & $77.5 \pm 2.8$ & $84.5 \pm 2.6$ & $82.0 \pm 4.2$ \\
 & 64 & $85.0 \pm 1.8$ & $82.0 \pm 2.1$ & $83.0 \pm 1.5$ & $82.5 \pm 2.4$ \\
\cline{1-6}
\multirow[t]{3}{*}{Clique} & 16 & $87.5 \pm 3.0$ & $79.5 \pm 3.6$ & $77.0 \pm 2.6$ & $79.0 \pm 1.6$ \\
 & 32 & $83.5 \pm 2.2$ & $81.0 \pm 2.2$ & $80.5 \pm 4.1$ & $78.5 \pm 3.4$ \\
 & 64 & $89.0 \pm 2.6$ & $81.0 \pm 2.8$ & $75.5 \pm 3.8$ & $83.5 \pm 2.8$ \\
\cline{1-6}
\multirow[t]{3}{*}{Ring} 
 & 16 & $87.0 \pm 1.7$ & $89.0 \pm 2.4$ & $85.0 \pm 1.5$ & $85.0 \pm 1.1$ \\
 & 32 & $88.5 \pm 2.8$ & $83.5 \pm 1.8$ & $89.0 \pm 2.2$ & $83.5 \pm 1.3$ \\
 & 64 & $85.5 \pm 3.0$ & $85.0 \pm 3.2$ & $86.0 \pm 1.8$ & $84.5 \pm 1.9$ \\
\bottomrule
\end{tabular}

    \caption{Impact of increasing the hidden dimensions and rewiring iterations for FoSR on the MUTAG dataset with CIN++.}
    \label{tab:ablation-hidden}
\end{table}

\subsection{Simplex Pruning}
\label{apdx:prune}
In this section we analyze the impact of pruning edges in the influence graph. For the pruning algorithm, we consider removing the edge with the highest balanced Forman curvature. More precisely, we first convert the simplex into the influence graph and collapse multi-edges. Then we compute the balanced Forman curvature of each edge in the graph. Finally, we remove the edge with the highest curvature. This process is repeated 40 times. When removing an edge with higher arity $e = (\sigma, \tau, \delta)$, the entire higher-order edge is removed from the simplex. When an edge $(\sigma, \tau)$ with higher multiplicity is selected, all instances of that edge are removed from the graph. 

The results of this experiment are shown in Table~\ref{tab:mutag_prune}. In eight out of twelve experiments, the performance improves with some amount of pruning. Notably, GIN and CIN++ attain their highest performance with 40 pruning iterations. This suggests that rewiring algorithms that implement edge pruning and edge addition could be the most promising candidates for simplicial rewiring algorithms. This is in part due to the addition of many nodes and edges in the graph lifting procedure. 

\begin{table}[ht]
    \centering
    \begin{tabular}{llllll}
    \toprule
     & Prune Iterations & 0 & 10 & 20 & 40 \\
    Complex & Model & & &  &  \\
    \midrule
    \multirow[t]{4}{*}{None}
     & GIN & $83.0 \pm 3.1$ & $87.5 \pm 3.1$ & \cellcolor{gold}$89.0 \pm 3.1$ & $86.0 \pm 2.1$ \\
     & RGIN & $81.5 \pm 1.7$ & $84.0 \pm 2.8$ & \cellcolor{gold}$91.0 \pm 2.2$ & $86.0 \pm 1.2$ \\
     & SIN & \cellcolor{gold}$88.5 \pm 3.0$ & $83.0 \pm 2.5$ & $86.0 \pm 2.1$ & $86.5 \pm 2.5$ \\
      & CIN++ & $85.0 \pm 3.4$& \cellcolor{gold}$91.5 \pm 1.8$ & $88.0 \pm 2.0$ & $84.5 \pm 2.5$ \\
    \midrule
    \multirow[t]{4}{*}{Clique} & GIN &$83.0 \pm 2.8$ & $83.5 \pm 1.7$ & $82.5 \pm 3.1$ & \cellcolor{gold}$90.5 \pm 1.4$ \\
     & RGIN & \cellcolor{gold}$86.0 \pm 2.3$ & $81.0 \pm 2.4$ & $82.5 \pm 2.5$ & $85.0 \pm 1.7$ \\
     & SIN & \cellcolor{gold}$87.0\pm 3.2$ & $79.5 \pm 2.2$ & $83.5 \pm 2.2$ & $84.5 \pm 3.4$ \\
     & CIN++ & $90.5 \pm 2.2$ & $84.5 \pm 3.7$ & $78.5 \pm 3.2$ & \cellcolor{gold}$91.0 \pm 1.8$ \\
     \midrule
    \multirow[t]{4}{*}{Ring} 
     & GIN & \cellcolor{gold}$90.0 \pm 2.2$ & $87.5 \pm 3.2$ & $88.0 \pm 1.7$ & $85.5 \pm 1.6$ \\
     & RGIN & $87.0 \pm 2.3$ & \cellcolor{gold}$87.5 \pm 2.6$ & $86.5 \pm 2.8$ & $87.0 \pm 2.4$ \\
     & SIN & $87.5 \pm 1.5$ & $87.0 \pm 2.9$ & $84.0 \pm 2.4$ & \cellcolor{gold}$88.0 \pm 2.3$ \\
     & CIN++ & $87.5 \pm 2.6$ & $80.5 \pm 3.3$ & \cellcolor{gold}$88.0 \pm 2.0$ & $81.5 \pm 1.5$ \\
    \bottomrule
    \end{tabular}
    \caption{MUTAG pruning experiment. Removing edges with highest balanced Forman Curvature. }
    \label{tab:mutag_prune}
\end{table}

\section{Additional Experimental Details}
\label{appendix:experiments}

For all experiments, we split the dataset randomly using $80\%$ for training, $10\%$ for validation, and $10\%$ for testing. For classificiation, we use the cross entropy loss. Each experiment is run with early stopping using the validation set. Also, if there is no improvement for 10 iterations, the learning rate is decreased. For all rewiring methods, we only consider adding edges and no edge removal. Each rewiring method is run for 40 iterations. 

\subsection{Real-World Benchmark: Graph Classification on TUDataset}
\label{appendix:tudataset-full}
To train and evaluate these method on real world data, we use the TUDataset \citep{morris2020}. These graph classification tasks are commonly used to benchmark message passing neural networks. We have included summary statistics for MUTAG, NCI1, ENZYMES, PROTEINS, and IMDB-BINARY in Table~\ref{table:dataset_statistics}. 

\begin{table}[htbp]
\centering
\begin{tabular}{|l|r|r|r|r|r|}
\hline
\textbf{Dataset}     & \textbf{\# Graphs} & \textbf{Classes} & \textbf{Avg. Nodes} & \textbf{Avg. Edges} & \textbf{Features} \\ \hline
MUTAG       & 188       & 2 & 17.93 & 19.79 & 7  \\ \hline
NCI1        & 4110      & 2 & 29.87 & 32.30 & 37 \\  \hline
ENZYMES     & 600       & 6 & 32.63 & 62.14 & 3 \\  \hline
PROTEINS    & 1113      & 2 & 39.06 & 72.82 & 3  \\ \hline
IMDB-BINARY & 1000      & 2 & 19.77 & 96.53 & 0  \\ \hline
\end{tabular}
\caption{TUDataset Statistics. }
\label{table:dataset_statistics}
\end{table}

In Tables~\ref{tab:all-graph-models}, ~\ref{tab:all-relational-models} and ~\ref{tab:all-simplicial-models}, we provide all the results for Table~\ref{tab:test_acc_main} from Section~\ref{sec:experiments}. We note that in Table~\ref{tab:test_acc_main}, we only report the Lift=None results for FoSR, and not AFR4 or SDRF, for fair comparison with Lift=Clique and Lift=Ring, where we only report the results for FoSR. We note that for $24/32 = 75\%$ of the results on ENZYME, MUTAG, NCI1, and PROTEINS, the best performing rewiring algorithms for Lift=None and Lift=Clique agree. Similarly, the best rewiring algorithms for Lift=None and Lift=Ring agree in $17/32 = 53\%$ of the experiments. In Figure~\ref{fig:graph_vs_clique_comparison}, we compare model performance on graphs and the corresponding clique complexes, and note how graph, relational graph, and topological models all respond similarly to relational rewiring.

\begin{table}[htbp]
\centering
\scriptsize

\begin{subtable}[h]{\textwidth}
\centering
\caption{\textbf{SGC}}
\label{tab:sgc}
\begin{tabular}{lllllll}
\toprule
 &  & ENZYMES & IMDB-B & MUTAG & NCI1 & PROTEINS \\
Lift & Rewiring &  &  &  &  &  \\
\midrule
\multirow[c]{3}{*}{None} 
    & None & \cellcolor{babyblue}$18.3 \pm 1.2$ & \cellcolor{babyblue}$49.5 \pm 1.5$ & \cellcolor{babyblue}$64.5 \pm 5.8$ & \cellcolor{babyblue}$55.2 \pm 1.0$ & \cellcolor{babyblue}$62.2 \pm 1.4$ \\
    & AFR4 & $17.0 \pm 1.7$ & $48.6 \pm 2.3$ & $65.5 \pm 3.5$ & \cellcolor{babypink}$54.4 \pm 0.7$ & \cellcolor{babypink}$65.0 \pm 1.5$ \\
    & FoSR & $18.2 \pm 1.7$ & \cellcolor{babypink}$50.0 \pm 1.8$ & $52.5 \pm 6.6$ & $49.3 \pm 0.8$ & $64.8 \pm 1.1$ \\
    & SDRF & \cellcolor{babypink}$21.5 \pm 1.6$ & $49.7 \pm 1.7$ & \cellcolor{babypink}$70.0 \pm 2.6$ & $52.4 \pm 0.7$ & $59.9 \pm 1.4$ \\
    \cline{2-7}
\multirow[c]{3}{*}{Clique} 
    & None & \cellcolor{babyblue}$14.5 \pm 1.4$ & \cellcolor{babyblue}$48.7 \pm 2.2$ & \cellcolor{babyblue}$70.0 \pm 3.3$ & \cellcolor{babyblue}$50.0 \pm 1.3$ & \cellcolor{babyblue}$59.9 \pm 1.8$ \\
    & AFR4 & $16.3 \pm 1.0$ & \oneday & $62.5 \pm 3.0$ & \cellcolor{babypink}$56.8 \pm 0.8$ & \cellcolor{babypink}$59.1 \pm 1.4$ \\
    & FoSR & $16.0 \pm 1.4$ & \cellcolor{babypink}$47.8 \pm 1.6$ & $60.5 \pm 5.8$ & $50.0 \pm 0.6$ & $50.0 \pm 4.3$ \\
    & SDRF & \cellcolor{babypink}$16.8 \pm 0.9$ & \halfday & \cellcolor{babypink}$69.5 \pm 2.6$ & $50.8 \pm 0.7$ & $54.9 \pm 3.2$ \\
    \cline{2-7}
\multirow[c]{3}{*}{Ring} 
    & None & \cellcolor{babyblue}$16.5 \pm 1.6$ & \cellcolor{babyblue}$50.1 \pm 1.9$ & \cellcolor{babyblue}$65.5 \pm 3.6$ & \cellcolor{babyblue}$51.5 \pm 1.2$ & \cellcolor{babyblue}$44.8 \pm 2.3$ \\
    & AFR4 & \cellcolor{babypink}$19.3 \pm 1.2$ & \oneday & $72.5 \pm 3.2$ & $50.6 \pm 0.6$ & \oneday \\
    & FoSR & $18.7 \pm 1.6$ & \cellcolor{babypink}$49.9 \pm 1.9$ & \cellcolor{babypink}$75.0 \pm 5.7$ & $50.3 \pm 0.8$ & $47.6 \pm 3.0$ \\
    & SDRF & $19.0 \pm 1.3$ & \halfday & $70.0 \pm 2.7$ & \cellcolor{babypink}$51.4 \pm 0.4$ & \cellcolor{babypink}$49.3 \pm 3.6$ \\
\bottomrule
\end{tabular}
\end{subtable}
\begin{subtable}[h]{\textwidth}
\centering
\caption{\textbf{GCN}}
\label{tab:gcn}
\begin{tabular}{lllllll}
\toprule
 &  & ENZYMES & IMDB-B & MUTAG & NCI1 & PROTEINS \\
Lift & Rewiring &  &  &  &  &  \\
\midrule
\multirow[c]{3}{*}{None} 
    & None & \cellcolor{babyblue}$32.2 \pm 2.0$ & \cellcolor{babyblue}$49.1 \pm 1.4$ & \cellcolor{babyblue}$71.0 \pm 3.8$ & \cellcolor{babyblue}$48.3 \pm 0.6$ & \cellcolor{babyblue}$73.1 \pm 1.2$ \\
    & AFR4 & $30.5 \pm 2.5$ & \cellcolor{babypink}$48.6 \pm 1.0$ & $69.0 \pm 2.7$ & $48.4 \pm 0.6$ & $72.6 \pm 1.5$ \\
    & FoSR & $27.2 \pm 2.5$ & $47.9 \pm 1.0$ & \cellcolor{babypink}$83.0 \pm 1.5$ & \cellcolor{babypink}$49.1 \pm 0.8$ & \cellcolor{babypink}$75.8 \pm 1.7$ \\
    & SDRF & \cellcolor{babypink}$30.7 \pm 1.5$ & $46.7 \pm 1.2$ & $72.5 \pm 3.5$ & $48.9 \pm 0.5$ & $74.5 \pm 1.5$ \\
    \cline{2-7}
\multirow[c]{3}{*}{Clique} 
    & None & \cellcolor{babyblue}$30.7 \pm 1.2$ & \cellcolor{babyblue}$64.0 \pm 3.1$ & \cellcolor{babyblue}$67.0 \pm 3.5$ & \cellcolor{babyblue}$48.4 \pm 0.4$ & \cellcolor{babyblue}$69.9 \pm 0.6$ \\
    & AFR4 & $28.2 \pm 2.2$ & \oneday & $66.5 \pm 3.0$ & $48.3 \pm 0.6$ & $72.5 \pm 0.9$ \\
    & FoSR & $26.0 \pm 2.1$ & \cellcolor{babypink}$65.5 \pm 3.1$ & \cellcolor{babypink}$81.5 \pm 2.9$ & $48.8 \pm 1.1$ & \cellcolor{babypink}$75.0 \pm 1.4$ \\
    & SDRF & \cellcolor{babypink}$30.2 \pm 2.4$ & \halfday & $72.0 \pm 2.6$ & \cellcolor{babypink}$49.6 \pm 0.6$ & $72.1 \pm 1.2$ \\
    \cline{2-7}
\multirow[c]{3}{*}{Ring} 
    & None & \cellcolor{babyblue}$34.8 \pm 1.3$ & \cellcolor{babyblue}$46.9 \pm 1.4$ & \cellcolor{babyblue}$72.0 \pm 2.7$ & \cellcolor{babyblue}$49.3 \pm 0.9$ & \cellcolor{babyblue}$72.2 \pm 1.3$ \\
    & AFR4 & $29.0 \pm 1.5$ & \oneday & $72.5 \pm 2.3$ & $49.1 \pm 0.6$ & \oneday \\
    & FoSR & $28.8 \pm 1.6$ & \cellcolor{babypink}$48.0 \pm 1.2$ & \cellcolor{babypink}$77.5 \pm 2.4$ & $47.9 \pm 0.6$ & $71.3 \pm 1.6$ \\
    & SDRF & \cellcolor{babypink}$32.0 \pm 1.4$ & \halfday & $67.0 \pm 2.8$ & \cellcolor{babypink}$49.4 \pm 0.6$ & \cellcolor{babypink}$72.7 \pm 1.2$ \\
\bottomrule
\end{tabular}
\end{subtable}

\begin{subtable}[h]{\textwidth}
\centering
\caption{\textbf{GIN}}
\label{tab:gin}
\begin{tabular}{lllllll}
\toprule
 &  & ENZYMES & IMDB-B & MUTAG & NCI1 & PROTEINS \\
Lift & Rewiring &  &  &  &  &  \\
\midrule
\multirow[c]{3}{*}{None} 
    & None & \cellcolor{babyblue}$47.2 \pm 1.9$ & \cellcolor{babyblue}$71.7 \pm 1.5$ & \cellcolor{babyblue}$83.0 \pm 3.1$ & \cellcolor{babyblue}$77.2 \pm 0.5$ & \cellcolor{babyblue}$70.6 \pm 1.4$ \\
    & AFR4 & \cellcolor{babypink}$50.0 \pm 2.7$ & \cellcolor{babypink}$69.7 \pm 1.8$ & \cellcolor{babypink}$88.0 \pm 2.4$ & \cellcolor{babypink}$77.9 \pm 0.5$ & $70.1 \pm 0.9$ \\
    & FoSR & $33.3 \pm 1.7$ & $67.1 \pm 1.5$ & $75.5 \pm 1.7$ & $64.8 \pm 0.9$ & \cellcolor{babypink}$72.2 \pm 0.8$ \\
    & SDRF & $45.5 \pm 2.1$ & $65.8 \pm 2.0$ & $84.5 \pm 2.8$ & $76.3 \pm 1.0$ & $72.0 \pm 1.5$ \\
    \cline{2-7}
\multirow[c]{3}{*}{Clique} 
    & None & \cellcolor{babyblue}$44.0 \pm 1.7$ & \cellcolor{babyblue}$69.1 \pm 1.2$ & \cellcolor{babyblue}$83.0 \pm 2.8$ & \cellcolor{babyblue}$78.8 \pm 0.7$ & \cellcolor{babyblue}$68.7 \pm 1.4$ \\
    & AFR4 & \cellcolor{babypink}$48.5 \pm 2.2$ & \oneday & \cellcolor{babypink}$82.5 \pm 2.6$ & \cellcolor{babypink}$78.2 \pm 0.6$ & $67.6 \pm 0.8$ \\
    & FoSR & $39.3 \pm 1.4$ & \cellcolor{babypink}$70.8 \pm 1.1$ & $79.0 \pm 2.6$ & $63.5 \pm 0.7$ & \cellcolor{babypink}$72.8 \pm 1.2$ \\
    & SDRF & $41.7 \pm 2.2$ & \halfday & $73.5 \pm 3.4$ & $76.9 \pm 0.7$ & $69.2 \pm 0.8$ \\
    \cline{2-7}
\multirow[c]{3}{*}{Ring} 
    & None & \cellcolor{babyblue}$46.7 \pm 2.4$ & \cellcolor{babyblue}$70.1 \pm 1.7$  & \cellcolor{babyblue}$88.0 \pm 2.1$ & \cellcolor{babyblue}$78.9 \pm 0.6$ & \cellcolor{babyblue}$69.8 \pm 1.4$ \\
    & AFR4 & \cellcolor{babypink}$47.0 \pm 1.6$ & \oneday & \cellcolor{babypink}$89.0 \pm 1.9$ & \cellcolor{babypink}$77.5 \pm 0.9$ & \oneday \\
    & FoSR & $37.5 \pm 1.5$ & \cellcolor{babypink}$73.1 \pm 1.1$ & $84.0 \pm 2.1$ & $65.6 \pm 0.8$ & $72.0 \pm 1.3$ \\
    & SDRF & $41.2 \pm 1.6$ & \halfday & $79.0 \pm 2.7$ & $75.9 \pm 0.9$ & \cellcolor{babypink}$72.1 \pm 0.9$ \\
\bottomrule
\end{tabular}
\end{subtable}

\caption{Baseline and rewiring results for SGC, GCN, and GIN. Numbers highlighted in blue correspond to no rewiring and numbers highlighted in red are the best among the rewiring methods. }
\label{tab:all-graph-models}
\end{table}
\begin{table}[htbp]
\centering
\scriptsize
\begin{subtable}[h]{\textwidth}
\centering
\caption{\textbf{RGCN}}
\label{tab:rgcn}
\begin{tabular}{lllllll}
\toprule
 &  & ENZYMES & IMDB-B & MUTAG & NCI1 & PROTEINS \\
Lift & Rewiring &  &  &  &  &  \\
\midrule
\multirow[c]{3}{*}{None} 
    & None & \cellcolor{babyblue}$33.8 \pm 1.6$ & \cellcolor{babyblue}$47.6 \pm 1.4$ & \cellcolor{babyblue}$72.5 \pm 2.5$ & \cellcolor{babyblue}$53.2 \pm 0.7$ & \cellcolor{babyblue}$71.9 \pm 1.6$ \\
    & AFR4 & $36.2 \pm 2.3$ & $48.1 \pm 1.0$ & $69.5 \pm 3.4$ & $54.3 \pm 1.1$ & $72.8 \pm 1.2$ \\
    & FoSR & $39.3 \pm 1.6$ & \cellcolor{babypink}$68.0 \pm 1.3$ & \cellcolor{babypink}$83.5 \pm 1.8$ & $62.4 \pm 0.5$ & $71.2 \pm 1.8$ \\
    & SDRF & \cellcolor{babypink}$42.5 \pm 1.3$ & $59.1 \pm 2.2$ & $76.0 \pm 1.9$ & \cellcolor{babypink}$63.4 \pm 1.1$ & \cellcolor{babypink}$75.6 \pm 1.2$ \\
    \cline{2-7}
\multirow[c]{3}{*}{Clique} 
    & None & \cellcolor{babyblue}$48.8 \pm 1.2$ & \cellcolor{babyblue}$71.0 \pm 1.0$ & \cellcolor{babyblue}$79.5 \pm 1.7$ & \cellcolor{babyblue}$72.9 \pm 0.8$ & \cellcolor{babyblue}$72.4 \pm 1.6$ \\
    & AFR4 & $42.3 \pm 2.4$ & \oneday & $78.0 \pm 2.1$ & \cellcolor{babypink}$75.0 \pm 0.9$ & \cellcolor{babypink}$74.2 \pm 1.2$ \\
    & FoSR & $39.3 \pm 2.5$ & \cellcolor{babypink}$69.7 \pm 1.5$ & $79.0 \pm 3.2$ & $64.5 \pm 0.6$ & $70.9 \pm 1.4$ \\
    & SDRF & \cellcolor{babypink}$45.2 \pm 1.5$ & \halfday & \cellcolor{babypink}$81.5 \pm 3.8$ & $68.5 \pm 0.6$ & $70.6 \pm 1.6$ \\
    \cline{2-7}
\multirow[c]{3}{*}{Ring} 
    & None & \cellcolor{babyblue}$35.2 \pm 1.7$ & \cellcolor{babyblue}$71.1 \pm 1.4$ & \cellcolor{babyblue}$83.5 \pm 2.7$ & \cellcolor{babyblue}$73.9 \pm 0.5$ & \cellcolor{babyblue}$70.7 \pm 1.6$ \\
    & AFR4 & $34.8 \pm 2.1$ & \oneday & $80.5 \pm 2.9$ & \cellcolor{babypink}$73.5 \pm 0.5$ & \oneday \\
    & FoSR & $38.5 \pm 1.5$ & \cellcolor{babypink}$70.0 \pm 1.6$ & \cellcolor{babypink}$84.0 \pm 2.1$ & $64.2 \pm 0.8$ & \cellcolor{babypink}$71.3 \pm 1.2$ \\
    & SDRF & \cellcolor{babypink}$45.7 \pm 1.5$ & \halfday & $79.5 \pm 3.7$ & $68.3 \pm 0.6$ & $68.8 \pm 1.1$ \\
\bottomrule
\end{tabular}
\end{subtable}
\begin{subtable}[h]{\textwidth}
\centering
\caption{\textbf{RGIN}}
\label{tab:rgin}
\begin{tabular}{lllllll}
\toprule
 &  & ENZYMES & IMDB-B & MUTAG & NCI1 & PROTEINS \\
Lift & Rewiring &  &  &  &  &  \\
\midrule
\multirow[c]{3}{*}{None} 
    & None & \cellcolor{babyblue}$46.8 \pm 1.8$ & \cellcolor{babyblue}$69.6 \pm 1.6$ & \cellcolor{babyblue}$81.5 \pm 1.7$ & \cellcolor{babyblue}$76.8 \pm 1.1$ & \cellcolor{babyblue}$70.8 \pm 1.2$ \\
    & AFR4 & \cellcolor{babypink}$49.8 \pm 2.0$ & \cellcolor{babypink}$73.2 \pm 1.4$ & \cellcolor{babypink}$85.5 \pm 2.0$ & \cellcolor{babypink}$77.0 \pm 0.7$ & $71.1 \pm 1.3$ \\
    & FoSR & $48.2 \pm 1.4$ & $48.9 \pm 2.9$ & \cellcolor{babypink}$85.5 \pm 2.8$ & $55.1 \pm 2.6$ & \cellcolor{babypink}$72.4 \pm 1.4$ \\
    & SDRF & $49.7 \pm 2.1$ & $50.4 \pm 3.2$ & $85.0 \pm 1.7$ & $52.8 \pm 2.7$ & $70.9 \pm 1.2$ \\
    \cline{2-7}
\multirow[c]{3}{*}{Clique} 
    & None & \cellcolor{babyblue}$50.8 \pm 1.5$ & \cellcolor{babyblue}$71.6 \pm 0.9$ & \cellcolor{babyblue}$86.0 \pm 2.3$ & \cellcolor{babyblue}$79.2 \pm 0.6$ & \cellcolor{babyblue}$71.5 \pm 1.5$ \\
    & AFR4 & \cellcolor{babypink}$55.8 \pm 2.5$ & \oneday & \cellcolor{babypink}$85.0 \pm 2.4$ & \cellcolor{babypink}$79.5 \pm 0.4$ & $71.0 \pm 1.1$ \\
    & FoSR & $46.2 \pm 1.4$ & \cellcolor{babypink}$69.0 \pm 1.4$  & $79.0 \pm 2.4$ & $72.7 \pm 0.6$ & \cellcolor{babypink}$71.8 \pm 1.7$ \\
    & SDRF & $45.5 \pm 1.6$ & \halfday & $83.0 \pm 3.2$ & $76.4 \pm 0.9$ & $69.1 \pm 2.4$ \\
    \cline{2-7}
\multirow[c]{3}{*}{Ring} 
    & None & \cellcolor{babyblue}$45.3 \pm 1.3$ & \cellcolor{babyblue}$68.6 \pm 1.2$ & \cellcolor{babyblue}$87.0 \pm 2.9$ & \cellcolor{babyblue}$78.4 \pm 0.7$ & \cellcolor{babyblue}$68.8 \pm 1.5$ \\
    & AFR4 & \cellcolor{babypink}$49.2 \pm 1.5$ & \oneday & \cellcolor{babypink}$87.5 \pm 2.4$ & \cellcolor{babypink}$79.8 \pm 0.7$ & \oneday \\
    & FoSR & $47.3 \pm 1.6$ & \cellcolor{babypink}$67.2 \pm 1.8$ & $82.5 \pm 4.2$ & $72.9 \pm 0.7$ & \cellcolor{babypink}$71.3 \pm 1.5$ \\
    & SDRF & $47.2 \pm 2.6$ & \halfday & $81.0 \pm 3.3$ & $76.4 \pm 0.5$ & $70.0 \pm 0.9$ \\
\bottomrule
\end{tabular}
\end{subtable}
\caption{Baseline and rewiring results for RGCN and RGIN. Numbers highlighted in blue correspond to no rewiring and numbers highlighted in red are the best among the rewiring methods. }
\label{tab:all-relational-models}
\end{table}
\begin{table}[htbp]
\centering
\scriptsize

\begin{subtable}[h]{\textwidth}
\centering
\caption{\textbf{SIN}}
\label{tab:sin}
\begin{tabular}{lllllll}
\toprule
 &  & ENZYMES & IMDB-B & MUTAG & NCI1 & PROTEINS \\
Lift & Rewiring &  &  &  &  &  \\
\midrule
\multirow[c]{3}{*}{None} 
    & None & \cellcolor{babyblue}$47.5 \pm 2.3$ & \cellcolor{babyblue}$70.0 \pm 1.4$ & \cellcolor{babyblue}$88.5 \pm 3.0$ & \cellcolor{babyblue}$77.0 \pm 0.6$ & \cellcolor{babyblue}$70.2 \pm 1.3$ \\
    & AFR4 & $44.0 \pm 2.5$ & \cellcolor{babypink}$71.0 \pm 1.0$ & \cellcolor{babypink}$85.5 \pm 1.7$ & \cellcolor{babypink}$76.4 \pm 0.4$ & $69.6 \pm 1.3$ \\
    & FoSR & $45.7 \pm 2.6$ & $63.0 \pm 2.7$ & \cellcolor{babypink}$85.5 \pm 2.8$ & $61.3 \pm 2.4$ & \cellcolor{babypink}$73.2 \pm 1.5$ \\
    & SDRF & \cellcolor{babypink}$46.8 \pm 2.1$ & $62.8 \pm 3.3$ & $80.0 \pm 2.1$ & $54.7 \pm 3.5$ & $70.2 \pm 1.8$ \\
    \cline{2-7}
\multirow[c]{3}{*}{Clique} 
    & None & \cellcolor{babyblue}$51.0 \pm 2.4$ & \cellcolor{babyblue}$53.0 \pm 1.9$ & \cellcolor{babyblue}$87.0 \pm 3.2$ & \cellcolor{babyblue}$76.6 \pm 1.3$ & \cellcolor{babyblue}$66.9 \pm 1.3$ \\
    & AFR4 & \cellcolor{babypink}$46.5 \pm 1.2$ & \oneday & \cellcolor{babypink}$83.5 \pm 1.7$ & \cellcolor{babypink}$75.4 \pm 0.7$ & $69.6 \pm 1.0$ \\
    & FoSR & $38.2 \pm 2.2$ & \cellcolor{babypink}$64.0 \pm 2.3$ & \cellcolor{babypink}$83.5 \pm 2.7$ & $65.2 \pm 0.7$ & \cellcolor{babypink}$70.4 \pm 1.2$ \\
    & SDRF & $44.8 \pm 1.9$ & \halfday & $80.0 \pm 3.5$ & $73.9 \pm 0.4$ & $68.7 \pm 1.2$ \\
    \cline{2-7}
\multirow[c]{3}{*}{Ring} 
    & None & \cellcolor{babyblue}$40.3 \pm 2.2$ & \cellcolor{babyblue}$50.6 \pm 1.9$ & \cellcolor{babyblue}$85.0 \pm 2.1$ & \cellcolor{babyblue}$80.0 \pm 0.8$ & \cellcolor{babyblue}$70.6 \pm 1.1$ \\
    & AFR4 & \cellcolor{babypink}$48.0 \pm 2.0$ & \oneday & \cellcolor{babypink}$88.5 \pm 2.5$ & \cellcolor{babypink}$79.1 \pm 0.9$ & \oneday \\
    & FoSR & $39.8 \pm 1.6$ & \cellcolor{babypink}$60.9 \pm 2.1$ & $86.0 \pm 2.4$ & $71.2 \pm 0.6$ & \cellcolor{babypink}$72.1 \pm 0.7$ \\
    & SDRF & $39.2 \pm 1.4$ & \halfday & $80.5 \pm 2.5$ & $75.6 \pm 0.5$ & $67.5 \pm 1.1$ \\
\bottomrule
\end{tabular}
\end{subtable}

\begin{subtable}[h]{\textwidth}
\centering
\caption{\textbf{CIN}}
\label{tab:cin}
\begin{tabular}{lllllll}
\toprule
 &  & ENZYMES & IMDB-B & MUTAG & NCI1 & PROTEINS \\
Lift & Rewiring &  &  &  &  &  \\
\midrule
\multirow[c]{3}{*}{None}
    & None & \cellcolor{babyblue}$50.0 \pm 1.9$ & \cellcolor{babyblue}$58.1 \pm 4.0$ & \cellcolor{babyblue}$86.5 \pm 1.8$ & \cellcolor{babyblue}$51.4 \pm 2.5$ & \cellcolor{babyblue}$70.7 \pm 1.0$ \\
    & AFR4 & $48.2 \pm 1.9$ & \cellcolor{babypink}$58.8 \pm 4.3$ & \cellcolor{babypink}$87.0 \pm 2.4$ & $53.3 \pm 2.6$ & \cellcolor{babypink}$71.0 \pm 1.4$ \\
    & FoSR & \cellcolor{babypink}$49.0 \pm 2.0$ & $58.4 \pm 2.7$ & $81.0 \pm 1.8$ & \cellcolor{babypink}$66.2 \pm 2.0$ & $70.9 \pm 1.3$ \\
    & SDRF & $44.0 \pm 1.8$ & $55.3 \pm 2.5$ & $82.5 \pm 3.0$ & $59.0 \pm 4.3$ & $69.5 \pm 1.5$ \\
    \cline{2-7}
\multirow[c]{3}{*}{Clique} 
    & None & \cellcolor{babyblue}$49.8 \pm 1.9$ & \cellcolor{babyblue}$52.6 \pm 2.4$ & \cellcolor{babyblue}$85.5 \pm 2.8$ & \cellcolor{babyblue}$51.8 \pm 2.3$ & \cellcolor{babyblue}$70.7 \pm 1.2$ \\
    & AFR4 & \cellcolor{babypink}$46.7 \pm 1.3$ & \oneday & \cellcolor{babypink}$86.5 \pm 2.6$ & $49.0 \pm 0.5$ & $69.2 \pm 1.3$ \\
    & FoSR & $37.2 \pm 1.8$ & \cellcolor{babypink}$68.1 \pm 1.6$ & $82.5 \pm 3.0$ & $66.9 \pm 0.9$ & \cellcolor{babypink}$70.3 \pm 0.8$ \\
    & SDRF & $42.0 \pm 2.3$ & \halfday & $78.5 \pm 4.1$ & \cellcolor{babypink}$72.5 \pm 0.8$ & $67.9 \pm 1.8$ \\
    \cline{2-7}
\multirow[c]{3}{*}{Ring} 
    & None & \cellcolor{babyblue}$47.5 \pm 2.0$ & \cellcolor{babyblue}$48.6 \pm 1.6$ & \cellcolor{babyblue}$93.5 \pm 2.1$ & \cellcolor{babyblue}$51.6 \pm 3.2$ & \cellcolor{babyblue}$68.7 \pm 1.4$\\
    & AFR4 & \cellcolor{babypink}$49.5 \pm 2.0$ & \oneday              & $92.5 \pm 1.5$ & $58.2 \pm 4.2$ & \oneday\\
    & FoSR & $37.7 \pm 2.1$ & \cellcolor{babypink}$66.1 \pm 2.0$ & \cellcolor{babypink}$95.0 \pm 1.3$ & $70.4 \pm 0.8$ & \cellcolor{babypink}$68.5 \pm 1.6$\\
    & SDRF & $38.2 \pm 2.2$ & \halfday              & $88.0 \pm 2.4$ & \cellcolor{babypink}$76.5 \pm 0.5$ & $65.6 \pm 1.3$\\
\bottomrule
\end{tabular}
\end{subtable}

\begin{subtable}[h]{\textwidth}
\centering
\caption{\textbf{CIN++}}
\label{tab:cin++}
\begin{tabular}{lllllll}
\toprule
 &  & ENZYMES & IMDB-B & MUTAG & NCI1 & PROTEINS \\
Lift & Rewiring &  &  &  &  &  \\
\midrule
\multirow[c]{3}{*}{None} 
    & None & \cellcolor{babyblue}$48.5 \pm 1.9$ & \cellcolor{babyblue}$66.6 \pm 3.7$ & \cellcolor{babyblue}$85.0 \pm 3.4$ & \cellcolor{babyblue}$60.8 \pm 3.8$ & \cellcolor{babyblue}$67.9 \pm 1.9$ \\
    & AFR4 & \cellcolor{babypink}$51.0 \pm 1.5$ & \cellcolor{babypink}$61.0 \pm 4.1$ & \cellcolor{babypink}$91.0 \pm 2.3$ & $54.8 \pm 3.7$ & $70.7 \pm 1.3$ \\
    & FoSR & $50.5 \pm 1.8$ & $56.0 \pm 3.9$ & $82.0 \pm 2.5$ & \cellcolor{babypink}$64.8 \pm 3.1$ & \cellcolor{babypink}$71.4 \pm 1.4$ \\
    & SDRF & $45.8 \pm 2.2$ & $53.3 \pm 2.3$ & $87.5 \pm 2.3$ & $58.0 \pm 4.4$ & $69.3 \pm 1.6$ \\
    \cline{2-7}
\multirow[c]{3}{*}{Clique} 
    & None & \cellcolor{babyblue}$50.5 \pm 2.1$ & \cellcolor{babyblue}$62.8 \pm 3.8$ & \cellcolor{babyblue}$90.5 \pm 2.2$ & \cellcolor{babyblue}$61.5 \pm 4.6$ & \cellcolor{babyblue}$68.3 \pm 1.3$ \\
    & AFR4 & \cellcolor{babypink}$52.7 \pm 1.6$ & \oneday & \cellcolor{babypink}$84.5 \pm 3.3$ & $60.6 \pm 4.8$ & $66.7 \pm 1.3$ \\
    & FoSR & $43.0 \pm 2.1$ & \cellcolor{babypink}$64.7 \pm 1.5$ & $78.5 \pm 2.4$ & $72.9 \pm 0.5$ & \cellcolor{babypink}$71.9 \pm 1.0$ \\
    & SDRF & $47.7 \pm 1.5$ & \halfday & $81.0 \pm 3.0$ & \cellcolor{babypink}$76.8 \pm 0.4$ & $70.0 \pm 1.7$ \\
    \cline{2-7}
\multirow[c]{3}{*}{Ring} 
    & None & \cellcolor{babyblue}$47.5 \pm 1.7$ & \cellcolor{babyblue}$66.0 \pm 1.4$ & \cellcolor{babyblue}$85.5 \pm 2.0$ & \cellcolor{babyblue}$56.8 \pm 4.5$ & \cellcolor{babyblue}$68.1 \pm 1.2$ \\
    & AFR4 & \cellcolor{babypink}$46.3 \pm 1.8$ & \oneday              & \cellcolor{babypink}$90.0 \pm 2.7$ & $59.5 \pm 4.1$ & \oneday \\
    & FoSR & $46.0 \pm 2.8$ & \cellcolor{babypink}$67.8 \pm 1.3$ & $85.0 \pm 1.7$ & $71.9 \pm 0.8$ & \cellcolor{babypink}$70.1 \pm 1.2$ \\
    & SDRF & $43.2 \pm 1.5$ & \halfday              & $81.5 \pm 3.1$ & \cellcolor{babypink}$76.0 \pm 0.6$ & $69.1 \pm 1.0$ \\
\bottomrule
\end{tabular}
\end{subtable}

\caption{Baseline and rewiring results for SIN, CIN, and CIN++. Numbers highlighted in blue correspond to no rewiring and numbers highlighted in red are the best among the rewiring methods. }
\label{tab:all-simplicial-models}
\end{table}
\begin{figure}[htbp]
    \centering
    \includegraphics[width=0.9\textwidth]{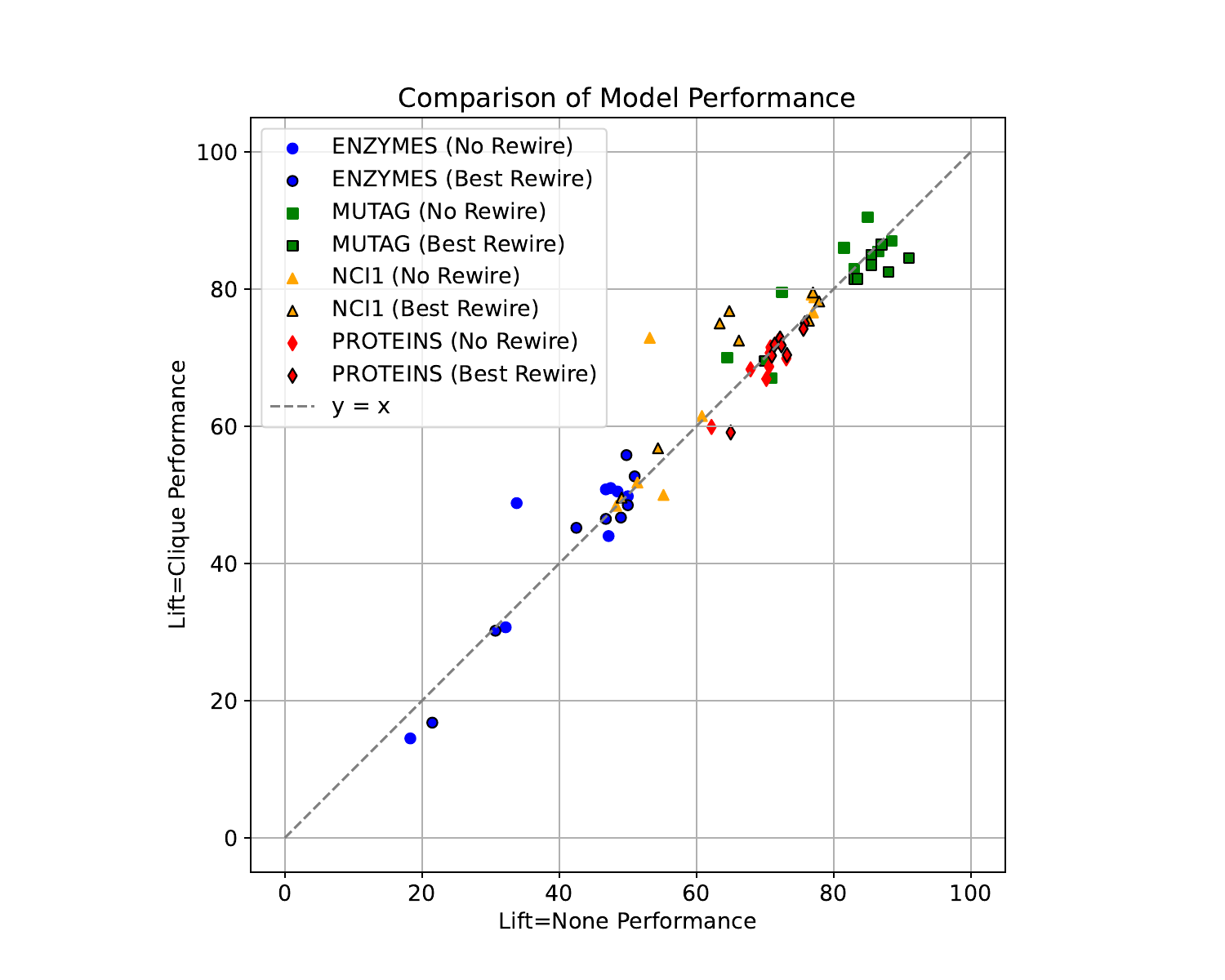}
    \caption{Comparison of model performance between graph representations (Lift=None) and corresponding clique complex representations (Lift=Clique). Each point corresponds to a specific model on a given dataset, comparing the performance with and without rewiring.}
    \label{fig:graph_vs_clique_comparison}
\end{figure}

\subsection{Synthetic Benchmark: RingTransfer}
\label{appendix:experiments:ringtransfer}
The RingTransfer benchmark is a graph transfer task introduced by \cite{bodnar2021weisfeiler}. Each graph in the dataset is a cycle graph of size $2k$ for some $k\geq 1$. A node $i$ is randomly selected as the \textit{root node} and the node $j = i+k (\text{ mod} ) 2k$ on the opposite side of the cycle is designated as the feature node.  All features are initialized to zero except for the feature node which contains a one-hot label among the 5 classes. The task is to then predict the label by reading the feature stored in the root node. This is shown for a cycle graph with 10 nodes in Figure~\ref{fig:ringtransfer_graph}, where the red node is the root node and the green node is the opposite node. Note that the size of the ring determines the number of layers necessary for a model to complete the task successfully. By increasing the size of the ring, we can test how depth and the task distance impact training performance. When increasing the ring size, the model depth is set to $2k+1$ so that task information remains within the model's receptive field.

\begin{figure}
    \centering
    \includegraphics[width=0.3\linewidth]{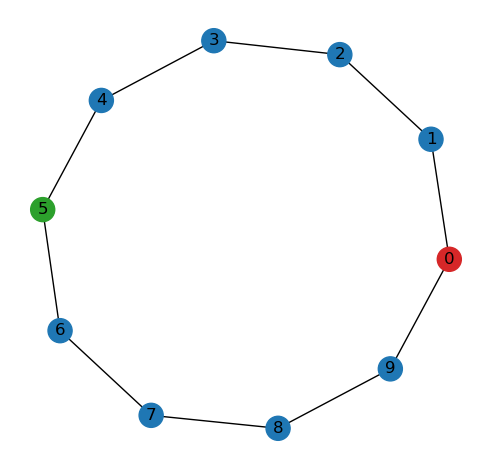}
    \caption{Graph for the RingTransfer experiment.}
    \label{fig:ringtransfer_graph}
\end{figure}

\begin{table}[htbp]
\centering
\begin{tabular}{|l|l|}
\hline
\textbf{Parameter}                  & \textbf{Value} \\ \hline
Number of Graphs                    & 1000           \\ \hline
Trials                              & 10              \\ \hline
Default Neighbors Match Cliques     & 3              \\ \hline
Default Neighbors Match Clique Size & 5              \\ \hline
Default Ring Transfer Nodes         & 10             \\ \hline
\end{tabular}
\caption{Synthetic Parameters}
\label{table:synthetic_config}
\end{table}

\section{Implementation Details}
\label{appendix:implementation-details}

\subsection{Model Details}
Each model consists of multiple graph-convolution layers. For more details about the convolutional layers, refer to Appendix~\ref{appendix:implementation:layers}. The hyperparameters are presented in Table~\ref{table:model_config}. For the real world experiments, each model has a neural network head consisting of a pooling operation followed by a two layer neural network with ReLU and dropout. The only exception to this is the SGC model which uses pooling and a single linear layer for the head. For the synthetic graph transfer experiments, there is no neural network head. Instead, the neural network just returns the feature of the root node at the final layer. 

\begin{table}[htbp]
\centering
\begin{tabular}{|l|l|}
\hline
\textbf{Parameter} & \textbf{Value} \\ \hline
Layers             & 4              \\ \hline
Hidden             & 64             \\ \hline
Dropout            & 0.5            \\ \hline
Pooling            & mean           \\ \hline
Multidimensional   & False          \\ \hline
Max Dimension      & 2              \\ \hline
\end{tabular}
\caption{Model Parameters}
\label{table:model_config}
\end{table}

\begin{table}[htbp]
\centering
\begin{tabular}{|l|l|}
\hline
\textbf{Parameter} & \textbf{Value} \\ \hline
Optimizer          & Adam           \\ \hline
Batch Size         & 64             \\ \hline
Learning Rate      & 0.001          \\ \hline
Max Epochs         & 500            \\ \hline
Stop Criteria      & Validation     \\ \hline
Stop Factor        & 1.01           \\ \hline
Stop Patience      & 100            \\ \hline
\end{tabular}
\caption{Optimization Parameters}
\label{table:optimization_config}
\end{table}

\subsection{Hardware}
Experiments were run on a system with Intel(R) Xeon(R) Gold 6152 CPUs @ 2.10GHz and NVIDIA GeForce RTX 2080 Ti GPUs. 

\subsection{Libraries}

Code Use:
Rewiring algorithms were adapted from the works of \cite{karhadkar2023fosr} and \cite{fesser2023mitigating}. 
We used PyTorch Geometric Benchmarks for neural network hyperparameters. 
We build on the work of \cite{giusti2024cinplusplus} for clique graph lifting and ring transfer. The real world graph datasets were sourced from the work of \cite{morris2020}. 
The SGC model was adapted from the work of \cite{wu2019simplifying}. 
For curvature implementations, we adapt code from \cite{pmlr-v202-nguyen23c} for Ollivier-Ricci and \cite{fesser2023mitigating} for BFC and AFC. 

Python libraries: Torch Geometric \citep{fey2019fast}. 
Torch \citep{paszke2019pytorch}. 
Scikit-Learn \citep{scikit-learn}. 
Ray \citep{moritz2018ray}. 
Ray Tune \citep{liaw2018tune}. 
Seaborn \citep{Waskom2021}. 
Matplotlib \citep{Hunter:2007}. 
Networkx \citep{osti_960616}. 
Pandas \cite{reback2020pandas}. 
Numpy \cite{harris2020array}. 
Numba \citep{10.1145/2833157.2833162}.  
Gudhi \citep{gudhi:urm}. 
POT \citep{JMLR:v22:20-451}.

\subsection{Graph Lifting}

We lift graphs to clique and ring complexes for relational message passing using Algorithm~\ref{alg:clique_complex}. We refer to this graph lifting as the \textit{Clique} and \textit{Ring} lifts. Note that when representing a graph as a complex (e.g., \texttt{max\_dim} = 0), we simply add the $\uparrow$ relation to the edges. We refer to this graph lifting as the \textit{None} lift. The resulting complex has trivial cell intersections and does not contain cell unions. This is outlined in Algorithm~\ref{alg:none}. We note that whereas clique lifts introduce higher-order simplices by introducing simplices for cliques in the graph, ring lifts instead promote cycles to $2$-dimensional cells.

\begin{algorithm}[htbp]
\caption{Graph Lifting (None)}
\label{alg:none}

\textbf{Input:} Unweighted undirected graph $G = (V, E)$, node features $(\mathbf{x}_v)_{v \in V}$\\
\textbf{Output:} Relational structure $(\mathcal{S}, \{R_\texttt{upper}\})$ and features $(\mathbf{x}_\sigma)_{\sigma \in \mathcal{S}}$

\begin{algorithmic}[1]
\State Initialize $\mathcal{S} = \emptyset$ \Comment{Entities}
\For{each node $v \in V$}
    \State $\sigma \gets \{v\}$
    \State $\mathcal{S} \gets \mathcal{S} \cup \{\sigma\}$
    \State $\mathbf{x}_\sigma \gets \mathbf{x}_v$ \Comment{Features}
\EndFor
\State Initialize $R_\texttt{upper} = \emptyset$ \Comment{Relations}
\For{each edge $\{u, v\} \in E$}
    \State $\sigma, \tau \gets u, v$
    \State $R_\texttt{upper} \gets R_\texttt{upper} \cup \{(\sigma, \tau), (\tau, \sigma)\}$
\EndFor
\State \textbf{return} $(\mathcal{S}, \{R_\texttt{upper}\})$ and $(\mathbf{x}_\sigma)_{\sigma \in \mathcal{S}}$
\end{algorithmic}
\end{algorithm}

\begin{algorithm}[htbp]
\caption{Graph Lifting (Complex)}
\label{alg:clique_complex}

\textbf{Input:} Unweighted undirected graph $G = (V, E)$, node features $(\mathbf{x}_v)_{v \in V}$, relation types $\mathcal{A} \subseteq \{\texttt{boundary}, \texttt{coboundary}, \texttt{lower}, \texttt{upper}\}$, max simplex dimension \texttt{max\_dim}\\
\textbf{Output:} Relational structure $(\mathcal{S}, \{R_r\}_{r \in \mathcal{A}})$ and features $(\mathbf{x}_\sigma)_{\sigma \in \mathcal{S}}$

\begin{algorithmic}[1]
\State Initialize $\mathcal{S} = \emptyset$ \Comment{Entities}
\For{each non-empty clique $\sigma \subseteq V$ with $|\sigma| \leq \texttt{max\_dim}+1$}
    \State $\mathcal{S} \gets \mathcal{S} \cup \{\sigma\}$
    \State $\mathbf{x}_\sigma \gets \frac{1}{|\sigma|} \sum_{v \in \sigma} \mathbf{x}_v$ \Comment{Features}
\EndFor
\For{each relation type $r \in \mathcal{A}$}
    \State Initialize $R_r = \emptyset$ \Comment{Relations}
    \For{each entity $\sigma \in \mathcal{S}$}
        \If{$r =$ \texttt{boundary}}
            \For{each $\tau \in \mathcal{B}(\sigma)$}
                \State $R_r \gets R_r \cup \{(\sigma, \tau)\}$
            \EndFor
        \ElsIf{$r =$ \texttt{coboundary}}
            \For{each $\tau \in \mathcal{C}(\sigma)$}
                \State $R_r \gets R_r \cup \{(\sigma, \tau)\}$
            \EndFor
        \ElsIf{$r =$ \texttt{lower}}
            \For{each $\tau \in \mathcal{N}_\downarrow(\sigma)$}
                \State $R_r \gets R_r \cup \{(\sigma, \tau, \sigma \cap \tau)\}$
            \EndFor
        \ElsIf{$r =$ \texttt{upper}}
            \For{each $\tau \in \mathcal{N}_\uparrow(\sigma)$}
                \State $R_r \gets R_r \cup \{(\sigma, \tau, \sigma \cup \tau)\}$
            \EndFor
        \EndIf
    \EndFor
\EndFor
\State \textbf{return} $(\mathcal{S}, \{R_r\}_{r \in \mathcal{A}})$ and $(\mathbf{x}_\sigma)_{\sigma \in \mathcal{S}}$
\end{algorithmic}
\end{algorithm}

\subsection{Layers}
\label{appendix:implementation:layers}
We review the implementation of the layers and update rules used in the models considered in this work. In the following, $\mathcal{B}(\cdot)$, $\mathcal{C}(\cdot)$, $\mathcal{N}_\uparrow(\cdot)$, and $\mathcal{N}_\downarrow(\cdot)$ follow the definitions in Definition~\ref{def:adjacency_relations}. Given a relational structure $\mathcal{R} = (\mathcal{S}, R_1, \ldots, R_k)$, the notation $r \in \mathcal{R}$ refers to iterating over the relation indices $r = 1, 2, \ldots, k$. For a binary relation $R \subseteq \mathcal{S} \times \mathcal{S}$, $\mathcal{R}(\cdot)$ denotes the incoming neighborhood of an entity, i.e., for any $\sigma, \tau \in \mathcal{S}$, if $\tau \in \mathcal{N}_R(\sigma)$, then $(\sigma, \tau) \in R$.

\paragraph{SGC}

Following \citet{wu2019simplifying}, the SGC update rule is:
\begin{equation*}
\mathbf{h}_\sigma^{(t + 1)} =  \sum_{r \in \mathcal{R}} \sum_{\tau \in \mathcal{N}_{R_r}(\sigma)} \mathbf{h}_\tau^{(t)},
\end{equation*}
where no learnable weights nor non-linearities are used, and signals are simply aggregated over neighborhoods. 

\paragraph{GCN}

Following \citet{kipf2017semisupervised}, the GCN update rule is: 
\begin{equation*}
\mathbf{h}_\sigma^{(t+1)} = \mathbf{f}\left( \mathbf{W}^{(t)}\left( \sum_{r \in \mathcal{R}} \sum_{\tau \in \mathcal{N}_{R_i}(\sigma) \cup \{\sigma\}} \frac{1}{\sqrt{\hat d_\sigma \hat d_\tau}}\mathbf{h}_\tau^{(t)}\right) \right), 
\end{equation*}
where $\mathbf{f}$ is a point-wise non-linearity (e.g., ReLU), $\mathbf{W}^{(t)}$ is a learnable weight matrix, and $\hat{d}_\tau = \text{deg}(\tau) + 1$.

\paragraph{GIN}

Following \citet{xu2018how}, the GIN update rule is:
\begin{equation*}
\mathbf{h}_{\sigma}^{(t+1)} = \text{MLP}^{(t)}\left((1 + \epsilon)\mathbf{h}_{\sigma}^{(t)} + \sum_{r \in \mathcal{R}} \sum_{\tau \in \mathcal{N}_{R_i}(\sigma)} \mathbf{W}^{(t)}\mathbf{h}_{\tau}^{(t)}\right) , 
\end{equation*}
where $\mathbf{W}^{(t)}$ is a learnable weight matrix and $\epsilon$ is a learnable scalar. Here, MLP consists of two layer fully connected layers, each followed by a point-wise non-linearity (e.g., ReLU) and a BatchNorm layer. 

\paragraph{R-GCN}

Following \citet{schlichtkrull2018modeling}, the R-GCN update rule is:
\begin{equation*}
    \mathbf{h}_{\sigma}^{(t+1)} = \mathbf{f}\left( \mathbf{W}_{\text{root}}^{(t)} \mathbf{h}_{\sigma}^{(t)} + \sum_{r\in \mathcal{R}} \sum_{j\in \mathcal{N}_r(i)}\frac{1}{|\mathcal{N}_r(i)|} \mathbf{W}_{\text{r}}^{(t)} \mathbf{h}_{\sigma}^{(t)} \right),
\end{equation*}
where $\mathbf{f}$ is a component-wise non-linearity (e.g., ReLU), and $\mathbf{W}_{\text{root}}^{(t)}$ and $\mathbf{W}_r^{(t)}$ are learnable weight matrices.

\paragraph{R-GIN}

Following \citet{xu2018how} and \citet{schlichtkrull2018modeling}, the R-GIN update rule is
\begin{equation*}
    \mathbf{h}_{\sigma}^{(t+1)}  =\mathbf{W}_{\text{root}}^{(t)} \mathbf{h}_{\sigma}^{(t)} + \sum_{r\in \mathcal{R}} \text{MLP}_r^{(t)}
    \left( (1+\epsilon_r) \mathbf{h}_{\sigma}^{(t)} + \sum_{\tau\in \mathcal{N}_r(\sigma)} \mathbf{h}_{\tau}^{(t)}\right) , 
\end{equation*}
where $\mathbf{W}_{\text{root}}^{(t)}$ is a learnable weight matrix  and each $\epsilon_\bullet$ is a learnable scalar. Here, each MLP consists of two layer fully connected layers, each followed by a point-wise non-linearity (e.g., ReLU) and a BatchNorm layer. 
and $\text{MLP}_r^{(t)}$ is a relation-specific MLP.

\paragraph{SIN}

Following \citet{pmlr-v139-bodnar21a}, the SIN update rule is:
\begin{equation*}
\mathbf{h}_{\sigma}^{(t+1)} = \text{MLP}_{U, p}^{(t)}\left[
    \begin{array}{c}
        \text{MLP}_{B, p}\left( (1+\epsilon_{\mathcal{B}})\, \mathbf{h}_{\sigma}^{(t)} + \displaystyle\sum\limits_{\delta\in\mathcal{B}(\sigma)} \mathbf{h}_\delta^{(t)} \right) \\[10pt]
        \text{MLP}_{\uparrow, p}\left( (1+\epsilon_\uparrow)\, \mathbf{h}_{\sigma}^{(t)} + \displaystyle\sum\limits_{\delta\in\mathcal{N}_\uparrow(\sigma)} \mathbf{h}_\delta^{(t)} \right)\\[10pt]
        \textit{additionally, when rewiring:}\\[5pt]
        \dbox{$\text{MLP}_{R_{new}, p}\left( (1+\epsilon_{R_{new}})\, \mathbf{h}_{\sigma}^{(t)} + \displaystyle\sum\limits_{\delta\in\mathcal{N}_{R_{new}}(\sigma)} \mathbf{h}_\delta^{(t)} \right)$}
    \end{array}
\right],
\end{equation*}
where $[\cdot]$ denotes column-wise concatenation and each $\epsilon_\bullet$ is a learnable scalar. Here, each MLP consists of two layer fully connected layers, each followed by a point-wise non-linearity (e.g., ReLU) and a BatchNorm layer. 
and $\text{MLP}_r^{(t)}$ is a relation-specific MLP.

\paragraph{CIN}

Following \citet{bodnar2021weisfeiler}, the CIN update rule is:
\begin{equation*}
\mathbf{h}_{\sigma}^{(t+1)} = \text{MLP}_{U, p}^{(t)}\left[
    \begin{array}{c}
        \text{MLP}_{B, p}\left( (1+\epsilon_{\mathcal{B}})\, \mathbf{h}_{\sigma}^{(t)} + \displaystyle\sum\limits_{\tau\in\mathcal{B}(\sigma)} \mathbf{h}_\tau^{(t)} \right) \\[15pt]
        \text{MLP}_{\uparrow, p}\left( (1+\epsilon_\uparrow)\, \mathbf{h}_{\sigma}^{(t)} + \displaystyle\sum\limits_{\substack{\tau\in\mathcal{N}_\uparrow(\sigma) \\ \delta \in C(\sigma, \tau)}} \text{MLP}_{M, p}\left[
            \begin{array}{c}
                \mathbf{h}_\tau^{(t)} \\[5pt]
                \mathbf{h}_\delta^{(t)}
            \end{array}
        \right] \right) \\[15pt]
        \textit{additionally, when rewiring:} \\[5pt]
        \dbox{$\text{MLP}_{{R_{new}}, p}\left( (1+\epsilon_{R_\text{new}})\, \mathbf{h}_{\sigma}^{(t)} + \displaystyle\sum\limits_{\delta\in\mathcal{N}_{R_\text{new}}(\sigma)} \mathbf{h}_\delta^{(t)} \right)$}
    \end{array}
\right],
\end{equation*}
where $[\cdot]$ denotes column-wise concatenation, each $\epsilon_\bullet$ is a learnable scalar, and $\text{MLP}_{M, p}$ is a single fully connected layer projecting back to the dimension of $\mathbf{h}_\sigma^{(t)}$. The other MLPs consists of two layer fully connected layers, each followed by a point-wise non-linearity (e.g., ReLU) and a BatchNorm layer. When entity unions aren't contained in the relational structure due to dimension constraints, we instead use $\mathbf{h}^{(t)}_\delta := \mathbf{0}$.

\paragraph{CIN++}

Following \citet{giusti2024cinplusplus}, the CIN++ update rule is:
\begin{equation*}
\mathbf{h}_{\sigma}^{(t+1)} = \text{MLP}_{U, p}^{(t)}\left[
    \begin{array}{c}
        \text{MLP}_{B, p}\left( (1+\epsilon_{\mathcal{B}})\, \mathbf{h}_{\sigma}^{(t)} + \displaystyle\sum\limits_{\tau\in\mathcal{B}(\sigma)} \mathbf{h}_\tau^{(t)} \right) \\[15pt]
        \text{MLP}_{\uparrow, p}\left( (1+\epsilon_\uparrow)\, \mathbf{h}_{\sigma}^{(t)} + \displaystyle\sum\limits_{\substack{\tau\in\mathcal{N}_\uparrow(\sigma) \\ \delta \in C(\sigma, \tau)}} \text{MLP}_{M, p}\left[
            \begin{array}{c}
                \mathbf{h}_\tau^{(t)} \\[5pt]
                \mathbf{h}_\delta^{(t)}
            \end{array}
        \right] \right) \\[15pt]
        \text{MLP}_{\downarrow, p}\left( (1+\epsilon_\downarrow)\, \mathbf{h}_{\sigma}^{(t)} + \displaystyle\sum\limits_{\substack{\tau\in\mathcal{N}_\downarrow(\sigma) \\ \delta \in B(\sigma, \tau)}} \text{MLP}_{M', p}\left[
            \begin{array}{c}
                \mathbf{h}_\tau^{(t)} \\[5pt]
                \mathbf{h}_\delta^{(t)}
            \end{array}
        \right] \right)\\[15pt]
        \textit{additionally, when rewiring:} \\[5pt]
        \dbox{$\text{MLP}_{{R_\text{new}}, p}\left( (1+\epsilon_{R_\text{new}})\, \mathbf{h}_{\sigma}^{(t)} + \displaystyle\sum\limits_{\delta\in\mathcal{N}_{R_\text{new}}(\sigma)} \mathbf{h}_\delta^{(t)} \right)$}
    \end{array}
\right],
\end{equation*}

where $[\cdot]$ denotes column-wise concatenation, each $\epsilon_\bullet$ is a learnable scalar. $\text{MLP}_{M, p}$ and $\text{MLP}_{M', p}$ are single fully connected layers projecting back to the dimension of $\mathbf{h}_\sigma^{(t)}$. The other MLPs consists of two layer fully connected layers, each followed by a point-wise non-linearity (e.g., ReLU) and a BatchNorm layer. When entity unions or intersections aren't contained in the relational structure due to dimension constraints, we instead use $\mathbf{h}^{(t)}_\delta := \mathbf{0}$.

\section{Higher-Order Graphs are Relational Structures}
\label{appendix:higher-order-graphs}

Although we do not focus on higher-order graph neural networks ($k$-GNNs) \citep{morris2019weisfeiler} in this work, we briefly illustrate how they naturally fit into our relational framework. In $k$-GNNs, graphs are lifted into higher-order graphs following the procedure outlined in Algorithm \ref{alg:kgnn_lifting}. In our framework, these higher-order graphs can be viewed as relational structures, where each entity represents a set of nodes of size $n=k$. The relations between these entities are of two types: \emph{local} relations, which connect entities differing by one node and where the differing nodes are adjacent in the original graph, and \emph{global} relations, where the differing nodes are not adjacent in the original graph. From this perspective, $k$-GNN message passing on higher-order graphs is equivalent to R-GCN message passing on the corresponding relational structures. We leave the theoretical and empirical study of these relational structures for future work.

\begin{algorithm}[h]
\caption{Graph Lifting (Higher-Order Graph)}
\label{alg:kgnn_lifting}

\textbf{Input:} Unweighted undirected graph $G = (V, E)$, node features $(\mathbf{x}_v)_{v \in V}$, entity size $n$\\
\textbf{Output:} Relational structure $(\mathcal{S}, \{R_\texttt{local}, R_\texttt{global}\})$ and features $(\mathbf{x}_\sigma)_{\sigma \in \mathcal{S}}$

\begin{algorithmic}[1]
\State Initialize $\mathcal{S} = \emptyset$ \Comment{Entities}
\For{each subset $\sigma \subseteq V$ with $|\sigma| = n$}
    \State $\mathcal{S} \gets \mathcal{S} \cup \{\sigma\}$
    \State $\mathbf{x}_\sigma \gets \frac{1}{|\sigma|} \sum_{v \in \sigma} \mathbf{x}_v$ \Comment{Features}
\EndFor
\State Initialize $R_{\texttt{local}} = \emptyset$, $R_{\texttt{global}} = \emptyset$ \Comment{Relations}
\For{each pair $\sigma, \tau \in \mathcal{S}$}
    \If{$|\sigma \cap \tau| = n-1$}
        \State Let $\{s_{\sigma}\} = \sigma \setminus \tau$ and $\{t_{\tau}\} = \tau \setminus \sigma$
        \If{$\{s_{\sigma}, t_{\tau}\} \in E$}
            \State $R_{\texttt{local}} \gets R_{\texttt{local}} \cup \{(\sigma, \tau), (\tau, \sigma)\}$
        \Else
            \State $R_{\texttt{global}} \gets R_{\texttt{global}} \cup \{(\sigma, \tau), (\tau, \sigma)\}$
        \EndIf
    \EndIf
\EndFor
\State \textbf{return} $(\mathcal{S}, \{R_{\texttt{local}}, R_{\texttt{global}}\})$ and $(\mathbf{x}_\sigma)_{\sigma \in \mathcal{S}}$
\end{algorithmic}
\end{algorithm}

{
\section{Worked Example: Relations and Operators for a Path Graph}
\label{appendix:worked-example}

In this section, we illustrate, via a small example, the definitions of a simplicial complex, relational structure, and influence graph.

\paragraph{Simplicial Complex.} Consider the simplicial complex $\mathcal{K}$ corresponding to the path graph $i - j - k$. The complex $\mathcal{K}$ consists of
\begin{itemize}
\item $0$-simplices (vertices): $\{i\}$, $\{j\}$, $\{k\}$, and
\item $1$-simplices (edges): $\{i, j\}$, $\{j, k\}$.
\end{itemize}

\paragraph{Relational Structure.} The relations on the set of entities $\mathcal{S} = \mathcal{K}$ are as follows:
\begin{itemize}
\item Relation \( R_1 \) (identity):
\begin{eqnarray*}
R_1 &=&\{(\sigma) \mid \sigma \in \mathcal{K}\} \\
 &=& \left\{\, (\{i\}),\ (\{j\}),\ (\{k\}),\ (\{i, j\}),\ (\{j, k\}) \,\right\}.
\end{eqnarray*}

\item Relation $R_2$ (boundary):
\begin{eqnarray*}
R_2 &=& \{(\sigma, \tau) \mid \sigma \in \mathcal{K}, \tau \in \mathcal{B}(\sigma)\} \\
&=& \left\{\,
\begin{aligned}
&(\{i, j\}, \{i\}),\ (\{i, j\}, \{j\}),\\
&(\{j, k\}, \{j\}),\ (\{j, k\}, \{k\})
\end{aligned}
\,\right\}.
\end{eqnarray*}

\item Relation $R_3$ (co-boundary):
\begin{eqnarray*}
R_3 &=& \{(\sigma,\tau) \mid \sigma \in \mathcal{K}, \tau \in \mathcal{C}(\sigma)\} \\
&=& \left\{\,
\begin{aligned}
&(\{i\}, \{i, j\}),\ (\{j\}, \{i, j\}),\\
&(\{j\}, \{j, k\}),\ (\{k\}, \{j, k\})
\end{aligned}
\,\right\}.
\end{eqnarray*}

\item Relation $R_4$ (lower adjacency):
\begin{eqnarray*}
R_4 &=& \{(\sigma,\tau,\delta) \mid \sigma \in \mathcal{K}, \tau \in \mathcal{N}_\uparrow(\sigma), \delta = \sigma \cap \tau\} \\
&=& \left\{\,
\begin{aligned}
&(\{i, j\}, \{j, k\}, \{j\}),\\
&(\{j, k\}, \{i, j\}, \{j\})
\end{aligned}
\,\right\}.
\end{eqnarray*}

\item Relation $R_5$ (upper adjacency):

\begin{eqnarray*}
R_5 &=& \{(\sigma, \tau, \delta) \mid \sigma \in \mathcal{K}, \tau \in \mathcal{N}_\downarrow(\sigma), \delta = \sigma \cup \tau\} \\
&=& \left\{\,
\begin{aligned}
&(\{i\}, \{j\}, \{i, j\}),\ (\{j\}, \{i\}, \{i, j\}),\\
&(\{j\}, \{k\}, \{j, k\}),\ (\{k\}, \{j\}, \{j, k\})
\end{aligned}
\,\right\}.
\end{eqnarray*}
\end{itemize}
If we assume, for the sake of example, that all relations are involved in message passing, this gives rise to the relational structure
\begin{equation*}
\mathcal{R}(\mathcal{K}) = (\mathcal{S}, R_1, R_2, R_3, R_4, R_5).
\end{equation*}
Of course, one could restrict message passing to a smaller subset of the relations $\{R_1, R_2, R_3, R_4, R_5\}$.

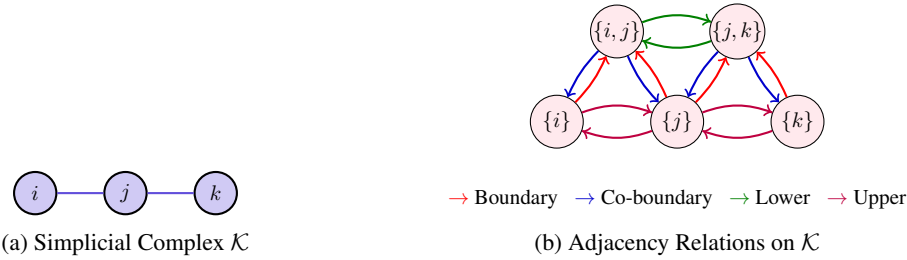
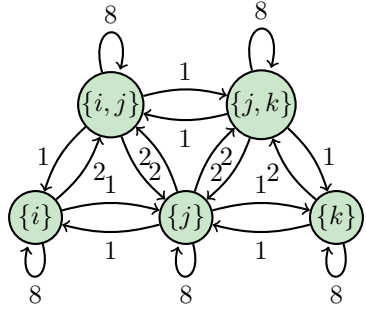
\begin{figure}[htbp]
    \centering

    \begin{minipage}[t]{0.35\textwidth}
        \centering
        \begin{subfigure}[t]{\linewidth}
            \centering
            \begin{tikzpicture}[scale=0.8, every node/.style={transform shape}]
                \tikzstyle{vertex}=[circle,fill=myblue!30,draw=black,thick,minimum size=20pt,inner sep=0pt]
                \tikzstyle{edge}=[thick, myblue]

                \node[vertex] (i) at (0,1) {$i$};
                \node[vertex] (j) at (1.5,1) {$j$};
                \node[vertex] (k) at (3,1) {$k$};
                
                \draw[edge] (i) -- (j);
                \draw[edge] (j) -- (k);
            \end{tikzpicture}
            \caption{Simplicial Complex \( \mathcal{K} \)}
        \end{subfigure}
    \end{minipage}
    \hfill
    \begin{minipage}[t]{0.6\textwidth}
        \centering
        \begin{subfigure}[t]{\linewidth}
            \centering
            \begin{tikzpicture}[scale=0.8, every node/.style={transform shape}]
                \tikzstyle{vertex}=[circle,fill=myred!30,draw=black,minimum size=25pt,inner sep=0pt]
                \tikzstyle{boundary}=[->,thick,blue!80!black,bend right=10]
                \tikzstyle{coboundary}=[->,thick,red,bend right=10]
                \tikzstyle{lower}=[thick,green!50!black,decoration={markings,mark=at position 0.99 with {\arrow{>}}},postaction={decorate}]
                \tikzstyle{upper}=[thick,purple,decoration={markings,mark=at position 0.99 with {\arrow{>}}},postaction={decorate}]

                \node[vertex] (i) at (0,0) {$\{i\}$};
                \node[vertex] (j) at (2,0) {$\{j\}$};
                \node[vertex] (k) at (4,0) {$\{k\}$};
                \node[vertex] (ij) at (1,1.5) {$\{i,j\}$};
                \node[vertex] (jk) at (3,1.5) {$\{j,k\}$};
                
                \draw[boundary] (ij) to (i);
                \draw[boundary] (ij) to (j);
                \draw[boundary] (jk) to (j);
                \draw[boundary] (jk) to (k);
                
                \draw[coboundary] (i) to (ij);
                \draw[coboundary] (j) to (ij);
                \draw[coboundary] (j) to (jk);
                \draw[coboundary] (k) to (jk);
                
                % Lower adjacency (R4)
                \draw[lower] (ij) to[out=20,in=160] (jk);
                \draw[lower] (jk) to[out=-160,in=-20] (ij);
                
                % Upper adjacency (R5)
                \draw[upper] (i) to[out=20,in=160] (j);
                \draw[upper] (j) to[out=20,in=160] (k);
                \draw[upper] (k) to[out=-160,in=-20] (j);
                \draw[upper] (j) to[out=-160,in=-20] (i);
                
                \node[anchor=north] at (2,-1) {\textcolor{red}{$\rightarrow$} Boundary \hspace{0.1cm} \textcolor{blue!80!black}{$\rightarrow$} Co-boundary   \hspace{0.1cm} \textcolor{green!50!black}{$\rightarrow$} Lower \hspace{0.1cm} \textcolor{purple}{$\rightarrow$} Upper};
            \end{tikzpicture}
            \caption{Adjacency Relations on \( \mathcal{K} \)}
            \label{fig:adjacency-relations}
        \end{subfigure}
    \end{minipage}
    
    \vspace{1cm} % Adjust vertical spacing as needed
    
    % Bottom row with one subfigure
    \begin{subfigure}[b]{0.7\textwidth}
        \centering
        % Subfigure (c): Influence Graph
        \begin{tikzpicture}[scale=1.0, every node/.style={transform shape}]
            % Styles
            \tikzstyle{vertex}=[circle,fill=mygreen!20,draw=black,thick,minimum size=20pt,inner sep=0pt]
            \tikzstyle{edge}=[->, thick]
            \tikzstyle{selfloop}=[->, thick, loop, min distance=10mm]
            
            % Nodes
            \node[vertex] (i) at (0,0) {\( \{i\} \)};
            \node[vertex] (j) at (2,0) {\( \{j\} \)};
            \node[vertex] (k) at (4,0) {\( \{k\} \)};
            \node[vertex] (ij) at (1,1.5) {\( \{i,j\} \)};
            \node[vertex] (jk) at (3,1.5) {\( \{j,k\} \)};
            
            % Self-loops
            \draw[selfloop] (i) edge[loop below] node[below] {\( 8 \)} (i);
            \draw[selfloop] (j) edge[loop below] node[below] {\( 8 \)} (j);
            \draw[selfloop] (k) edge[loop below] node[below] {\( 8 \)} (k);
            \draw[selfloop] (ij) edge[loop above] node[above] {\( 8 \)} (ij);
            \draw[selfloop] (jk) edge[loop above] node[above] {\( 8 \)} (jk);
            
            % Edges (weights from matrix B)
            \draw[edge] (i) to[bend left=15] node[above] {\( 1 \)} (j);
            \draw[edge] (j) to[bend left=15] node[below] {\( 1 \)} (i);
            \draw[edge] (i) to[bend right=15] node[right] {\( 2 \)} (ij);
            \draw[edge] (j) to[bend right=15] node[left] {\( 2 \)} (ij);
            \draw[edge] (j) to[bend left=15] node[right] {\( 2 \)} (jk);
            \draw[edge] (k) to[bend left=15] node[left] {\( 2 \)} (jk);
            \draw[edge] (j) to[bend left=15] node[above] {\( 1 \)} (k);
            \draw[edge] (k) to[bend left=15] node[below] {\( 1 \)} (j);
            \draw[edge] (ij) to[bend right=15] node[left] {\( 1 \)} (i);
            \draw[edge] (ij) to[bend right=15] node[right] {\( 2 \)} (j);
            \draw[edge] (jk) to[bend left=15] node[right] {\( 1 \)} (k);
            \draw[edge] (jk) to[bend left=15] node[left] {\( 2 \)} (j);
            \draw[edge] (ij) to[bend left=15] node[above] {\( 1 \)} (jk);
            \draw[edge] (jk) to[bend left=15] node[below] {\( 1 \)} (ij);
        \end{tikzpicture}
        \caption{Influence Graph \( \mathcal{G}(\mathcal{S}, \mathbf{B}) \)}
        \label{fig:influence-graph}
    \end{subfigure}
    
    \caption{(a) The simplicial complex \( \mathcal{K} \) consisting of nodes \( i \), \( j \), \( k \), and edges \( \{i,j\} \), \( \{j,k\} \). (b) The adjacency relations on \( \mathcal{K} \), showing boundary, co-boundary, lower, and upper relations. (c) The influence graph \( \mathcal{G}(\mathcal{S}, \mathbf{B}) \) representing the information flow between the entities.}
    \label{fig:combined-figures}
\end{figure}

\paragraph{Shift Operators.}
In what follows, we index the entries in matrices as follows:
\begin{center}
    \begin{tabular}{cl}
        \hline
        Index & Simplex \\
        \hline
        1 & $\{i\}$ \\
        2 & $\{j\}$ \\
        3 & $\{k\}$ \\
        4 & $\{i, j\}$ \\
        5 & $\{j, k\}$ \\
        \hline
    \end{tabular}
\end{center}
For instance, the $14$th entry in a matrix would correspond to the connection $(\{i\}, \{i,j\})$. We assume, for the sake of example, that all shift operators correspond to the adjacency tensors, i.e., the entries with indices corresponding to entities that are related by a relation are equal to $1$, and the remaining entries are all $0$. Of course, normalized adjacency tensors and others could be similarly considered. As such, we get the following:
\begin{itemize}
\item Shift operators $\mathbf{A}^{R_1}$ and $\tilde{\mathbf{A}}^{R_1}$ (identity relation):
\begin{equation*}
\mathbf{A}^{R_1} = \tilde{\mathbf{A}}^{R_1} =\begin{pmatrix}
1 & 0 & 0 & 0 & 0 \\ 
0 & 1 & 0 & 0 & 0 \\ 
0 & 0 & 1 & 0 & 0 \\ 
0 & 0 & 0 & 1 & 0 \\ 
0 & 0 & 0 & 0 & 1 \\ 
\end{pmatrix}.
\end{equation*}

\item Shift operators $\mathbf{A}^{R_2}$ and $\tilde{\mathbf{A}}^{R_2}$ (boundary relation):
\begin{equation*}
\mathbf{A}^{R_2} = \tilde{\mathbf{A}}^{R_2} = \begin{pmatrix}
0 & 0 & 0 & 0 & 0 \\
0 & 0 & 0 & 0 & 0 \\
0 & 0 & 0 & 0 & 0 \\
1 & 1 & 0 & 0 & 0 \\
0 & 1 & 1 & 0 & 0 \\
\end{pmatrix}.
\end{equation*}

\item Shift operators $\mathbf{A}^{R_3}$ and $\tilde{\mathbf{A}}^{R_3}$ (co-boundary relation):
\begin{equation*}
\mathbf{A}^{R_3} = \tilde{\mathbf{A}}^{R_3} = \begin{pmatrix}
0 & 0 & 0 & 1 & 0 \\
0 & 0 & 0 & 1 & 1 \\
0 & 0 & 0 & 0 & 1 \\
0 & 0 & 0 & 0 & 0 \\
0 & 0 & 0 & 0 & 0 \\
\end{pmatrix}.
\end{equation*}

\item Shift operator $\mathbf{A}^{R_4}$ and $\tilde{\mathbf{A}}^{R_4}$ (lower adjacency relation):

The non-zero entries of $\mathbf{A}^{R_4}$ are
\begin{equation*}
\mathbf{A}^{R_4}:
\begin{cases}
\mathbf{A}^{R_4}_{\{i,j\}, \{j, k\}, \{j\}} = 1 \\
\mathbf{A}^{R_4}_{\{j,k\}, \{i, j\}, \{j\}} = 1 \\
\end{cases}.
\end{equation*}

The aggregated influence for \( \tilde{\mathbf{A}}^{R_4} \):

\begin{equation*}
\tilde{\mathbf{A}}^{R_4} = \begin{pmatrix}
0 & 0 & 0 & 0 & 0 \\ 
0 & 0 & 0 & 0 & 0 \\ 
0 & 0 & 0 & 0 & 0 \\ 
0 & 1 & 0 & 0 & 1 \\ 
0 & 1 & 0 & 1 & 0 \\ 
\end{pmatrix}.
\end{equation*}

\item Shift Operator $\mathbf{A}^{R_5}$ and $\tilde{\mathbf{A}}^{R_5}$ (upper adjacency relation):

The non-zero entries of $\mathbf{A}^{R_5}$ are

\begin{equation*}
\mathbf{A}^{R_5}:
\begin{cases}
\mathbf{A}^{R_5}_{\{i\}, \{j\}, \{i, j\}} = 1 \\
\mathbf{A}^{R_5}_{\{j\}, \{i\}, \{i, j\}} = 1 \\
\mathbf{A}^{R_5}_{\{j\}, \{k\}, \{j, k\}} = 1 \\
\mathbf{A}^{R_5}_{\{k\}, \{j\}, \{j, k\}} = 1 \\
\end{cases}.
\end{equation*}

The aggregated influence for $\tilde{\mathbf{A}}^{R_5}$:
\begin{equation*}
\tilde{\mathbf{A}}^{R_5} = \begin{pmatrix}
0 & 1 & 0 & 1 & 0 \\ 
1 & 0 & 1 & 1 & 1 \\ 
0 & 1 & 0 & 0 & 1 \\ 
0 & 0 & 0 & 0 & 0 \\ 
0 & 0 & 0 & 0 & 0 \\ 
\end{pmatrix}
\end{equation*}.
\end{itemize}
Lastly, the aggregated influence matrix $\tilde{\mathbf{A}}$ arising from including all relations $R_1$ to $R_5$:

\begin{eqnarray*}
\tilde{\mathbf{A}} &=& \tilde{\mathbf{A}}^{R_1} + \tilde{\mathbf{A}}^{R_2} + \tilde{\mathbf{A}}^{R_3} + \tilde{\mathbf{A}}^{R_4} + \tilde{\mathbf{A}}^{R_5} \\
 &=& \begin{pmatrix}
1 & 1 & 0 & 2 & 0 \\ 
1 & 1 & 1 & 2 & 2 \\ 
0 & 1 & 1 & 0 & 2 \\ 
1 & 2 & 0 & 1 & 1 \\ 
0 & 2 & 1 & 1 & 1 \\ 
\end{pmatrix}.
\end{eqnarray*}
The maximum row sum of $\tilde{\mathbf{A}}$ is $\gamma = 7$, giving rise to the augmented adjacency matrix
\begin{eqnarray*}
\mathbf{B} &=& \gamma \mathbf{I} + \tilde{\mathbf{A}} \\
&=& \begin{pmatrix}
8 & 1 & 0 & 2 & 0 \\ 
1 & 8 & 1 & 2 & 2 \\ 
0 & 1 & 8 & 0 & 2 \\ 
1 & 2 & 0 & 8 & 1 \\ 
0 & 2 & 1 & 1 & 8 \\ 
\end{pmatrix}.
\end{eqnarray*}
The influence graph is finally constructed from $\mathbf{B}$ as follows:
\begin{itemize}
    \item Nodes: Each simplex in $\mathcal{K}$.
    \item Edges: For each $(\sigma, \tau)$, if $\mathbf{B}_{\sigma, \tau} > 0$, there is a directed edge from $\tau$ to $\sigma$ with weight $\mathbf{B}_{\sigma, \tau}$.
\end{itemize}
}

\section{Computational Complexity}

It is well known in the TDL community that topological methods can require much more computation due to directly modeling higher-order structures. This typically incurs combinatorial complexity in the algorithms. A common strategy is to truncate the dimensionality of the objects considered to maintain polynomial complexity. Another option is to use a sparse complex construction method. As noted earlier, more work is needed in understanding graph lifting and which method is preferred in a given setting. In this section, we analyze the running times required for TDL with clique complexes.

\subsection{Clique Complex Size}
The exact computational requirements of performing clique lifting depend on the graph in consideration. However, there are a few cases where we can provide precise estimates. 

\paragraph{Clique lifting of dimension 1}
In this case, the highest dimensional cell is the edge cell. Consider a graph $G = (V, E)$ with $|V| = n$ nodes, $|E| = m$ edges, and maximum node degree $d$. The corresponding clique complex $\mathcal{K}$ has: 
\begin{enumerate}
    \item $n+m$ cells
    \item $2 m$ boundary relations
    \item $2 m$ co-boundary relations
    \item $\sum_{v\in V} 2\binom{\deg(v)}{2} \leq 2md$ lower relations
    \item $2m$ upper relations
\end{enumerate}
Note that we need to count the upper and lower relations for each direction of the edge so we get a factor of 2. 

\paragraph{Clique lifting of dimension 2}
If $G$ has $|F| = k$ faces, then the clique $\mathcal{K}$ contains
\begin{enumerate}
    \item $n+m+k$ cells
    \item $2m+3k$ boundary relations
    \item $2m+3k$ co-boundary relations
    \item $\sum_{v\in V} 2\binom{\deg(v)}{2} + \sum_{\sigma\in F}\sum_{\tau\in F} \chi(\sigma\cap \tau)$ lower relations
    \item $ 2m + 6k$ upper relations
\end{enumerate}
In dense graphs, the number of lower relations can be very large. Depending on the dataset under consideration, one may opt for a model like CIN where lower relations are ignored. 

\paragraph{Complete Graph}
We can provide precise estimates when we have a complete graph. This can help us understand performance in the limiting case that the graph is very dense or contains large cliques. Consider a complete graph $K_n$ on $n$ nodes and the cell $\sigma = \{1, 2, ..., m\}$ for $m \geq 1$. This is the boundary of $n-m$ cells and is the co-boundary of $m$ cells. It is upper adjacent to $(n-m)m$ cells. It is lower adjacent to $m(n-m)$ cells. Counting all the cells of dimension $d = m-1$, we get
\begin{enumerate}
    \item $n_d = \binom{n}{d+1}$ cells
    \item $n_d (n-d-1)$ boundary relations (if $d > 0$)
    \item $n_{d} (d+1)$ co-boundary relations
    \item $n_d (d+1)(n-d-1)$ lower relations (if $d > 0$)
    \item $n_d (d+1)(n-d-1)$ upper relations
\end{enumerate}
, where the $d>0$ condition comes from the fact that there isn't a cell $\{\}$ in the complex so there aren't lower or co-boundary relations for zero dimensional cells. 
Computing the total number of cells gives $2^n - 1$. The total number of edges is 
$$-n - n(n-1) + \sum_{d = 0}^{n-1} \binom{n}{d+1}\left(n + (n-d-1)(d+1)\right) $$
$$= \frac{n}{2} (2^n n + 2^n -2) - n^2 = O(n^2 2^n)$$
This bound is prohibitive for large $n$. If we restrict to two dimensional cells, we get the total number of cells is
$$n + \binom{n}{2} + \binom{n}{3} = \frac{n^3 + 5n}{6} = O(n^3)$$
and the total number of relations is
$$\frac{n}{6} (-24 + 41 n - 24 n^2 + 7 n^3) = O(n^4)$$

\subsection{Clique Lift Complexity}
Our main use case is computing complexes for two-dimensional cells. 
In this section, we compute the complexity of clique lifting. Note that computing the 0 and 1-dimensional cells is fast since they are encoded directly in the graph as nodes and edges, respectively. The boundary and coboundary relations encode edge incidence to a node and are also easily obtained. The upper relations on the 0-cells are just the edges in the graph. The lower relations on the edges are all pairs of edges in the graph sharing a node. This can be efficiently computed by storing the edge incidences in a dictionary in $O(m)$ time and then computing all pairs in the neighborhood of each node. This requires $O\left(n + m + \sum_{v\in V} \binom{\deg(v)}{2}\right)$ computation. For connected graphs, this reduces to $O(md)$. 

Computing the two-dimensional cells can also be done efficiently. By storing the neighborhood dictionary, we just need to loop over pairs of edges $(u, v), (u, w)$ in the neighborhood of $u$ and check if $(v, w)\in G$. This can be done in $O\left(\sum_{v\in V} \binom{\deg(v)}{2} \right) = O(md)$ time. The boundary, co-boundary, and upper adjacencies can be easily computed in $O(k)$ time. The lower adjacency computation for 2-cells can require more time. For a given edge $(u, v)$, consider the 2-cells $\sigma$ such that $(u, v) \prec \sigma$. Each pair of these gets a lower adjacency and can be efficiently added. This results in 
$$O\left(\sum_{e\in E} \binom{|\mathcal{C}(e)|}{2}\right)$$
computation. We can bound this as
$$\sum_{e\in E} \binom{|\mathcal{C}(e)|}{2} \leq \frac{1}{2} \sum_{e\in E} |\mathcal{C}(e)|^2 \leq \frac{1}{2} \max_e |\mathcal{C}(e)| \sum_{e\in E} |\mathcal{C}(e)| = \frac{3k}{2} \max_e |\mathcal{C}(e)|$$
and we get the complexity bound
$$O\left(k \max_e |\mathcal{C}(e)|\right)$$
This is analogous to the result with lower adjacency of 1-cells that relied on the maximum node degree in $G$. The maximum co-boundary degree $\max_e |\mathcal{C}(e)|$ is the analogue of node degree for 2-cells. So, the clique complex construction up to dimension 2 has time complexity:
$$O(md + k \max_e |\mathcal{C}(e)|)$$

\subsection{Running Times}

\begin{table}[h]
    \centering
    \begin{tabular}{llll}
        \toprule
        Lift        & None  & Clique    & Ring      \\
        \midrule
        ENZYMES     & 2.12  &  6.89     & 11.81     \\
        IMDB-B      & 3.51  & 45.04     & 101.98    \\
        MUTAG       & 1.60  & 1.22      & 1.48      \\
        NCI1        & 4.70  & 20.26     & 26.62     \\
        PROTEINS    & 2.99  & 16.07     & 30.73     \\
        ZINC        & 63.47 & 94.70     & 106.37    \\
        TEXAS       & 6.02  & 2.64      & 0.65      \\
        WISCONSIN   & 6.09  & 2.78      & 0.74      \\
        CORNELL     & 5.05  & 2.76      & 0.62      \\
        CORA        & 5.77  & 8.29      & 45.71     \\
        CITESEER    & 7.05  & 7.65      & NA        \\
        \bottomrule
    \end{tabular}
    \caption{Time to perform graph lifting and download datasets from Torch Geometric.}
    \label{tab:time_lift}
\end{table}

\end{document}